\title{Lunar Crater Identification in Digital Images}
\author{
  John A. Christian\thanks{Department of Mechanical, Aerospace, and Nuclear Engineering, Rensselaer Polytechnic Institute, Troy, NY 12180, USA, \texttt{chrisj9@rpi.edu}}, Harm Derksen\thanks{Department of Mathematics, Northeastern University, Boston, MA  02115, USA, \texttt{ha.derksen@northeastern.edu}}, and Ryan Watkins\thanks{Planetary Science Institute, Tuscon, AZ 85719, USA, \texttt{rclegg-watkins@psi.edu}}
}
\begin{document}
\maketitle

\begin{abstract}
It is often necessary to identify a pattern of observed craters in a single image of the lunar surface and without any prior knowledge of the camera's location. This so-called ``lost-in-space'' crater identification problem is common in both crater-based terrain relative navigation (TRN) and in automatic registration of scientific imagery. Past work on crater identification has largely been based on heuristic schemes, with poor performance outside of a narrowly defined operating regime (e.g., nadir pointing images, small search  areas). This work provides the first mathematically rigorous treatment of the general crater identification problem. It is shown when it is (and when it is not) possible to recognize a pattern of elliptical crater rims in an image formed by perspective projection. For the cases when it is possible to recognize a pattern, descriptors are developed using invariant theory that provably capture all of the viewpoint invariant information. These descriptors may be pre-computed for known crater patterns and placed in a searchable index for fast recognition. New techniques are also developed for computing pose from crater rim observations and for evaluating crater rim correspondences. These techniques are demonstrated on both synthetic and real images.
\end{abstract}


\newcommand{\bx}{\textbf{\emph{x}}}
\newcommand{\bPhi}{\boldsymbol{\Phi}}
\newcommand{\bGamma}{\boldsymbol{\Gamma}}
\newcommand{\bXi}{\boldsymbol{\Xi}}
\newcommand{\bpsi}{\boldsymbol{\psi}}
\newcommand{\beps}{\boldsymbol{\epsilon}}
\newcommand{\bpi}{\boldsymbol{\pi}}
\newcommand{\ba}{\textbf{\emph{a}}}
\newcommand{\bb}{\textbf{\emph{b}}}
\newcommand{\bc}{\textbf{\emph{c}}}
\newcommand{\bg}{\textbf{\emph{g}}}
\newcommand{\bl}{\boldsymbol{\ell}}
\newcommand{\bX}{\textbf{\emph{X}}}
\newcommand{\be}{\textbf{\emph{e}}}
\newcommand{\bt}{\textbf{\emph{t}}}
\newcommand{\bn}{\textbf{\emph{n}}}
\newcommand{\boldf}{\textbf{\emph{f}}}
\newcommand{\bw}{\textbf{\emph{w}}}
\newcommand{\bh}{\textbf{\emph{h}}}
\newcommand{\br}{\textbf{\emph{r}}}
\newcommand{\bs}{\textbf{\emph{s}}}
\newcommand{\bbm}{\textbf{\emph{m}}}
\newcommand{\bu}{\textbf{\emph{u}}}
\newcommand{\bv}{\textbf{\emph{v}}}
\newcommand{\by}{\textbf{\emph{y}}}
\newcommand{\bz}{\textbf{\emph{z}}}
\newcommand{\bp}{\textbf{\emph{p}}}
\newcommand{\bq}{\textbf{\emph{q}}}
\newcommand{\bA}{\textbf{\emph{A}}}
\newcommand{\bC}{\textbf{\emph{C}}}
\newcommand{\bD}{\textbf{\emph{D}}}
\newcommand{\bF}{\textbf{\emph{F}}}
\newcommand{\bG}{\textbf{\emph{G}}}
\newcommand{\bI}{\textbf{\emph{I}}}
\newcommand{\bP}{\textbf{\emph{P}}}
\newcommand{\bJ}{\textbf{\emph{J}}}
\newcommand{\bK}{\textbf{\emph{K}}}
\newcommand{\bH}{\textbf{\emph{H}}}
\newcommand{\bR}{\textbf{\emph{R}}}
\newcommand{\bQ}{\textbf{\emph{Q}}}
\newcommand{\bbN}{\textbf{\emph{N}}}
\newcommand{\bk}{\textbf{\emph{k}}}
\newcommand{\bB}{\textbf{\emph{B}}}
\newcommand{\bT}{\textbf{\emph{T}}}
\newcommand{\bL}{\textbf{\emph{L}}}
\newcommand{\bE}{\textbf{\emph{E}}}
\newcommand{\bM}{\textbf{\emph{M}}}
\newcommand{\bS}{\textbf{\emph{S}}}
\newcommand{\bU}{\textbf{\emph{U}}}
\newcommand{\bV}{\textbf{\emph{V}}}
\newcommand{\bY}{\textbf{\emph{Y}}}
\newcommand{\etal}{\textit{et al.}}
\newcommand{\bxi}{\boldsymbol{\xi}}
\newcommand{\bnu}{\boldsymbol{\nu}}
\newcommand{\bbeta}{\boldsymbol{\beta}}
\newcommand{\btheta}{\boldsymbol{\theta}}
\newcommand{\bomega}{\boldsymbol{\omega}}

\newcommand{\R}{{\mathbb R}}
\newcommand{\C}{{\mathbb C}}
\newcommand{\Z}{{\mathbb Z}}
\newcommand{\PP}{{\mathbb P}}
\newcommand{\trdeg}{\operatorname{trdeg}}
\newcommand{\eproof}{\ \hfill$\square$\\}
\newcommand{\bproof}{\noindent{\it Proof. }}
\newcommand{\varV}{{\mathscr V}}
\newcommand{\varS}{{\mathscr S}}
\newcommand{\varY}{{\mathscr Y}}
\newcommand{\varG}{{\mathscr G}}
\newcommand{\varU}{{\mathscr U}}
\newcommand{\varZ}{{\mathscr Z}}
\newcommand{\bbc}{\bar{\bc}}
\newcommand{\bbr}{\bar{\br}}
\newcommand{\bo}{\textbf{\emph{o}}}
\newcommand{\bbo}{\bar{\bo}}
\newcommand{\bba}{\bar{\ba}}
\newcommand{\bbb}{\bar{\bb}} 
\newcommand{\bbp}{\bar{\bp}}
\newcommand{\bbq}{\bar{\bq}}
\newcommand{\bbu}{\bar{\bu}} 
\newcommand{\varH}{{\mathscr H}}

\newtheorem{theorem}{Theorem}
\newtheorem{remark}[theorem]{Remark}
\newtheorem{lemma}[theorem]{Lemma}

\section{Introduction}
Future lunar exploration missions are expected to rely on optical measurements (e.g., images from a camera) to navigate independently of Earth-based operators. Although the use of images for spacecraft navigation---called optical navigation (OPNAV)---is a well-established practice for conventionally navigated spacecraft \cite{Owen:2008,Owen:2011}, autonomous on-board OPNAV remains an emerging technology. Recent decades have  witnessed great technological advancement and expanding acceptance of autonomous vision-based navigation for the exploration of other celestial bodies (e.g., Moon, Mars, asteroids). These advancements in real-time, on-board OPNAV are exemplified by the technological progression from simple estimation of lander velocity with the Mars Exploration Rover's DIMES system in 2004 \cite{Cheng:2004} to autonomous feature tracking that will soon be demonstrated on the OSIRIS-REx mission to asteroid Bennu \cite{Olds:2015} and during landing of the Mars Perseverance (Mars 2020) rover \cite{Johnson:2020}.

This work focuses on OPNAV for lunar missions using digital images captured by a conventional camera (images form a pushbroom camera \cite{Gupta:1997} are a different problem and are not discussed here). Images from a conventional camera follow the geometry of perspective projection. How to best use such images of the Moon for navigation depends on many factors---with distance and lighting often being the two most important considerations. It is usually best to use horizon-based OPNAV \cite{Owen:2011,Christian:2017} when far away from the Moon and when a large portion of the lunar disk is contained within the image. As the vehicle gets closer to the Moon (e.g., low lunar orbit, lunar lander descent), it becomes more appropriate to observe specific landmarks on the lunar surface. These landmarks could be specific types of morphological features (e.g., craters) or a patch of unique-looking terrain.

The principal difficulty with landmark-based OPNAV is the need to match points in an image to points in an onboard map. This has led to the continued use of horizon-based OPNAV at close distances where it would otherwise be better to use surface features. It has also led to navigation with unknown landmarks \cite{Levine:1966,Toda:1966,Bellantoni:1967,Christian:2019,Christian:2020}. Unfortunately, horizon-based OPNAV is not always practical (especially at close ranges) and unknown landmarks do not generally provide full state observability. Consequently, the ability to match surface features to a map is a required capability for autonomous vision-based navigation for  many missions.

Many landmark-based OPNAV algorithms rely on the real-time rendering of an onboard digital elevation map (DEM) \cite{Olds:2015,Adams:2008,Gaskell:2008,Johnson:2020}. The standard approach is to render the expected appearance of landmark patches and then compare this to regions of the navigation image, usually by means of a 2D cross-correlation. This, however, requires the vehicle to carry onboard 3D models for each landmark patch (and, sometimes, complete DEMs for certain portions of the body) and also have the ability to render these patches in real-time. Further, the rendering step requires  \emph{a priori} state knowledge, thus precluding this technique from supporting  lost-in-space landmark-based OPNAV. More troubling, however, is that the  rendering-based template matching approach feeds navigation  state data into the measurement generation process, creating scenarios where corrupted navigation states render otherwise good images ineffectual---potentially leading to unnecessary filter reinitializations, trajectory aborts (e.g., during lunar descent), or other undesirable events. 

Landmark-based OPNAV using craters \cite{Cheng:2005,Hanak:2010,Park:2019,Maass:2020} is an alternative strategy that, when properly implemented, does not necessarily require \emph{a priori} state data. Furthermore, OPNAV with craters does not require onboard DEMs or an onboard rendering capability. Instead, such systems require an image processing algorithm to detect craters and a pattern matching algorithm to match these observations to a catalog of known craters. This manuscript focuses on the second part (crater pattern matching) portion of this problem.

There are two general classes of crater matching algorithms: (1) tracking and (2) lost-in-space. In the case of crater tracking, it is possible to use \emph{a priori} state information to predict where catalog craters should appear in an image. These predictions are used to associate observed craters with model craters through a variety of schemes of varying sophistication. We generally find that explicit crater matching is undesirable and more robust tracking performance is achievable using probabilistic techniques (as exemplified by anonymous feature tracking (AFT) \cite{McCabe:2020}). In the case of lost-in-space crater matching, no \emph{a priori} state information is available. Lost-in-space algorithms, therefore, must be able to recognize a crater pattern from its appearance in an image regardless of camera pose (i.e., camera position and attitude). Such a capability is necessary for filter (re)initialization. It is also useful in providing state-independent measurements (if required) or a state-independent verification of crater tracking correspondences.  This work focuses in solving the lost-in-space crater identification problem.

We briefly note that, while autonomous spacecraft navigation was the original motivation for this work, crater pattern recognition is equally useful for the registration of scientific images \cite{Troglio:2012}.

In this manuscript, we provide a comprehensive treatment of lunar crater pattern recognition in digital images formed by a conventional camera (i.e., under perspective projection). We begin by outlining the  philosophical framework for such a system (Section~\ref{Sec:CraterIDArch}) and then review important background material (Section~\ref{Sec:Background}). This is followed by the detailed mathematical development in Sections \ref{Sec:SingleCraterGeom}--\ref{Sec:IndexMatchingTop}. Numerical results and examples are shown in Section~\ref{Sec:NumResults}. This work contains a number of key results, which are summarized here:
\begin{enumerate*}
\item Almost all lunar impact craters are elliptical, and many are nearly circular (Section~\ref{Sec:PlanetaryScience}). Rather than being a heuristic design choice for easy crater identification, we discuss why craters are necessarily elliptical in shape from a perspective of planetary science and impact mechanics. Recent advancements in our knowledge of the lunar crater population suggests that a circular crater assumption (common amongst crater identification algorithms) is not well-supported by the data for small craters (diameter less than about 30 km).

\item  Invariant theory is the proper mathematical framework for recognizing a pattern of crater rims in an image (Section~\ref{Sec:ExistenceOfInvariantsTopLevel}). Elliptical craters on  the lunar surface project to elliptical features in an image (Section~\ref{Sec:SingleCraterGeom}).  For some  patterns of craters, there exist algebraic quantities that remain  unchanged (i.e., invariant) regardless of the camera pose and that may be computed from just the projected crater rims in a single image. Such quantities we call \emph{projective invariants}. Invariant theory provides the means by which we may determine the existence (or not) of these projective invariants.

\item There are no projective invariants for arbitrarily placed conics (e.g., elliptical crater rims) on the surface of an arbitrarily shaped body. We provide the first known proof of this fact (Section~\ref{Sec:InvariantsProjArbitraryConicsP3}). This result is significant, as it produces a theoretical obstacle for solving the lost-in-space crater identification problem about very irregular bodies, such as some asteroids and comets. Fortunately, the Moon is not an arbitrarily shaped body.

\item Projective invariants exist for arbitrarily placed conics (e.g., elliptical crater rims) lying on a nondegenerate quadric surface (e.g., sphere, ellipsoid), which is an excellent approximation for the shape of the Moon. A different set of invariants exists for conics lying on a common plane. The existence of invariants for conics on a nondegenerate quadric surface is a novel result (Section~\ref{Sec:InvariantsProjQuadSurfConicsP3}), while invariants for conics on a plane is a known result (Section~\ref{Sec:InvariantsConicsP2}). Since the Moon is nearly spherical on the regional/global level and nearly planar on the local level, projective invariants generally exist for craters lying on the surface of the Moon. In both cases (global and local), we show how to efficiently compute these invariants from only the projected conics (i.e., the contour of the crater rim) that appear in a digital image (Section~\ref{Sec:ComputingInvariantsTop}).

\item There are no algebraically independent invariants for conics (e.g., elliptical crater rims) on a nondegenerate quadric surface or on a plane beyond the ones developed in this work. All other invariants a researcher may conceive for one of these two cases may be a written as an algebraic function of the invariants discussed herein. 
Thus, our invariants describe all the independent information about a crater rim pattern that is pose invariant and useful in constructing a descriptor that may be indexed. We provide the first known proof of this fact for the case of conics on a nondegenerate quadric surface (Section~\ref{Sec:NonCoplanarTriadInvariants}). Given these findings, we suggest future investigation of invariants for patterns of lunar crater rims be focused on ease of computation or numerical stability rather than attempting to extract additional independent information from the crater rim contours (since there is no additional independent information to be had that remains unchanged with camera viewpoint).

\item The projective invariants for a triad of lunar craters may be used to form a feature descriptor that is insensitive to camera viewpoint (Section~\ref{Sec:PatternDescriptor}). These descriptors may be computed for known crater patterns and stored in a searchable catalog (i.e., an index). Matches to an observed crater pattern in any image are found by simply finding the nearest neighbor for its descriptor in the index. In cases where it is desirable for the matching to be permutation sensitive, the descriptor is nothing more than the projective invariants concatenated into a vector. Alternatively, in cases where we wish the matching to be insensitive to crater ordering, we apply invariant theory a second time to construct \emph{projective and permutation} ($p^2$) \emph{invariants}. This is a novel result for crater pattern descriptors.

\item Lunar crater identification is a multi-scale pattern recognition problem. Local, regional, and global crater patterns are built from individual craters of different (increasing) diameters and occur over different (increasing) extent on the lunar surface. For example, a 1 km crater may contribute to a local crater pattern but not a global crater pattern, while a 100 km crater may contribute to  a global crater pattern but not a local crater pattern. To address this challenge, we present the first known use of Hierarchical Equal Area isoLatitude Pixelization (HEALPix) to construct a hierarchy of crater catalogs (Section~\ref{Sec:BuildCrateIndex}). 

\item A substantial amount of navigation information is contained within the shapes of projected crater rims in an image. Many existing crater identification algorithms, however, compute camera pose using only the center coordinates of the craters---thus ignoring a great deal of actionable information. We discuss how to compute the camera position using the contours of projected image conics, rather than just the conic center coordinates. Our solution is non-iterative and finds the camera position in the least squares sense when observing $d\geq2$ craters (Section~\ref{Sec:ConicPose}).

\item Crater match hypotheses are verified by comparing the observed crater rims to what is expected were the hypothesized match true. Most existing crater identification algorithms perform this verification by comparing only the coordinates of crater centers, which generally requires $\geq5$ craters locations to agree. Instead, we develop a novel distance metric as a measure of how dissimilar two ellipse contours are from one another (Section~\ref{Sec:EllipseComp}) and use this to compare the observed and expected crater rims. Our metrics satisfy the three classical  axioms for a metric (minimality, symmetry, triangle inequality) along with a fourth requirement of similarity invariance. Using the entire crater rim permits pattern verification with just the three craters used to perform the index match and does not require additional craters for verification. Thus, as compared to past methods, the new distance metric permits crater identification over more sparsely cratered regions of the Moon.

\item There is no single crater identification algorithm that is always best. The correct solution depends on orbital regime (or descent trajectory), camera specifications, on-board computational and memory resources, operational cadence, and other mission-specific parameters. There are critical crater identification design decisions to be made for the crater catalog (raw source of data), index scale (local, regional, global, or a combination), index organization, and invariant descriptor type (projective invariants or $p^2$ invariants). Therefore, rather than present a single crater identification algorithm, we suggest a \emph{framework} solving this problem and provide the reader with tools for each step. Once understood, the elements in this manuscript may be used to identify craters for a wide diversity of mission types. A few illustrative examples are provided to enhance understanding of how this might work.

\end{enumerate*}

Finally, we note that this work relies on mathematics and techniques that may be obscure to the average astrodynamicist and spacecraft navigator.  Therefore, we adopt a rather explanatory approach to facilitate understanding, dispel common misconceptions, and encourage adoption of the techniques herein.

\section{A Framework for Lost-in-Space Terrain Relative Navigation (TRN)}
\label{Sec:CraterIDArch}

The principal difficulty with recognizing 3D objects from their projection in a digital image is that the appearance of these objects changes with camera viewpoint and illumination geometry. The recognition of landmarks on the lunar surface inherits this generic computer vision difficulty. In many cases (including lunar landmark identification), we may circumvent this problem entirely by choosing to represent objects with a descriptor that remains the same (is \emph{invariant}) regardless of camera viewpoint. Such an object recognition philosophy was influentially advocated for in \cite{Mundy:1992} and \cite{Zisserman:1995}, and we adopt this philosophy here.

There are a variety of object attributes that may remain unchanged with camera viewpoint and illumination geometry, including color, texture, and geometric structure. Finding these invariant attributes, however, requires a great deal of care. Of note, we know that no such invariant exists for an arbitrary patch of terrain with constant albedo and with an approximately Lambertian reflectance \cite{Chen:2000}. This suggests that terrain patches (or maplets)---as is often used in correlation-based TRN \cite{Olds:2015,Adams:2008,Gaskell:2008,Johnson:2020}---may not be well-suited for lost-in-space TRN, though this is an important topic of future work. We also observe that popular image feature descriptors---such as SIFT \cite{Lowe:2004}, SURF \cite{Bay:2008}, BRIEF \cite{Leutenegger:2011}, ORB \cite{Rublee:2011}, and others---do not describe the type of invariance required for lost-in-space TRN. They are, by construction, not the appropriate tool for the problem considered in this manuscript. There are two primary reasons for this. First, most of these descriptors are only formally invariant for a similarity transformation (translation, rotation, scaling), though many have been shown to be robust for a modest amount of affine shear. While there do exist feature descriptors that are formally affine invariant \cite{Morel:2009}, it was convincingly argued by Lowe \cite{Lowe:2004} that such invariance is often undesirable within the context of feature descriptors. Regardless, none of these descriptors are especially robust to extreme projective transformations---making them more appropriate for matching features between two similar images than for matching a single arbitrary image to a prebuilt database of features on a 3D object. The second (and more significant) problem with classical feature descriptors for lost-in-space TRN is that none of them are illumination invariant, as discussed at length in \cite{Christian:2020}. In brief, all of these feature descriptors work by finding and describing regions of unique 2D patterns in an image. For TRN above an airless body (e.g., Moon, asteroid), the 2D pattern observed in an image is formed by the way that light is reflected off the terrain and towards the camera. When the lighting geometry changes, the 2D pattern in the image changes---thus the feature descriptor for the same surface patch will change. Hence, the feature descriptors are not illumination invariant.

An alternative to landmark patches and feature descriptors is to look for explicit geometric features. Crater rims are one such geometric feature that can be easily recognized and consistently localized, even when observed from different camera viewpoints and under different lighting conditions. We find the geometry describing the pattern of multiple crater rims to be one of the most accessible attributes from which viewpoint invariant information may be constructed.

For any given object geometry, such as a pattern of crater rims, there may be a variety of viewpoint-invariant properties---but not all of these necessarily provide equal perceptual salience \cite{Jacobs:2003}. Thus, at a minimum, we seek non-constant (i.e., non-trivial) invariants that can effectively differentiate between many different crater rim patterns. Such non-trivial invariants do not always exist for craters lying on the surface of arbitrary 3D celestial bodies, though they do exist for a body such as the Moon. Even when invariants do exist, finding a way to compute them is not always straightforward. However, as we show in this manuscript, the additional effort to develop these invariants produces powerful pattern recognition capabilities.

The identification of perceptually salient invariants under the action of perspective projection (e.g., a digital image of the Moon collected with a conventional camera) allows us to quickly distinguish between two different objects (e.g., two different crater patterns) possessing two different invariant descriptors. Therefore, if we compute the invariants for a particular lunar crater pattern, these invariants may be concatenated into a descriptor for that pattern and stored in a catalog (or \emph{index}) for future comparison. We repeat this procedure for each known crater pattern pre-flight and produce a very large index of patterns and their corresponding descriptors. Then, when an image is acquired in-flight, the invariants may be computed for an observed crater pattern and concatenated into the same type of pattern descriptor that we stored within the index.  Therefore, the index may be queried for known patterns with descriptor values similar to that of the observed crater pattern. Efficient data structures allow such queries to be performed very fast, usually in $\mathcal{O}(\log n)$ time.

Matches from the index may be used to construct crater match hypotheses, which must be verified before acceptance. The primary means for verification is to use a hypothesized crater correspondence to compute the camera's location. This hypothesized camera location is used to reproject expected crater rims into the image, which are compared to the observed crater rims. If the reprojected pattern matches the observations, the match hypothesis is accepted. If not, the match hypothesis is rejected, and we attempt a different hypothesis.

This framework is summarized in Fig.~\ref{fig:FrameworkFlowchart}. The remainder of this manuscript is dedicated to providing the details for each step of this process.

\begin{figure}[b!]
\centering
\includegraphics[width=0.8\columnwidth,trim=0in 0in 0in 0in,clip]{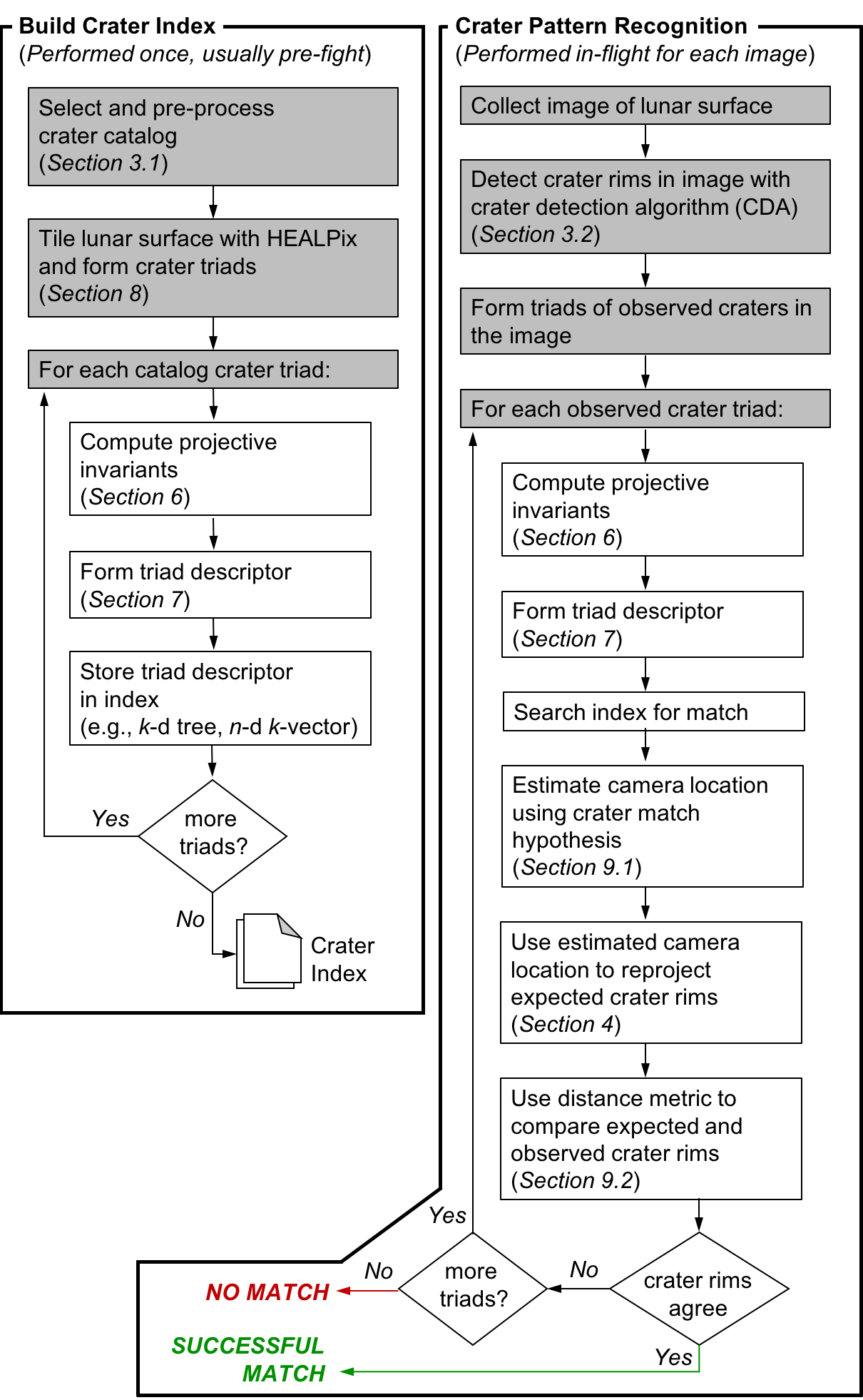}
	\caption{The crater identification framework has two major components: index construction and pattern recognition. The index construction step (left) is often time-intensive since we must loop through all possible triads and then store them in an efficient data structure. The crater patter recognition step (right) is fast and is performed for each image.}
	\label{fig:FrameworkFlowchart}
\end{figure}

\section{Background}
\label{Sec:Background}
\subsection{Observations on Lunar Crater Morphology and Size-Frequency Distribution}
\label{Sec:PlanetaryScience}
Lunar crater identification algorithms often begin with the assumption that the crater rims are either circular or elliptical in shape. It is also generally assumed that craters are well distributed about the Moon and that the crater catalog is static (new craters are not being added or removed). We find, however,  that many recent crater identification algorithms are built on assumptions not supported by a modern understanding of the lunar crater population---and, therefore, are not likely to perform well across the full diversity of orbital regimes. A thorough understanding of lunar crater morphology and size-frequency distributions must precede the development of a computational algorithm recognize patterns of these features.

\subsubsection{Crater Catalogs}

To recognize crater patterns in images of the lunar surface, we must first know where the craters are located and how they are shaped. Such data is contained in lunar crater catalogs. In most cases, however, these catalogs were built for planetary science purposes and not for image registration purposes---with the specific scientific question motivating the catalog construction often influencing the contents of the catalog. This must be considered before repurposing a catalog for spacecraft navigation, since not all lunar crater catalogs are appropriate for building an index for crater pattern identification. 

There are a variety of lunar crater catalogs available; e.g., \cite{Robbins:2018}, \cite{Head:2010}, \cite{Povilaitis:2018}, and others \cite{Salamuniccar:2012,Wang:2015}. These catalogs were created with differing scientific objectives and from different raw data sets---thus, there should be no expectation of a one-to-one correspondence between their entries. For example, the minimum crater diameter is different for each catalog (and some catalogs even focus on very specific size ranges). As another example, some catalogs include all crater-like objects (e.g., \cite{Robbins:2018}), while others only include craters the authors are confident originated from an impact event (e.g., \cite{Povilaitis:2018}). Moreover, some catalogs attempt to only count the craters produced by direct impacts, thus ignoring those craters thought to be produced by secondary impacts (i.e., ejecta from the primary impact). This highlights the importance of selecting the crater catalog that best matches a particular mission's crater identification needs.

Crater catalogs are a useful tool for studying the statistical properties of the entire lunar crater population. To obtain global and comprehensive coverage, we usually exchange the detailed understanding of individual craters for a general understanding of all craters. Clearly, a detailed study of any single crater would yield a more sophisticated understanding of that crater---and likely produce crater information (e.g., size, shape, depth) that differs slightly from its corresponding entry in a global crater catalog.

The most comprehensive crater catalog to date is that of \cite{Robbins:2018}, containing about 1.3 million lunar impact craters having a diameter larger than about 1--2 km. This is the database we choose to use in this work.

\subsubsection{The Elliptical Crater Assumption}
\label{Sec:EllipCraterAssumption}
The natural shape of an impact crater is an ellipse. Almost all lunar craters are nearly elliptical, and many craters are nearly circular \cite{Bottke:2000}. The same is true for crater populations on other large celestial bodies \cite{Herrick:2012}. Clearly, no crater rim is a perfect ellipse or perfect circle and this is only an approximation of the general shape.

The are a variety of factors that determine the size and shape of an impact crater. With regards to the shape of lunar crater rims, the most important factors are the impactor velocity, impactor size, and lunar surface properties. Lower impact angles, $\phi$ (see Fig.~\ref{fig:ImpactGeom}), tend to produce craters of higher ellipticity---where we define \emph{ellipticity}, $\epsilon=a/b\geq1$, as the ratio of crater rim semi-major to semi-minor axis. The sensitivity of the crater ellipticity to this impact angle is a function of the \emph{cratering efficiency} (the ratio of crater diameter to impactor diameter) \cite{Bottke:2000,Elbeshausen:2013}, which acts as a model surrogate for impactor size and lunar surface properties. An empirical fit to hydrocode simulations and laboratory experiments shows excellent agreement between the impact angle at which $\epsilon=1.1$ occurs ($\phi_{\epsilon=1.1}$) and a simple power law relation \cite{Elbeshausen:2013},
\begin{equation}
    \phi_{\epsilon=1.1} = (90 \text{ deg}) \left(  \frac{D_{90}}{L} \right)^{-0.82}
\end{equation}
\begin{figure}[t!]
\centering
\includegraphics[width=0.7\columnwidth,trim=0in 0in 0in 0in,clip]{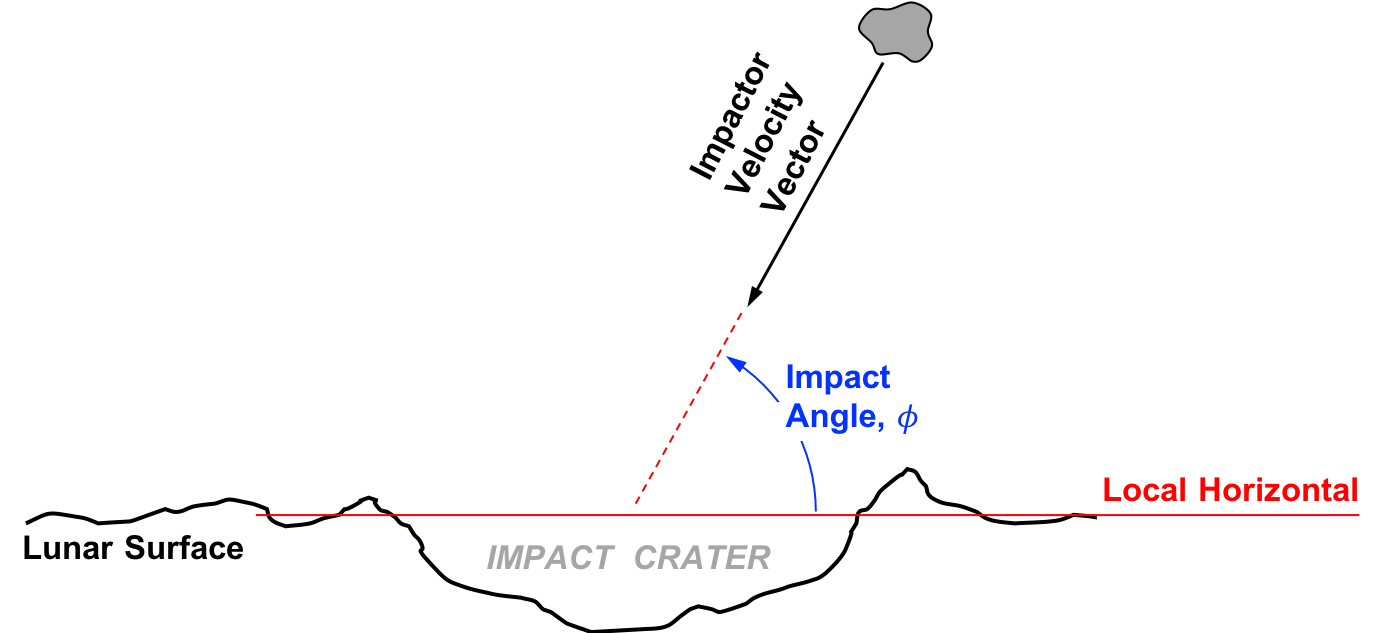}
	\caption{Geometry (side view) of impact crater creation, with the impact angle being defined as the angle above the local horizontal.}
	\label{fig:ImpactGeom}
\end{figure}
where $D_{90}$ is the crater diameter for a vertical impact and $L$ is the impactor diameter (i.e., $D_{90}/L$ is the cratering efficiency for a vertical impact). Thus, an impact with $\phi < \phi_{\epsilon=1.1}$ will produce a crater with $\epsilon > 1.1$.
The cutoff angle $\phi_{\epsilon=1.1}$ is generally between 3--5 deg (for sand, $D_{90}/L\sim60$) and 30 deg (for brittle rock, $D_{90}/L\sim4$) \cite{Elbeshausen:2013,Collins:2011,Michikami:2017}. 

Given the rather oblique impact angle required for an elliptical crater, it follows that a large percentage of lunar craters should be nearly circular. For an airless body such as the Moon, the cumulative distribution function (CDF) for the impact angle when the impactors are assumed to arrive from random directions is \cite{Shoemaker:1962}
\begin{equation}
    p(\phi \leq \phi_0) = \sin^2 \phi_0
\end{equation}
This model suggests that about 7.6\% of the crater population should have  an impact angle less than 5 deg and that about 25\% of the crater population should have an impact angle less than 30 deg. The database of \cite{Robbins:2018}, however, found a higher percentage of elliptical craters than might otherwise be expected and proposed a few hypotheses for this observation. Of particular note is that Robbins's database includes both primary and secondary impact craters, whereas many ellipticity studies (e.g., \cite{Bottke:2000}) only consider primary craters.

Regardless of the underlying physical cause, the newly available global distribution of crater ellipticity from \cite{Robbins:2018} makes clear that a circular crater assumption is not valid at the local level (i.e., when viewing craters of diameter smaller than about 30 km). This finding is significant, as many recent crater identification algorithms assume a circular crater model---a choice that is not well supported by the data. To see this, consider the ellipticity distribution as a function of crater size as shown in Fig.~\ref{fig:CraterEllipHist}. We immediately see that over half of the entire crater population has an ellipticity greater than 1.1 (the threshold planetary scientists usually use to define an ``elliptical'' crater \cite{Bottke:2000,Elbeshausen:2013}). Many of these craters, however, are not morphologically well preserved---and, therefore, are not good candidates for crater-based navigation since they will not produce good ellipse fits. If we restrict ourselves to well-defined craters whose catalog shape is supported by over 90\% of its circumference, we see a substantially larger percentage of the craters are nearly circular, especially when considering craters larger than 30 km.

\begin{figure}[t!]
\centering
\includegraphics[width=1\columnwidth,trim=0in 0in 0in 0in,clip]{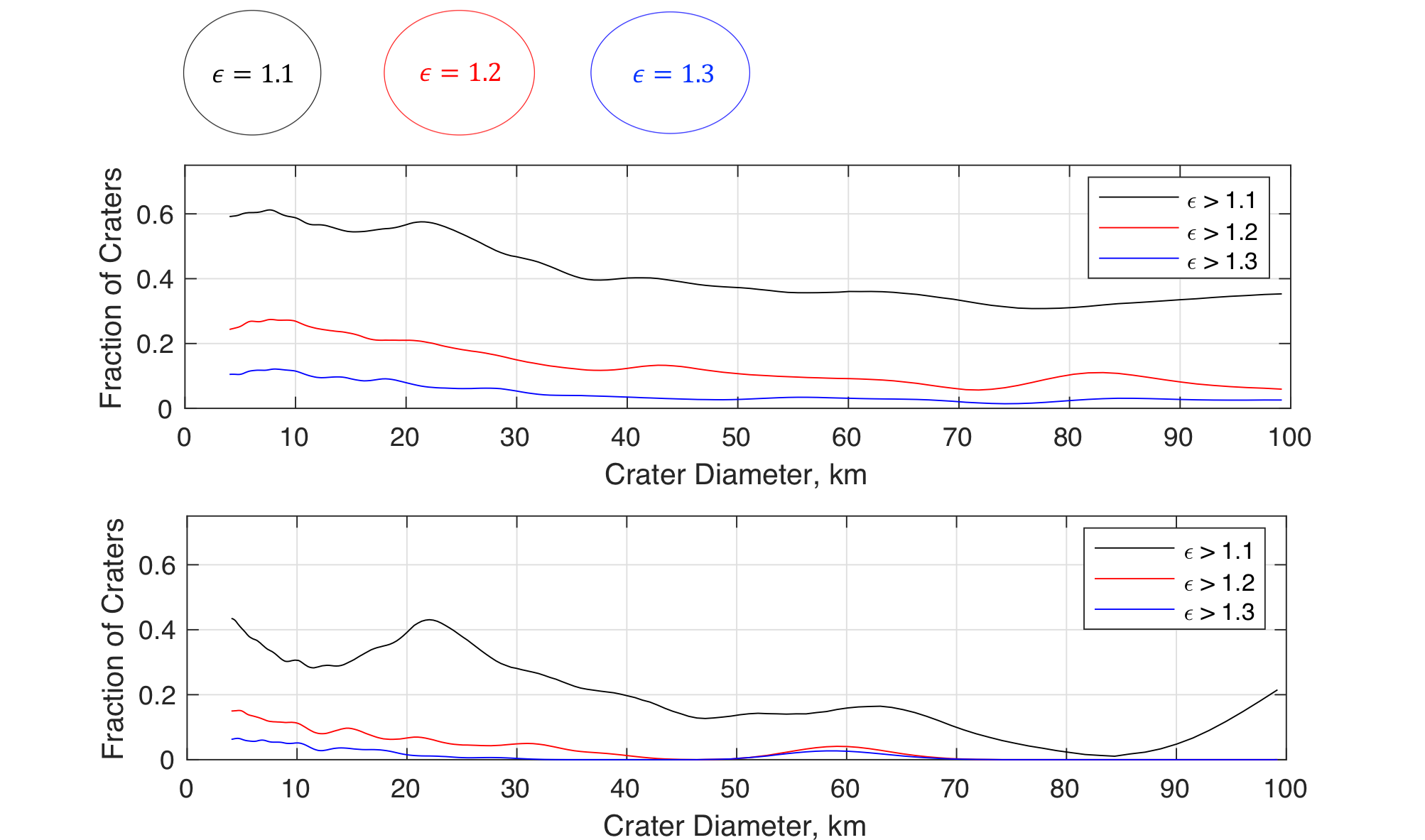}
	\caption{Lunar craters with well-defined rims tend to be less elliptical than the general population of craters. Plots show the fraction of craters above a specified ellipticity for every catalog crater (top) and for only those with well-defined crater rims (bottom). Visualization of crater ellipticity shown above plots for intuition and context. Results are based on post-processing of catalog data from \cite{Robbins:2018}.}
	\label{fig:CraterEllipHist}
\end{figure}

Topography or layered strength differences can create other irregular crater shapes \cite{Melosh:1989}.
On the Moon, simple craters ($\lessapprox$10--20 km in diameter) often have bowl-shaped morphologies, with small flat floors and few internal topographic features. Some may have deposits on the floor that originate from mass wasting or ponding of impact melt. The rims of simple craters are generally uplifted (unless significant degradation has occurred, as in the case of old simple craters), and the crater is surrounded by an ejecta blanket. More energetic impacts create larger, more complex craters. Complex craters ($\gtrapprox$10--20 km) have more varied morphologies, including central peaks, terraced walls, and flattened floors. Central peaks are formed by rebounding of material from deeply buried rocks, making these features especially interesting for studying materials excavated from lower in the crust. Larger complex craters can often have more than one central peak structure. Surrounding rocks in the wall of complex craters can also collapse, forming terrace-like structures. As impact features increase in size, they also increase in complexity. Craters larger than 300 km are usually classified as basins.

\begin{figure}[t!]
\centering
\includegraphics[width=1\columnwidth,trim=0in 0in 0in 0in,clip]{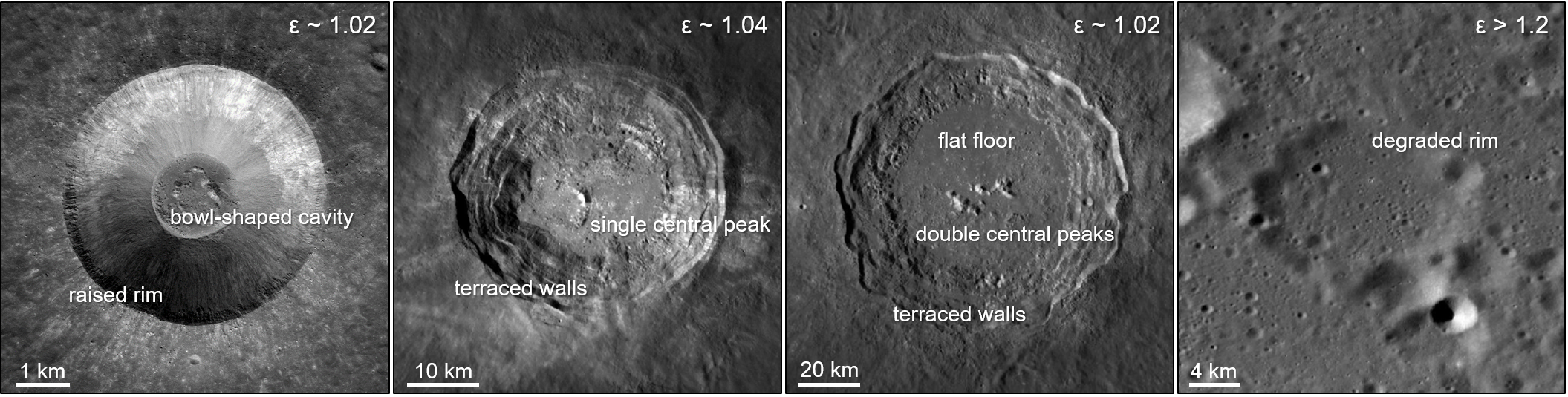}
	\caption{Craters of different sizes and ages exhibit a variety of morphologies. (A) Lichtenburg B (33.3 N, 61.5 W), a 5 km simple bowl-shaped crater with a raised rim. (B) Aristarchus (23.7 N, 47.4 W), a 40 km complex crater with terraced walls and a single central peak. (C) Copernicus (9.62 N, 20.08 W), a 93 km complex crater with double central peaks, terraced walls, and a flat floor. (D) Riccius W (38.9 S, 25. 2 E), a 19 km degraded crater with an infilled cavity and much of the rim no longer clearly visible.}
	\label{fig:CraterMorph}
\end{figure}

\subsubsection{Lunar Crater Size-Frequency Distribution}

Analyzing crater populations plays a major role in establishing planetary chronologies and in dating geologic units.  There is general consensus that the lunar cratering projectile flux was relatively constant within the last 3 Gyr \cite{Neukum:2001,Speyerer:2016}, with the period before 3 Gyr experiencing variable impact rates as a result of the hypothesized late heavy bombardment. 

Cumulative crater size-frequency distributions (SFDs), which describe the number of craters greater than a given diameter per measured area, are the usual data product used to estimate the age of planetary surfaces. Crater SFDs are conventionally determined by binning measured craters based on their diameters \cite{CraterWg:1979,Hiesinger:2000}, though modern practitioners are transitioning to more formal statistical approaches (e.g., using empirical density functions) \cite{Robbins:2018b}. These SFDs are often used to produce \emph{production functions} that describe  the rate at which new craters occur.

The most commonly used production function for estimating lunar surface ages is the Neukum Production Function (NPF) \cite{Neukum:1983,Neukum:2001}. The NPF models the formation rate of craters between 10 m and 300 km, and is generally used to estimate ages for geologic units on the Moon \cite{Neukum:2001}. The NPF is shown in Fig.~\ref{fig:NPF} and is given by
\begin{equation}
    \log_{10}( N ) = \sum_{k=0}^m a_k \left[ \log_{10}( D ) \right]^k
\end{equation}
where $D$ is the crater diameter, $N$ is the number of craters of size $D$ or smaller per unit area per unit time, and the coefficients $a_k$ (for $m=11$) are given in \cite{Neukum:2001}. The NPF suggest that that we should expect the production of new craters on the lunar surface to occur at a rate of $1.7 \times 10^{-5}$ new craters larger than 4 km per $\mathrm{km^2/Gyr}$. For the Moon, this results in approximately 635 new craters per Gy (or about $1.7\times 10^{-9}$ craters/day). Assuming impacts follow a Poisson distribution, we may straightforwardly compute the probability of an impact creating a crater larger than 4 km during a specified period of time. If a crater catalog must be valid for a period of days to a few years, the probability of a new crater appearing is essentially zero. Thus, it is quite reasonable to assume a static crater catalog.

\begin{figure}[t!]
\centering
\includegraphics[width=1\columnwidth,trim=0in 0in 0in 0in,clip]{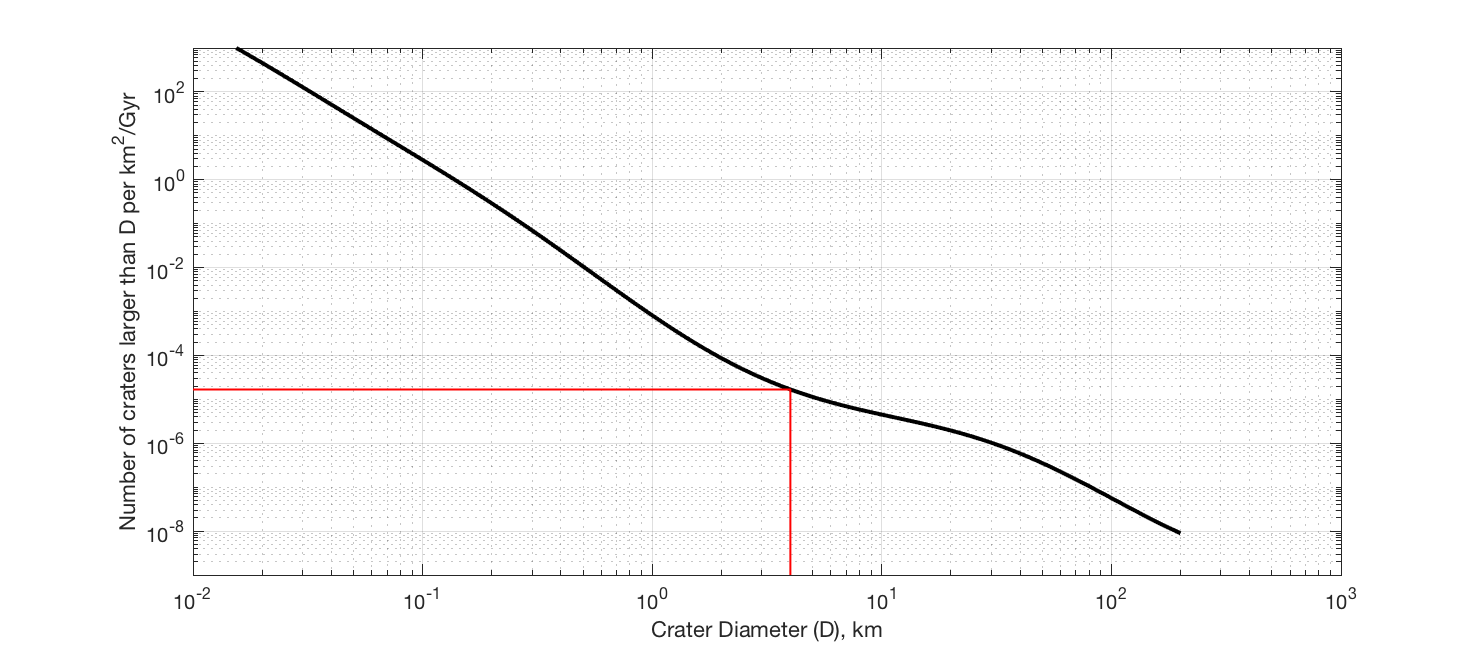}
	\caption{The Neukum Production Function  (NPF) describes the cumulative number of new lunar craters per km per Gyr below a specified diameter. The red line shows the NPF evaluated at a diameter of  4 km.}
	\label{fig:NPF}
\end{figure}

We note that recent work analyzing new impact craters as detected by the Lunar Reconnaissance Orbiter (LRO) Camera \cite{Speyerer:2016} found that the current cratering rate may potentially be higher than what the NPF predicts. Regardless, the expected discrepancies are small within the present context and the static crater catalog assumption remains valid.

\subsection{Crater Detection}
Successful crater detection is a prerequisite for crater identification. While not the focus of the present work, a brief review of crater detection algorithms (CDAs) is appropriate.

There have been a number of recent surveys of CDAs for both planetary science \cite{Salamuniccar:2008,DeLatte:2019} and spacecraft navigation \cite{Woicke:2018} applications. We observe here that the crater detection requirements for planetary science and navigation are often quite different---meaning algorithms developed for one application are not necessarily transferable to another application. There are two primary differences. First, from a scientific standpoint, the CDA objective is often to find all of the craters. For navigation, however, the objective is not to find every crater, but only enough to navigate. Any crater found that is not in the database becomes clutter that must be rejected. Thus, an algorithm that returns every crater is not necessarily better for navigation, and  we are more interested in algorithms that preferentially return only well-modeled craters that exist in our on-board catalog. The second primary difference between scientific and navigation CDAs is suitability for real-time use. CDAs for autonomous navigation must be fast enough to run in real-time on a space-qualified computing platform and robust enough to not require human supervision. No such requirements are generally placed on CDAs for scientific analyses. 


Of special note is the tremendous progress made in CDAs with machine learning over the last five years. Some notable algorithms include: DeepMoon\footnote{\url{https://github.com/silburt/DeepMoon}} \cite{Silburt:2019}, Python crater detection algorithm (PyCDA)\footnote{\url{https://github.com/AlliedToasters/PyCDA}} \cite{Klear:2018}, LunaNet \cite{Downes:2020}, CraterIDNet \cite{Wang:2018}, and others \cite{Emami:2015,Benedix:2018,DeLatte:2019}.

With well over 100 different algorithms contained in the CDA surveys \cite{Salamuniccar:2008,DeLatte:2019,Woicke:2018}, we find no need to discuss the topic further here. The interested reader is directed to these surveys and the references therein.

\subsection{Crater Identification}
\label{Sec:BackgroundCraterID}
Lost-in-space crater identification from a single image is a pattern recognition problem. When challenged with an image containing a set of observed craters, the task is to find the corresponding craters from within a large catalog of known craters. 

This problem is not new, and numerous attempts have been made in the past to recognize crater patterns. However, the lack of an existing invariant for non-coplanar ellipses has forced past work to make numerous simplifying assumptions or employ \emph{ad hoc} recognition schemes. These restrictive assumptions are relaxed by the developments introduced in this work. We now review some of the most notable prior work in crater identification and discuss their drawbacks using insights from invariant theory. As the reader will see, miscounting of independent invariants is a pervasive  problem in the crater identification literature---leading to descriptors that are incomplete,  redundant (i.e., not all elements are independent), or both. This recurring problem is one of the principal motivations of the present work.

Hanak \cite{Hanak:2009,Hanak:2010} uses an adaptation of star identification algorithms \cite{Mortari:2004,Samaan:2006} to recognize crater triads---an interpretation of the crater identification problem that has become influential within the spacecraft navigation community. Hanak chooses to index the crater pattern using the triangle's interior angles, which is not an invariant under general perspective projection. Therefore, the approach of \cite{Hanak:2009,Hanak:2010} operationally restricts usability to a nadir pointing camera with a modest field-of-view, which restricts the crater pattern to a similarity transformation where inter-crater angles remain invariant. If the camera is not nadir pointing, or if the spacecraft is far away from the Moon and craters are not nearly coplanar, the descriptor is no longer pose invariant and matching performance may be poor. Once potential matches are made, the authors of \cite{Hanak:2010} check these hypotheses against four additional metrics (three ratios of crater diameters to inter-crater distances, and a $\pm 1$ flag indicating if the triangle is clockwise or counterclockwise going form shortest to longest leg), which are also not projective invariants. We observe, therefore, that the authors of \cite{Hanak:2009,Hanak:2010} develop a six element descriptor, with none of the elements being a projective invariant. We contrast this to the seven known invariants for three coplanar conics. It is possible, therefore, to build a seven-element descriptor (see Section~\ref{Sec:TriadOfCoplanarConics}) that holds more descriptive power than \cite{Hanak:2010} and that is also a projective invariant.

Yu, et al., assume coplanar craters under affine transformation \cite{Yu:2014}, and develop an invariant descriptor based on the ratio of crater areas. The ratio of areas of two closed coplanar curves as invariant for affine transformations is discussed in \cite{Forsyth:1991}. Identification using only area ratios, however, neglects the spatial relationships between the craters and significantly reduces the descriptiveness of these invariants. Like the work of Hanak \cite{Hanak:2010}, the methods  from \cite{Yu:2014} are also limited to nearly nadir pointing images of nearly coplanar craters. 

Cheng, et al., develop a crater identification scheme by matching pairs of coplanar craters \cite{Cheng:2003,Cheng:2005} using invariants from \cite{Forsyth:1991} and \cite{Quan:1992}. A pair of coplanar conics possess two unique invariants, which may be computed preflight and indexed into a catalog for in-flight matching. The descriptive power of only two conics, however, is low---so multiple pairs must be compared, which is straightforward to accomplish. Additionally, the constraint of using coplanar conics limits the flight regimes where this technique works well. Despite these minor drawbacks, we consider the work of \cite{Cheng:2003,Cheng:2005} to be the most theoretically informed lost-in-space crater identification approach developed to date.

Park, et al., consider a pattern of three coplanar craters \cite{Park:2019} by looking at invariants of six coplanar points, where the six points are formed by the intersection of the triangle connecting the crater centers with the crater conics. These six coplanar points are split into six possible groupings of five points, from which the two unique invariants for each set of points may be computed using results from \cite{Forsyth:1991}.  Then, for each set of points, these two invariants are made insensitive to point ordering using the projective and permutation invariants (also called $p^2$ invariants) from \cite{Lenz:1994}. The pair of invariants for five coplanar points from \cite{Forsyth:1991} produce five $p^2$ invariants \cite{Lenz:1994}. The authors of \cite{Park:2019} concatenate the five $p^2$ invariants for each of the six sets of points to create a 30-element descriptor $(5\times6=30)$ for the crater triad. However, as discussed in \cite{Lenz:1994}, only two of the five $p^2$ invariants for a set of five coplanar points are independent (which should be expected, since making the two invariants from \cite{Forsyth:1991} insensitive to point ordering can not add new invariants). Therefore, the six sets of five points are really described by vector of $6 \times  2 = 12$ invariants, such that the 30-element descriptor from \cite{Park:2019} has 18 redundant elements that do not enhance discriminative power. Moreover, the six sets of five points are not constructed from unique points (they are the six possible five-point combinations of the same six points), thus the remaining 12 invariants are not all independent. We observe that six coplanar image points possess only four independent projective invariants. We also observe that $d\geq2$ coplanar conics possess only $5d-8$ independent projective invariants (seven invariants for a triad of craters), as discussed in Section~\ref{Sec:InvariantsConicsP2}. Consequently, the 30-element descriptor of \cite{Park:2019} has only four independent elements, 26 redundant (dependent) elements, and is missing three independent invariants.  We note here that the seven invariants for three coplanar craters may be computed directly from the observed image conics as discussed in Section~\ref{Sec:CoplanarInvariants}, without the need to compute sets of points. Likewise, invariants for clusters of four or more coplanar craters may also be computed directly \cite{Heisterkamp:1997}. The practical mechanics of computing such invariants are discussed in Section~\ref{Sec:TriadOfCoplanarConics}. If one wishes to build a permutation invariant descriptor (e.g., with $p^2$ invariants), we discuss how to do this for triads of craters in Section~\ref{Sec:PatternDescriptor}.

Maass, et al., propose an assortment of crater identification techniques \cite{Maass:2020}, though we only discuss their lost-in-space technique here.  They assume circular craters, thereby allowing explicit estimation of the crater normal in the camera frame (there are two possibilities \cite{Shiu:1989,Christian:2017AAS}). Given a set of three craters and their normals, the authors of \cite{Maass:2020} consider the $2^3=8$ possible configurations and choose the one where the normals are most similar (which presupposes a nearly planar configuration). More craters can be added to the pattern by iteratively solving a binary global optimization problem, with the cost function being the curvature energy of a cubic spline through a Delaunay triangulation of the crater centers. This resolves the 3D crater configuration (or reconstruction) to an unknown scale. Thus, Maass, et al., define a pattern descriptor using a pair of independent interior angles of a 3D crater triad, which we observe is a projective invariant since it is a reconstruction of the 3D geometry (and not the triangle in the image). Maass, et al., augment this check with the two independent ratios of crater radii. Pattern matches are verified by comparing the reprojected centers of additional craters. This usually requires two additional craters, thus requiring an image to possess five craters to yield a successful match. The method of \cite{Maass:2020} just described has three major drawbacks. First, we know from Section~\ref{Sec:EllipCraterAssumption} that the circular crater assumption is often not valid, with 25\%--50\% of small craters having an ellipticity $\epsilon > 1.1$. This effects the ability to accurately estimate the surface normal direction (e.g., a nadir-pointing image of an elliptical crater with $\epsilon = 1.1$ would have a normal direction error over 20 deg if it was incorrectly assumed circular). Second, the two parameter pattern descriptor (or four parameters if you count the radii ratios, though these do not appear to be explicitly used in the search) are not maximally descriptive---we know  there to be seven invariant parameters for a triad of nearly coplanar craters (see Sections~\ref{Sec:InvariantsConicsP2} and \ref{Sec:TriadOfCoplanarConics}). Thus, while the descriptor is indeed pose invariant (if the craters are circular), there are at least three independent pieces of information that remain for describing the crater pattern. Third, by verifying matches using only crater centers, more craters are required. Verifying using the crater rim reprojection (see Section~\ref{Sec:EllipseComp}) reduces the number of craters required.

There are a variety of other crater matching schemes \cite{Lu:2016,Kariya:2017,Cui:2018}, all of which make assumptions similar to the above methods.

This work is most similar to that of \cite{Cheng:2003,Cheng:2005}, though we relax the requirement that craters be coplanar---thus enabling a substantial expansion of the orbital regimes where crater matching may be performed. We also allow for the simultaneous consideration of more craters by using triads instead of pairs.

\section{Geometry of the Projection of a Single Crater}
\label{Sec:SingleCraterGeom}

\subsection{Mathematical Representation of a Crater}
Since impact craters are known to be elliptical in shape, many lunar crater databases store the ellipse parameters for the best-fit ellipse to the rim  of each cataloged crater. The most common database parameterization is the (1) latitude/longitude of the crater center $\{\varphi,\lambda\}$, (2) lengths of crater ellipse semimajor and semiminor axes $\{a,b\}$, and the orientation of the crater ellipse $\psi$ (often measured counterclockwise from East). This is the parameterization used in the Robbins database \cite{Robbins:2018}.

In this work, we assume that a crater is a planar feature defined by the 2D elliptical curve that best describes crater's rim. Circular craters are a special case ($a=b$) that is naturally handled without special treatment.

Let $\bxi_M$ be a three-dimensional (3D) point in the selenographic (i.e., Moon-centered, Moon-fixed) frame,
\begin{equation}
\bxi_{M} = 
\left[ \begin{array}{c c c}
          x_M \\
          y_M \\
          z_M
   \end{array} \right]
\end{equation}
which may be represented in homogeneous coordinates as
\begin{equation}
\bar{\bxi}_{M} = 
\left[ \begin{array}{c c c}
          x_M \\
          y_M \\
          z_M \\
          1
   \end{array} \right]
\end{equation}
A crater is a planar feature, so let the plane $\mathcal{P}_i$ containing crater $i$ be described by the $4 \times 1$ vector $\bpi_i$, such that
\begin{equation}
\label{eq:PointPlane3D}
    \mathcal{P}_i = \{ \bar{\bxi} \in \mathbb{P}^3 \; | \; \bpi^T_i \bar{\bxi} = 0 \}
\end{equation}

\subsubsection{Two-dimensional (2D) Crater Description}
Within the plane $\mathcal{P}_i$, define a 2D coordinate system with origin at the ellipse center and aligned with the ellipse principal axes (see Fig.~\ref{fig:EllipseDefinition}). In this case, we arrive at the canonical form of an ellipse, where the point $[x',y']$ lies on the ellipse if
\begin{align}
\label{eq:2DEllipseCanonical}
\frac{x'^2}{a^2} + \frac{y'^2}{b^2}  = 1
\end{align}
where $a$ is the ellipse semi-major axis and $b$ is the ellipse semi-minor axis. Now, define another 2D coordinate system within $\mathcal{P}_i$ with a different origin and different coordinate axis orientation (see Fig.~\ref{fig:EllipseDefinition}). Let the center of the ellipse be located at $[x_c,y_c]$ and the ellipse principal axes be rotated by an angle $\psi$. A point $[x,y]$ in this coordinate system is related to a point $[x',y']$ by a Euclidian transformation,
\begin{equation}
\left[ \begin{array}{c c c}
          x' \\
          y'
   \end{array} \right] = 
   \left[ \begin{array}{c c c}
          \cos \psi & \sin \psi \\
          -\sin \psi & \cos \psi
   \end{array} \right]
  \left[ \begin{array}{c c c}
          x-x_c \\
          y-y_c
   \end{array} \right] =
   \left[ \begin{array}{c c c}
          (x-x_c) \cos \psi + (y-y_c)\sin \psi \\
          -(x-x_c) \sin \psi + (y-y_c)  \cos \psi
   \end{array} \right]
\end{equation}
\begin{figure}[t!]
\centering
\includegraphics[width=0.65\columnwidth,trim=0in 0in 0in 0in,clip]{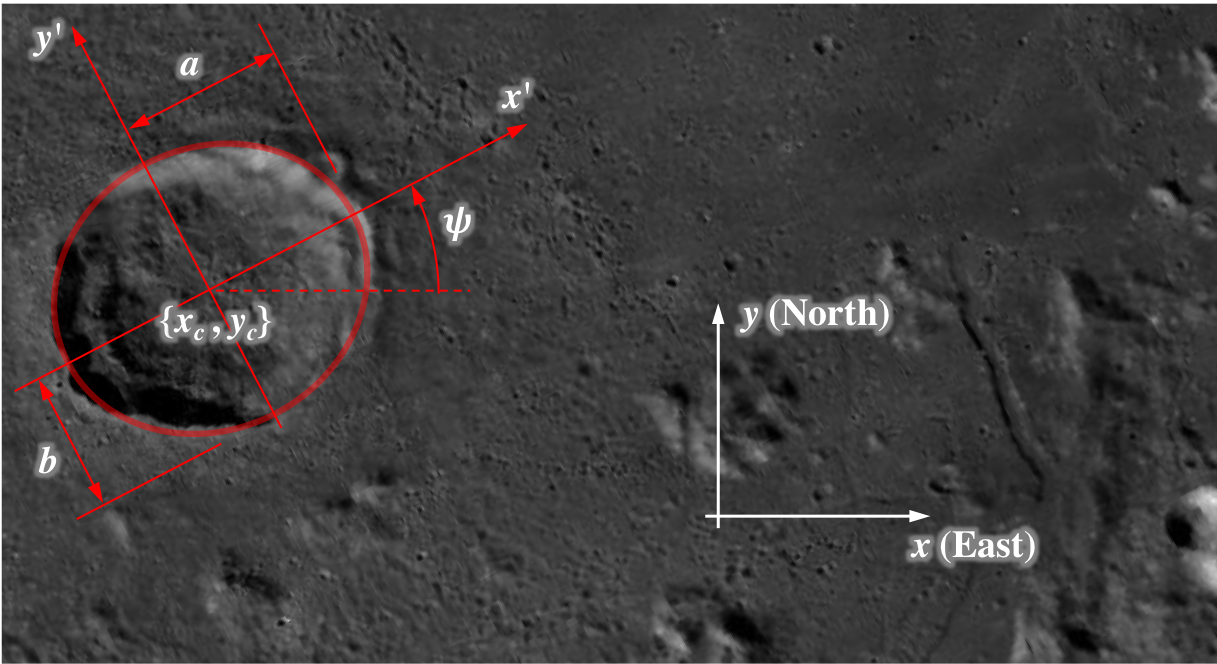}
	\caption{Visualization of ellipse parameterization in $\mathcal{P}_i$.}
	\label{fig:EllipseDefinition}
\end{figure}
Substituting this result into Eq.~\eqref{eq:2DEllipseCanonical} yields the implicit equation
\begin{align}
    \label{eq:ConicImplicit}
    A x^2 + B x y + Cy^2 + D x + F y + G = 0
\end{align}
where
\begin{align}
A & = a^2 \sin^2 \psi + b^2 \cos^2 \psi \label{eq:ConicImplicitA}\\
B & = 2 (b^2 - a^2) \cos \psi \, \sin \psi \label{eq:ConicImplicitB}  \\
C & = a^2 \cos^2 \psi + b^2 \sin^2 \psi \label{eq:ConicImplicitC} \\
D & = -2 A x_c - B y_c\\
F & = -B x_c - 2 C y_c\\
G & = A x_c^2 + B x_c y_c + C y_c^2 - a^2 b^2
\end{align}
Observe that Eq.~\ref{eq:ConicImplicit} is the generic equation for a conic, which happens to be an ellipse when $B^2 - 4AC < 0$. A simple calculation with Eqs.~\ref{eq:ConicImplicitA}--\ref{eq:ConicImplicitC} shows that $B^2 - 4AC = -4 a^2 b^2$, which will always satisfy the ellipse inequality constraint when $a \geq b  > 0$. The reverse mapping (implicit coefficients to explicit parameters) is straightforward and left to the reader (though the expression may be found in \cite{Christian:2010}).

The implicit equation from Eq.~\eqref{eq:ConicImplicit} may be written as a quadratic form using homogeneous coordinates. Therefore, letting a 2D point in the crater plane $\mathcal{P}_i$ be given by
\begin{equation}
\bx = 
\left[ \begin{array}{c c c}
          x \\
          y
   \end{array} \right]
\end{equation}
which may be written in homogeneous coordinates as
\begin{equation}
\bar{\bx} = 
\left[ \begin{array}{c c c}
          x \\
          y \\
          1
   \end{array} \right]
\end{equation}
it is easy to see that Eq.~\ref{eq:ConicImplicit} may be rewritten as
\begin{equation}
    \label{eq:ConicLocusDef}
\bar{\bx}^T \bC \, \bar{\bx} = 0
\end{equation}
where
\begin{equation}
\bC = 
\left[ \begin{array}{c c c}
          A & B/2 & D/2 \\
          B/2 & C & F/2 \\
          D/2 & F/2 & G
   \end{array} \right] 
\end{equation}

The expression in Eq.~\ref{eq:ConicLocusDef} describes a \emph{conic locus}, which is the locus of 2D points forming the path of the conic (see Fig.~\ref{fig:ConicLocusEnvelope}). The matrix $\bC$ describing a conic is a $3 \times 3$, real-valued, symmetric matrix of ambiguous scale. The ambiguous scale means that $\bC$ has only 5 degrees-of-freedom (which is also obvious from  Eq.~\ref{eq:ConicImplicit}) and that $\bC$ and $k \bC$ (with $k$ being a real-valued, non-zero scalar) describe the same conic. If $\bC$ is full rank (as it is with an ellipse), we have a proper conic---otherwise it is degenerate (either a pair of lines or a double line). Furthermore, if the conic is an ellipse, $\bC$ is indefinite---always having two eigenvalues of one sign and a third eigenvalue of the opposite sign, with none of the eigenvalues being zero (since it is a proper conic). 

\begin{figure}[b!]
\centering
\includegraphics[width=0.5\columnwidth,trim=0in 0in 0in 0in,clip]{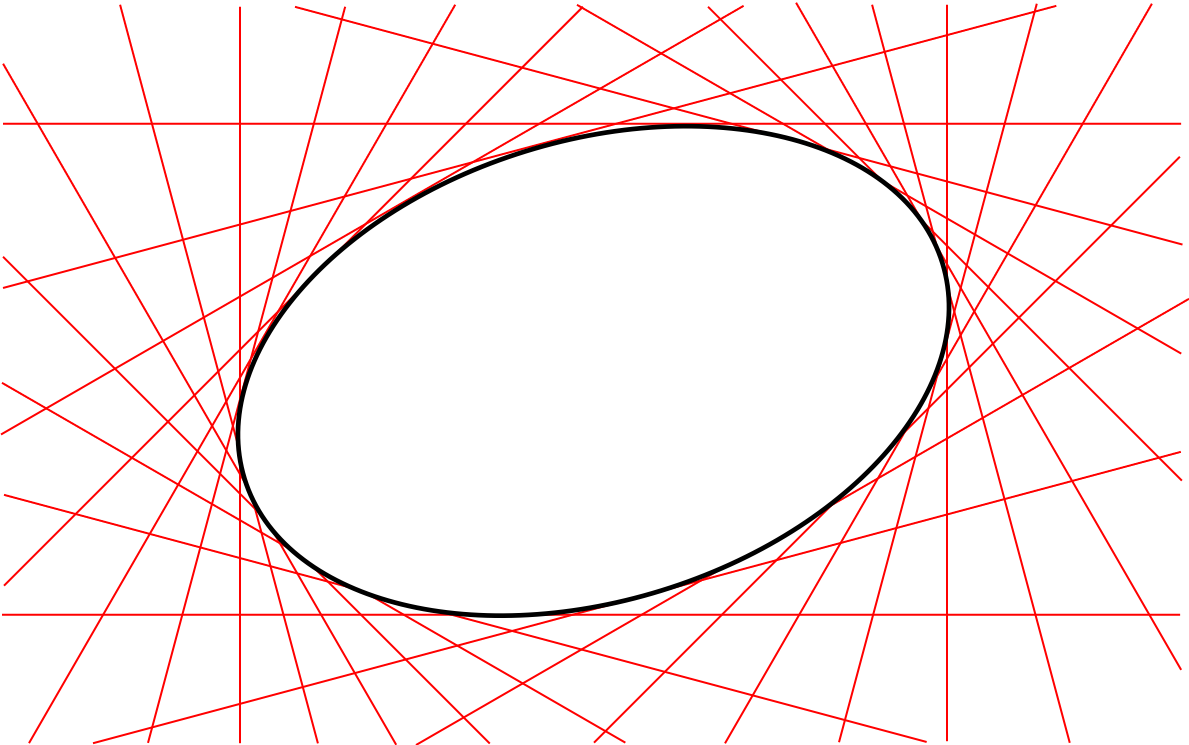}
	\caption{Visualization of simple conic locus (black) and lines belonging to the conic envelope (red).}
	\label{fig:ConicLocusEnvelope}
\end{figure}

Since $\bC$ is full rank for an ellipse, $\text{det}(\bC) \neq 0$ and
\begin{equation}
    \bC^{-1} = \text{det}(\bC)^{-1} \bC^{\ast} \propto \bC^{\ast} 
\end{equation}
where $\bC^{\ast}$ is the adjugate matrix of $\bC$. 

In projective geometry, points are dual to lines \cite{Semple:1952,Hartley:2003}. A 2D point written in homogeneous coordinates ($\bar{\bx}$) lies on the line $\bl$ if it satisfies the constraint $\bl^T \bar{\bx} = 0$. Clearly, $\bar{\bx}$ and $\bl$ can be exchanged in this relation---hence the point--line duality. To apply this duality property to conics, first observe that the line $\bl = \bC \bar{\bx}$ is tangent to the conic $\bC$ if $\bar{\bx}$ is a point on the conic \cite{Hartley:2003}. For a proper conic, one may therefore compute $\bar{\bx} = \bC^{-1} \bl$. Substitution into Eq.~\ref{eq:ConicLocusDef} yields
\begin{equation}
\bar{\bx}^T \bC \, \bar{\bx} = \left( \bC^{-1} \bl \right)^T \bC \, \left( \bC^{-1} \bl \right) = \bl^T \bC^{-T} \bC \bC^{-1} \bl = 0
\end{equation}
and, since $\bC$ is symmetric and $\bC^{-1} \propto \bC^{\ast}$,
\begin{equation}
\label{eq:ConicEnvelopeDef}
\bl^T \bC^{\ast} \, \bl = 0
\end{equation}
where $\bl$ is a line tangent to the conic. The utility of using adjugate matrices instead of inverses will become readily apparent as we proceed.

The expression in Eq.~\ref{eq:ConicEnvelopeDef} describes a \emph{conic envelope}, which is the family of tangent lines that encapsulate the ellipse (see Fig.~\ref{fig:ConicLocusEnvelope}).

\subsubsection{Three-dimensional (3D) Crater Description}\label{3Dcrater}
The elliptical crater rim is a 2D feature in 3D space. Thus, as we will show, the crater catalog data may be used to construct both the crater's plane and the disk quadric describing the 3D crater. The disk quadric is the natural and mathematically rigorous way to represent a 3D conic feature. 

Consider catalog crater $i$ with a geometric center at lunar latitude $\varphi_i$ and longitude $\lambda_i$. The 3D selenographic position of the crater center is
\begin{equation}
\label{eq:MCMFCraterLocation}
\bp_{M_i}^{(c)} = \rho_i
\left[ \begin{array}{c c c}
          \cos \varphi_i \cos \lambda_i \\
          \cos \varphi_i \sin \lambda_i \\
          \sin \varphi_i
   \end{array} \right]
\end{equation}
where $\rho_i$ is the distance of the plane $\mathcal{P}_i$ from the center of the Moon. The crater's center point is assumed to lie in the crater plane, $\bpi^T_i \bar{\bp}_{M_i}^{(c)} = 0 $. 

The Moon is very nearly a sphere. Therefore, to an excellent approximation, the crater normal is parallel to the selenographic crater center location $\bp_{M_i}^{(c)}$. If the celestial body were an oblate spheriod, the local ``up'' direction may be computed through an additional simple calculation \cite{Bowring:1985,Borkowski:1989}. Regardless, assuming a spherical Moon, define a local East-North-Up (ENU) coordinate system, such that
\begin{equation}
    \bu_i = \frac{ \bp_{M_i}^{(c)} }{\| \bp_{M_i}^{(c)} \|}
\end{equation}
\begin{equation}
    \be_i = \frac{ \bk \times \bu_i  }{ \| \bk \times \bu_i \| }
\end{equation}
\begin{equation}
    \bn_i = \frac{ \bu_i \times \be_i  }{ \| \bu_i \times \be_i \| }
\end{equation}
where $\bk^T = [0 \; 0\; 1]$ describes the selenographic description of the lunar pole. With the coordinate axes defined, construct the attitude transformation matrix (passive interpretation of a rotation \cite{Zanetti:2019}),
\begin{equation}
\bT^{E_i}_M =
\left[ \begin{array}{c c c}
          \be_i & \bn_i & \bu_i
   \end{array} \right]
\end{equation}
that transforms a vector expressed in the ENU frame of crater $i$ (defined as $E_i$) to the same vector expressed in the selenograhic frame $M$.

Since $\bu_i$ is normal to the plane and $\rho_i$ is the perpendicular distance to the center of the Moon, the vector $\bpi_i$ describing the plane $\mathcal{P}_i$ may be easily computed as
\begin{equation}
\bpi^T_i =
\left[ \begin{array}{c c c}
          \bu^T_i & -\rho_i
   \end{array} \right]
\end{equation}

Now, consider a 3D point $\bxi_{E_i}$ lying on the rim of crater $i$ as expressed in the crater's ENU frame $E_i$. This 3D point may be transformed into selenographic coordinates,
\begin{equation}
    \label{eq:3DPointOnCraterRimMoon}
    \bxi_{M} = \bp^{(c)}_{M_i} + \bT^{E_i}_M \bxi_{E_i}
\end{equation}
which may be compactly written in homogeneous coordinates
\begin{equation}
\bar{\bxi}_{M} =
\left[ \begin{array}{c c c}
          \bT^{E_i}_M & \bp^{(c)}_{M_i} \\
          \textbf{0}_{1 \times 3} & 1
   \end{array} \right]
   \bar{\bxi}_{E_i}
\end{equation}
Since the point $\bar{\bxi}_{E_i}$ lies in the plane $\mathcal{P}_i$, it is evident that 
\begin{equation}
\bar{\bxi}_{E_i} = 
\left[ \begin{array}{c c c}
          x \\
          y \\
          z \\
          1
   \end{array} \right] = 
   \left[ \begin{array}{c c c}
          x \\
          y \\
          0 \\
          1
   \end{array} \right]
\end{equation}
Thus, defining the columns of $\bT^{E_i}_M$ according to
\begin{align}
    \bT^{E_i}_M = 
    \left[ \begin{array}{c c c}
          \bt_{1_i}  & \bt_{2_i} & \bt_{3_i}
   \end{array} \right]
\end{align} 
we arrive at a relation between the 2D ellipse point $\bar{\bx}_{E_i}$ in the crater plane and its corresponding 3D point in the selenographic frame
\begin{equation}
\label{eq:IntermediateLunarProjection}
\bar{\bxi}_{M} = 
   \left[ \begin{array}{c c c}
          \bH_{M_i} \\
          \bk^T
   \end{array} \right] \bar{\bx}_{E_i}
\end{equation}
where $\bk^T = [0 \; 0\; 1]$ and $\bH_{M_i}$ is the $3 \times 3$ matrix
\begin{equation}
\label{eq:DefHMi}
\bH_{M_i} =
\left[ \begin{array}{c c c}
          \bt_{1_i}  & \bt_{2_i} & \bp^{(c)}_{M_i}
   \end{array} \right] =  
   \left[ \begin{array}{c c c}
          \bT^{E_i}_M \bS & \bp^{(c)}_{M_i}
   \end{array} \right] 
\end{equation}
and where $\bS$ is the $3 \times 2$ matrix
\begin{equation}
\label{eq:Smat}
\bS =
\left[ \begin{array}{c c c}
          \bI_{2 \times 2} \\
          \textbf{0}_{1 \times 2}
   \end{array} \right]
\end{equation}

Consider a quadratic surface in $\mathbb{P}^3$ (e.g., a sphere, an ellipsoid). Such surfaces are generally called \emph{quadrics}, with a 3D point $\bar{\bxi}$ lying on the surface if
\begin{equation}
\label{eq:QuadricLocus}
\bar{\bxi}^T \bQ \bar{\bxi} = 0
\end{equation}
where $\bQ$ is a $4 \times 4$ symmetric matrix describing the surface. This is the 3D analog to the 2D expression in Eq.~\ref{eq:ConicLocusDef}.

The 3D surface defined by Eq.~\ref{eq:QuadricLocus} is called a \emph{quadric locus}. The 3D conic locus (a curve) cannot be represented by a quadric alone. Instead the conic defines a proper quadric cone (where $\bQ$ is a $4 \times 4$ matrix of rank 3) and the 3D conic is formed by the intersection this quadric cone with the plane $\mathcal{P}_i$. Therefore, define the Moon-centered quadric cone (i.e., a cone with vertex at the center of the Moon) as
\begin{equation}
    \label{eq:QuadricConeCrateri}
    \mathcal{X}_i = \{ \bar{\bxi} \in \mathbb{P}^3 \; | \; \bar{\bxi}^T \bQ_i \bar{\bxi} = 0 \}
\end{equation}
such that the conic locus describing the elliptical crater is formed by the conic section
\begin{equation}
    \label{eq:ConicLocusConicSection1}
    \mathcal{C}_i = \mathcal{X}_i\cap \mathcal{P}_i
\end{equation}

Just as points and lines are dual in $\mathbb{P}^2$, points and planes are dual in $\mathbb{P}^3$ (a fact that should be evident from Eq.~\ref{eq:PointPlane3D}). It follows therefore, that one may construct a dual quadric, $\bQ_i^{\ast}$, such that the \emph{quadric envelope} is given by
\begin{equation}
    \mathcal{D}^{\star}_i = \{ \bpi \in \mathbb{P}^3 \; | \; \bpi^T \bQ_i^{\ast} \, \bpi = 0 \}
\end{equation}
where $\bpi$ is a plane tangent to the quadric. The quadric envelope provides a more natural way to describe a 3D conic.

The quadric envelope for a 3D conic is generally called the \emph{disk quadric} \cite{Semple:1952}, and defines all the planes tangent to the conic curve (see Fig.~\ref{fig:DiskQuadric}). This may be visualized as all the planes tangent to  an ellipsoid $x^2/a^2 + y^2/b^2 + z^2/c^2 = 1$  and letting $c \rightarrow 0$, thus resulting in a 3D disk (i.e., the 3D ellipsoid is collapsed to a plane and resembles a pancake or dinner plate). The disk quadric is, therefore, defined by a $4 \times 4$ matrix $\bQ_i^{\ast}$ that has rank 3. We may compute the disk quadric directly from the crater's conic envelope as
\begin{equation}
    \label{eq:DiskQuadricForCrater}
    \bQ_i^{\ast} \propto \left[ \begin{array}{c c c}
          \bH_{M_i} \\
          \bk^T
   \end{array} \right]
   \bC^{\ast}
   \left[ \begin{array}{c c c}
          \bH_{M_i} \\
          \bk^T
   \end{array} \right]^T =  
   \left[ \begin{array}{c c c}
          \bH_{M_i}\bC_i^{\ast}\bH_{M_i}^T & \bH_{M_i} \bC_i^{\ast} \bk  \\
          \bk^T \bC_i^{\ast} \bH_{M_i}^T & \bk^T \bC_i^{\ast} \bk
   \end{array} \right]
\end{equation}
which is a $4\times4$ symmetric matrix of rank 3.

\begin{figure}[b!]
\centering
\includegraphics[width=0.5\columnwidth,trim=0in 0in 0in 0in,clip]{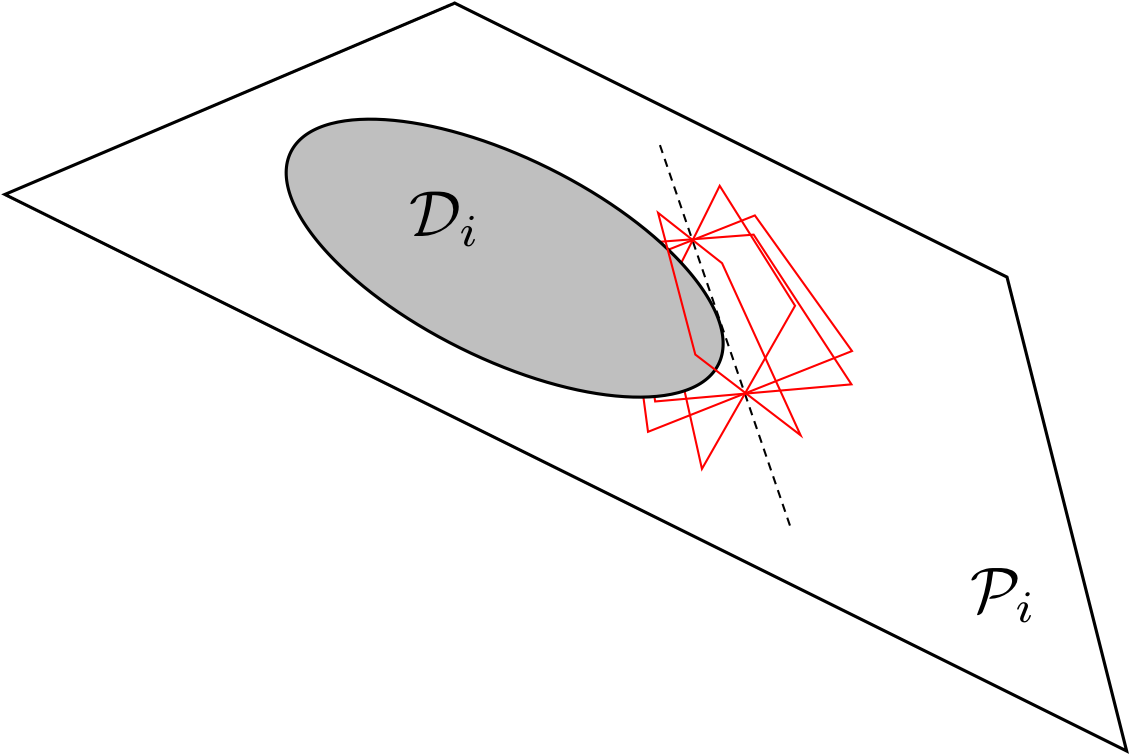}
	\caption{Visualisation of the disk quadric that defines all of planes tangent to $\mathcal{D}_i$ lying in the plane $\mathcal{P}_i$. A few example tangent planes are illustrated in red.}
	\label{fig:DiskQuadric}
\end{figure}

The disk quadric is clearly a more direct way to represent the 3D conic than the intersection of a proper quadric cone and a plane. Thus, the disk quadric is the representation of choice when one must concisely describe a 3D conic, such as a crater located on the surface of the Moon.

Each 3D conic (as defined by the disk quadric) corresponds to a symmetric $4\times 4$ matrix of rank 3, up to a scalar. The symmetric $4\times 4$ matrices form a $10$-dimensional vector space, so a nonzero $4\times 4$ symmetric matrix corresponds to a point in $\PP^9$. The constraint that the rank is $\leq 3$ defines a hypersurface in the projective space $\PP^9$ given by the vanishing of the determinant. The set $\varZ$ of all symmetric $4\times 4$ matrices of rank $3$ is an open subset in this hypersurface that parametrizes the set of conics. The dimension of $\varZ$ (and the hypersurface) is $9-1=8$.

\begin{table}[b!]
	\caption{Summary of geometric primitives related to a 3D crater.}
	\centering{}%
	\label{tab:3Dgeom}
	\begin{tabular}{ l l l }
	\hline 
	\hline 
	Coordinate- & Homogeneous & \\
	Free & Coordinate& \\
	Symbol & Representation & Description \\
	\hline 
	    $\mathcal{P}_i$ & $\bpi_i$ & Plane containing crater rim  \\
		$\mathcal{C}_i$ & $\bC_i$ & Conic locus of crater rim contour within $\mathcal{P}_i$  \\
		$\mathcal{C}^{\star}_i$  & $\bC^{\ast}_i$ & Conic envelope of crater rim contour within $\mathcal{P}_i$  \\
		$\mathcal{X}_i$ & $\bQ_i$ & Quadric cone with apex at Moon center passing through $\mathcal{C}_i$  \\
		$\mathcal{D}^{\star}_i$ & $\bQ^{\ast}_i$ & Disk quadric describing 3D crater  \\
	\hline 
	\hline 
\end{tabular}
\end{table}

\subsection{Homography and Action of a Projective Camera on a Crater Disk Quadric}
If a crater rim is a planar conic, then its projection into an image is also a conic. The projective transformation from one plane to another is a homography, thus allowing the appearance of the projected crater ellipse to be computed analytically.

If $\mathcal{C}_i$ is a 3D conic (see Eq.~\ref{eq:ConicLocusConicSection1} and Table~\ref{tab:3Dgeom}), we define its perspective projection into an image as $\mathcal{A}_i = \pi( \mathcal{C}_i )$, where $\pi:\PP^3\setminus\{\bar{\br}_M\}\to \PP^2$ and $\bar{\br}_M$ is the camera location. Thus, we describe the conic locus for $\mathcal{A}_i$ within the image as
\begin{equation}
\label{eq:ImageConicLocus}
    \mathcal{A}_i = \{ \bbu \in \mathbb{P}^2 \; | \; \bbu^T \bA_i \, \bbu = 0 \}
\end{equation}
where $\bbu^T = [u  \; v  \; 1]$ is the homogeneous coordinate representation of the image pixel coordinate $[u,v]$.  We now introduce a  coordinate system to make this projection explicit.

Consider a calibrated camera with selenographic position $\br_M$ and attitude $\bT^M_C$ that views lunar crater $i$. If $\bxi_M$ is the selenographic position of a point on the rim of crater $i$, then its location in the camera frame is simply
\begin{equation}
    \bxi_{C} = \bT^M_C \left( \bxi_M - \br_M \right)
\end{equation}
We can clearly construct the pinhole camera by letting $\bar{\bx}_C \propto \bxi_{C}$, where $\bar{\bx}_C$ is the projected image plane coordinate of the 3D  point $\bxi_{C}$. With a digital camera, however, we never observe a point $\bar{\bx}_C$, but instead we observe the 2D pixel coordinate $\bu^T = [u  \; v]$ (or $\bar{\bu}^T = [u  \; v  \; 1]$) corresponding to that location in the image plane.  Assuming the camera frame convention from \cite{Christian:2016}, we place the +$z$ axis out of the camera and along the optical axis, the +$x$ axis to the image right and in the direction of increasing pixel column number, and the $y$ axis completing the right-handed system. Furthermore, let the origin of the pixel $u$-$v$ system be in the upper left-hand corner of the image, the $u$ axis be to the right (increasing pixel column count), the $v$ axis be down (increasing pixel row count), and integer values of $[u,v]$ occurring at the pixel centers. With  these conventions, the conversion between image plane coordinates ($\bar{\bx}_{C}$) and digital image pixel coordinates ($\bbu$) is described by a simple affine transformation,
\begin{equation}
    \left[ \begin{array}{c c c}
           u  \\
           v \\
           1
   \end{array} \right]
     = \left[ \begin{array}{c c c}
          d_x & \alpha & u_p \\
          0 & d_y & v_p \\
          0& 0 & 1
   \end{array} \right]
   \left[ \begin{array}{c c c}
           x  \\
           y \\
           1
   \end{array} \right]_C
\end{equation}
or, more compactly,
\begin{equation}
    \bar{\bu} = \bK \bar{\bx}_C
\end{equation}
where the camera calibration matrix $\bK$ is a function of the five calibration parameters: $d_x$ (and $d_y$) is the ratio of the focal length to pixel pitch in the $x$ (or $y$) direction, $\alpha$ is the detector skewness, and $[u_p,v_p]$ is the coordinates of the principal point (where the optical axis intersects the image). We generally have excellent knowledge of $\bK$ for spacecraft OPNAV applications from in-flight calibration with star field images \cite{Christian:2016}.

Therefore, the 3D point $\bar{\bxi}_M$ projects to pixel coordinate $\bbu$ in the image according to
\begin{equation}
    \label{eq:EllipseProjection01}
    \bar{\bu} \propto \bP^M_C \bar{\bxi}_M
\end{equation}
where $\bP^M_C$ is the camera projection matrix for a specified absolute pose,
\begin{equation}
    \label{eq:ProjectionMatrixDef}
    \bP^M_C = \bK \left[ \begin{array}{c c c}
          \bT^{M}_{C} & -\br_C 
   \end{array} \right] = 
   \bK \bT^{M}_{C} \left[ \begin{array}{c c c}
           \bI_{3 \times 3} & -\br_M 
   \end{array} \right]
\end{equation}
The action of a projective camera on a quadric envelope is known to follow \cite{Hartley:2003}
\begin{equation}
    \label{eq:QuadricProjection}
    \bA^{\ast}_i \propto \bP^M_C \bQ^{\ast}_i \left(\bP^M_C \right)^T
\end{equation}
where $\bQ^{\ast}_i$ is from Eq.~\ref{eq:DiskQuadricForCrater}  and where $\bA_i$ (which  may be computed from $\bA^{\ast}_i$) describes the conic in the image plane tracing the apparent outline of the quadric. This allows for the analytic projection of the crater's disk quadric into  its apparent ellipse the image plane.

The result of Eq.~\ref{eq:QuadricProjection} may also be viewed as a homography. Returning  to Eq.~\ref{eq:EllipseProjection01}, substitute the result from Eq.~\ref{eq:IntermediateLunarProjection} for $\bar{\bxi}_M$,
\begin{equation}
    \bar{\bu} = \bK  \left[ \begin{array}{c c c}
          \bT^{M}_{C} & -\br_C 
   \end{array} \right]
   \left[ \begin{array}{c c c}
          \bH_{M_i} \\
          \bk^T
   \end{array} \right] \bar{\bx}_{E_i}
\end{equation}
\begin{equation}
    \bar{\bu} = \bH_{C_i} \bar{\bx}_{E_i}
\end{equation}
where we define $\bH_{C_i}$ to be the $3 \times 3$ matrix describing the homography between the crater plane $\mathcal{P}_i$ and the camera's image plane
\begin{equation}
    \label{eq:DefHCi}
    \bH_{C_i} = \bK \left[ \begin{array}{c c c}
          \bT^{M}_{C} & -\br_C 
   \end{array} \right]
   \left[ \begin{array}{c c c}
          \bH_{M_i} \\
          \bk^T
   \end{array} \right] = \bP^M_C \left[ \begin{array}{c c c}
          \bH_{M_i} \\
          \bk^T
   \end{array} \right]
\end{equation}
Since $\bH_{C_i}$ is a homography, the conic locus and conic envelope may be analytically transformed to the image plane. Observing that
\begin{align}
    0 = \bar{\bx}^T_{E_i} \bC_i \bar{\bx}_{E_i} &  =  \left( \bH_{C_i}^{-1} \bar{\bu} \right)^T \bC_i \left( \bH_{C_i}^{-1} \bar{\bu} \right) \\
    & =\bar{\bu}^T \left( \bH_{C_i}^{-T} \bC_i \bH_{C_i}^{-1} \right)\bar{\bu} \\
    & = \bar{\bu}^T \bA_i  \bar{\bu}  = 0
\end{align}
we arrive at the result
\begin{equation}
    \label{eq:HomographyPointConic}
    \bA_i  \propto \bH_{C_i}^{-T} \bC_i \bH_{C_i}^{-1}
\end{equation}
\begin{equation}
    \label{eq:HomographyPointConic2}
    \bH_{C_i}^{T} \bA_i \bH_{C_i} \propto  \bC_i 
\end{equation}
It follows, therefore, that
\begin{equation}
    \label{eq:HomographyConicEnvelope}
    \bA^{\ast}_i  \propto \bH_{C_i} \bC_i^{\ast} \bH_{C_i}^{T}
\end{equation}
\begin{equation}
    \bH_{C_i}^{-1} \bA^{\ast}_i \bH_{C_i}^{-T} \propto \bC_i^{\ast} 
\end{equation}
The result of Eq.~\ref{eq:HomographyConicEnvelope} is clearly the same as Eq.~\ref{eq:QuadricProjection}. We often find it convenient to make explicit the relative scales of $\bA_i$ and $\bC_i$, which may be done by introducing the scale $s_i$. For example,
\begin{equation}
    \label{eq:HomographyPointConicWithScale}
    \bH_{C_i}^{T} \bA_i \bH_{C_i} =  s_i  \bC_i 
\end{equation}

\section{Existence of Invariants from Projections of Crater Patterns}
\label{Sec:ExistenceOfInvariantsTopLevel}
The central premise of this work is that invariant theory is the proper framework for describing, indexing, and matching patterns of lunar crater rims as seen in digital images. Recognizing that invariant theory will be unfamiliar to many space scientists and engineers, we find it worthwhile to develop the concept fully.

Much of the existing literature pursues an \emph{ad hoc} approach for the construction of crater pattern descriptors. The lack of a formal framework has led to other authors proposing descriptors of varying dimension (ranging from two for a pair of craters \cite{Cheng:2003,Cheng:2005} to 30 for a triad of craters \cite{Park:2019}). However, for a given type of crater pattern there is a specified number of independent projective invariants. Descriptors with fewer elements than this do not fully exploit the perceptually salient information in the image. Descriptors with more elements than this increase complexity of the matching process without adding any perceptually salient information (the redundant invariants are all algebraic functions of the independent invariants). Thus, \emph{ad hoc} schemes for developing crater pattern descriptors generally provide suboptimal performance, either in terms of descriptive power or computational complexity.

\subsection{Preliminaries on Invariant Theory}
\label{Sec:InvariantTheory}

In invariant theory we study functions on a space that remain unchanged under a group of symmetries of that space. Invariant theory originated in the 19th century with pioneering work by Cayley, Clebsch, Gordan, Sylvester, Hilbert, and others. Groups of particular interest were  ${\rm GL}_n$, ${\rm SL}_n$, orthogonal groups, 
and finite groups (see~\cite{Dieudonne:1970,Dieudonne:1971}). The space of binary forms of degree $d$ (homogeneous polynomials in 2 variables of degree $d$)
with the ${\rm SL}_2$-symmetry was extensively studied in the 19th century \cite{Olver:1999}.
If $a(x,y)=a_0x^d+a_1x^{d-1}y+\cdots+a_dy^d$ is a binary form of degree $d$, then a matrix 
$$
\begin{bmatrix}
p &q\\
r &s
\end{bmatrix}
$$
acts on it by
$$
\begin{bmatrix}
p &q\\
r &s
\end{bmatrix}\cdot a(x,y)=a(px+ry,qx+sy)=a_0'x^d+a_1'x^{d-1}y+\cdots+a_d'y^d,
$$
where $a_0',\dots,a_d'$ are polynomial expressions in $a_0,a_1,\dots,a_d,p,q,r,s$.
For $d=2$, the discriminant $a_1^2-4a_0a_2$ is invariant under the ${\rm SL}_2$-action 
and every other polynomial invariant is a polynomial expression in the discriminant.
Over the 20th century, invariant theory has been generalized to an action of arbitrary algebraic group $\varG$ on arbitrary algebraic variety $\varV$. 

\subsubsection{Polynomial versus rational invariants}
Traditionally the focus has been on {\em polynomial} invariants; i.e., polynomial functions that remain unchanged under the group symmetries.
In practice, we generally desire a set of {\em fundamental} invariants. That is, we seek a set of invariants $f_1,f_2,\dots,f_r$ such that every other polynomial invariant $g$ is a polynomial expression in the fundamental invariants: $g=G(f_1,f_2,\dots,f_r)$ for some polynomial $G(x_1,\dots,x_r)$ in $r$ variables. In the language of commutative algebra,
the set of all polynomial functions on the variety $\varV$ form a {\rm commutative ring}, which we write as $\C[\varV]$ (or $\R[\varV]$ if we work over the real numbers). The subring generated by $f_1,f_2,\dots,f_r$ is the set of all polynomial expressions in $f_1,f_2,\dots,f_r$, which we may write compactly as $\C[f_1,\dots,f_r]$. We also define the {\em invariant ring} as the set of all invariant polynomials, forming a subring denoted as $\C[\varV]^\varG$.  

Now, observe that $f_1,f_2,\dots,f_r$ are fundamental invariants if and only if $\C[\varV]^\varG=\C[f_1,f_2,\dots,f_r]$.
Hilbert's Finiteness Theorem \cite{Hilbert:1890,Hilbert:1893} states that there is a finite list of fundamental invariants for groups such as ${\rm GL}_n$, ${\rm SL}_n$, and orthogonal groups. More generally, there is a finite system of fundamental invariants when the symmetry group is {\rm reductive} \cite{Nagata:1963,Haboush:1975}). For a precise definition of reductive, see~\cite{Humphreys:1975}. Not all groups are reductive though, and typically, translation groups may not be reductive. There are examples where there is no finite set of fundamental invariants \cite{Nagata:1959}, and in many examples, the smallest possible number of fundamental polynomial invariants is finite but extremely large compared to the dimension of the space. 

In practical applications, such as computer vision and spacecraft optical navigation, it may not be feasible to find a fundamental system of invariants. One remedy is to work with separating invariants (see~\cite[\S2.4]{Derksen:2015}), and it is even simpler to work with {\em rational invariants}. Thus, instead of dealing with polynomial functions, we will consider rational functions.

Let $\C(\varV)$ be the set of all rational functions on $\varV$, and $\C(\varV)^\varG$ be the set of all {\em invariant} rational functions on $\varV$. We assume that $\varV$ is an irreducible variety so that $\C(\varV)$ has the algebraic structure of a field---specifically, it is the quotient field of the ring $\C[\varV]$. It is easy to see that $\C(\varV)^\varG$ is a subfield of $\C(\varV)$.
The reader should be warned that the quotient field of $\C[\varV]^\varG$ can be strictly smaller then $\C(\varV)^\varG$.
For example, it is possible that there are only constant polynomial invariants while there are non-constant rational invariants. This is another reason why it may be preferable to work with rational invariants instead of polynomial invariants. 

Rational functions are not defined for all points in the space, but are defined for almost all points---which is good enough for practical crater identification. Invariant rational functions $f_1,f_2,\dots,f_r$ form a system of fundamental
{\rm rational} invariants if every other rational invariant $g$ is a rational expression in $f_1,f_2,\dots,f_r$; i.e., of the form $G_1(f_1,\dots,f_r)/G_2(f_1,\dots,f_r)$ where $G_1(x_1,\dots,x_r)$ and $G_2(x_1,\dots,x_r)$ are polynomials with coefficients in $\C$ and $G_2(f_1,\dots,f_r)\neq 0$. 
There always exists a finite system of fundamental {\it rational invariants} $f_1,f_2,\dots,f_r$.
In fact, it is possible to choose $r\leq 1+\dim \varV$ (see Remark~\ref{remark:s+1}).

\subsubsection{Real versus complex rational invariants}
Though the problem of spacecraft optical navigation (and computer vision, more generally) deals with geometric configurations over the real numbers, we will often work over the complex numbers. There is not a big difference. 
In the typical setup, we have a real algebraic variety $\varV_\R$ that parametrizes certain geometric configurations. For example, $\varV_\R$ could be a subvariety of the real projective space $\PP^m_\R$ defined by homogeneous polynomial equations with real coefficients. The same equations define also a projective variety $\varV_\C \subset \PP^m_\C$
in complex projective space if we view those equations over the complex numbers. We now assume that $\varV_\R$ is Zariski dense in $\varV_\C$,
which means that any complex polynomial
that vanishes on $\varV_\R$ also vanishes on $\varV_\C$. In this case, we can view the field $\R(\varV_\R)$ of real rational functions on $\varV_\R$ as a subfield of $\C(\varV_\C)$.
Moreover, the real and complex part of a rational function in $\C(\varV_\C)$ are rational functions on $\varV_\R$. This gives a decomposition
$$
\C(\varV_\C)=\R(\varV_\R)\oplus \R(\varV_\R)i,
$$
where $i=\sqrt{-1}$. We also assume that $\varG_\C$ is a complex algebraic group 
acting on the variety $\varV_\C$, such that the subgroup $\varG_\R$ of real group elements acts on $\varV_\R$ and $\varG_\R$ is Zariski dense in $\varG_\C$. Then, because $\varG_\R\subset \varG_\C$ is Zariski dense, a complex rational function is invariant under $\varG_\C$ if and only if it is invariant under the group $\varG_\R$. Thus, we find that
$$
\C(\varV_\C)^{\varG_\C}=\C(\varV_\C)^{\varG_\R}=
(\R(\varV_\R)\oplus \R(\varV_\R)i)^{\varG_\R}=
\R(\varV_\R)^{\varG_\R}\oplus \R(\varV_\R)^{\varG_\R}i.
$$
In particular, if $f_1,f_2,\dots,f_r$ are real rational invariant functions that generate the field $\R(\varV_\R)^{\varG_\R}$ over $\R$ (i.e.,
$\R(\varV_\R)^{\varG_\R}=\R(f_1,f_2,\dots,f_r)$),
then they also generate $\C(\varV_\C)^{\varG_\C}$ over $\C$ (i.e.,
$\C(\varV_\C)^{\varG_\C}=\C(f_1,f_2,\dots,f_r)$)

On the other hand,  if $\C(\varV_\C)^{\varG_\C}=\C(f_1,f_2,\dots,f_r)$
for some {\em complex} rational invariants $f_1,f_2,\dots,f_r$, then the real and complex parts of $f_1,f_2,\dots,f_r$ generate the field $\R(\varV_\R)^{\varG_\R}$ of real rational invariants.

\subsubsection{Counting independent rational invariants}
As we have seen, the proper counting of independent invariants is a recurring problem in the crater identification literature. Moreover, the usual method of counting invariants used in classical computer vision studies (e.g., a degrees-of-freedom approach) is not extensible to the problem at hand. We find it necessary, therefore, to develop a more formal framework. This framework is shown to reproduce known results for simple cases, before being extended to the problem of non-coplanar conics.

Suppose that $L$ is a field and $K$ is a subfield.
If $f_1,f_2,\dots,f_r\in L$, then $K(f_1,f_2,\dots,f_r)$ is the set of all rational expressions in $f_1,\dots,f_r$ with coefficients in $K$. This is also called the field that is generated by $f_1,f_2,\dots,f_r$ over $K$. So by definition, invariant rational functions $f_1,f_2,\dots,f_r$ are fundamental rational invariants for the action of $\varG$ on $\varV$
if and only if $\C(f_1,f_2,\dots,f_r)=\C(\varV)^\varG$.

In spacecraft navigation, computer vision, and other applications it is often desirable to know the maximum number of functionally independent invariants. This
relates to the notion of transcendence degree of a field extension.
If $f_1,f_2,\dots,f_r$ lie in a field, then we say
that $f_1,f_2,\dots,f_r$ are algebraically dependent over a subfield $K$ if there exists a nonzero polynomial $P$ with coefficients in $K$ with $P(f_1,f_2,\dots,f_r)=0$.
The transcendence degree of $L$ over the subfield $K$ is the supremum over all $r$ for which there exists $f_1,f_2,\dots,f_r\in L$ that are algebraically independent over $K$. Let $\trdeg(L/K)$ be the transcendence degree of $L$ over $K$. If $L$ is also a subfield of another field $M$, then we have the following relation (see~\cite[Theorem VI.1.11]{Hungerford:1974}):
$$
\trdeg(M/K)=\trdeg(M/L)+\trdeg(L/K).
$$
\begin{remark}\label{remark:s+1}
If $\varV$ is a variety, then $\trdeg(\C(\varV)/\C)$ is equal to the dimension of $\varV$. If $s=\dim \varV$ then we can choose
$f_1,f_2,\dots,f_s$ algebraically independent.
If $L=\C(f_1,f_2,\dots,f_s)$ then we have
$$s=\trdeg(\C(\varV)/\C)=\trdeg(K/\C)+\trdeg(\C(\varV)/K)=s+\trdeg(\C(\varV)/K)$$
This shows that $\C(\varV)/K$ is algebraic; i.e., every element of $\C(\varV)$ is algebraic over $K$. If $\C(\varV)\neq K$, then there exists an element $f_{s+1}\in \C(\varV)$ with
$$
\C(\varV)=K(f_{s+1})=\C(f_1,f_2,\dots,f_{s+1})
$$
by the Theorem of the Primitive Element (see~\cite[Theorem 4.6]{Lang:2002}).
\end{remark}

We are particularly interested in $\trdeg(\C(\varV)^\varG/\C)$ which is the maximal  number of rational invariants that are algebraically independent (over $\C$). Let $\varG\cdot v$ be the orbit of $v\in \varV$ under the $\varG$-action
and let $\varG_v=\{g\in \varG\mid g\cdot v=v\}$ be the stabilizer of $v$. Then we have $\dim \varG=\dim(\varG\cdot v)+\dim \varG_v$. If $s=\max_{v\in V} \dim(\varG\cdot v)$
is the largest dimension of an orbit, then almost all orbits in $\varV$ have dimension $s$ and we call $s$ the generic dimension of an orbit in $\varV$.
The following results follows from Rosenlicht's Theorem \cite{Rosenlicht:1956}.
\begin{theorem}\label{theorem:trdeg}
If $s$ is the dimension of a generic orbit in $\varV$,
then the maximum number of algebraically independent invariants is equal to $\dim \varV-s$.
\end{theorem}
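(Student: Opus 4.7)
The plan is to reduce the claim to a standard dimension count by invoking Rosenlicht's theorem, which the excerpt already flags as the intended input. Informally, Rosenlicht tells us that the invariant rational functions construct a geometric quotient on a Zariski-dense open subset of $\varV$, so the transcendence degree of the invariant field equals the dimension of that quotient.

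First, I would apply Rosenlicht's theorem in the following form: for the action of the algebraic group $\varG$ on the irreducible variety $\varV$, there exist a $\varG$-invariant Zariski-dense open subset $\varU \subseteq \varV$ and a dominant morphism $\pi : \varU \to \varY$ onto an irreducible variety $\varY$ whose fibers through points of $\varU$ are exactly the $\varG$-orbits, and such that the pullback $\pi^{\ast}$ identifies $\C(\varY)$ with $\C(\varV)^{\varG}$. Here I use that $\C(\varU)=\C(\varV)$ because $\varU$ is Zariski-dense in the irreducible variety $\varV$, so invariant rational functions on $\varU$ are the same as invariant rational functions on $\varV$.

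Second, I would compute $\dim \varY$. Every fiber of $\pi$ through a point of $\varU$ is an orbit, and by hypothesis the maximum (hence generic) orbit dimension is $s$, so the generic fiber of $\pi$ has dimension $s$. The fiber-dimension theorem for dominant morphisms of irreducible varieties then gives $\dim \varU = \dim \varY + s$, and since $\varU$ is open and dense in $\varV$, we have $\dim \varU = \dim \varV$, hence $\dim \varY = \dim \varV - s$. Translating to transcendence degrees, $\trdeg(\C(\varY)/\C) = \dim \varY = \dim \varV - s$, and Rosenlicht's isomorphism $\C(\varV)^{\varG} \cong \C(\varY)$ gives $\trdeg(\C(\varV)^{\varG}/\C) = \dim \varV - s$. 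By definition this transcendence degree is the supremum of the number of algebraically independent elements of $\C(\varV)^{\varG}$, which is precisely the maximal number of algebraically independent invariants, yielding the theorem.

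The main obstacle is simply citing Rosenlicht's theorem with the right hypotheses to obtain both the geometric quotient and the identification of function fields; the proof of that theorem is nontrivial, but the statement is classical and is the intended black box here. A minor technical point is verifying that the generic orbit is irreducible of dimension $s$ so the fiber-dimension theorem applies cleanly: this follows because orbits are homogeneous spaces of $\varG$ and the locus where the orbit dimension attains its maximum is a $\varG$-invariant open dense subset of $\varV$, which may be intersected with Rosenlicht's open set before defining $\varU$.
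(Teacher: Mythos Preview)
Your proposal is correct and follows essentially the same approach as the paper: invoke Rosenlicht's theorem to obtain a $\varG$-stable dense open $\varU\subseteq\varV$ with a geometric quotient $\pi:\varU\to\varY$ (the paper writes $\varU/\varG$), use the identification $\C(\varY)\cong\C(\varV)^{\varG}$, and compute $\dim\varY=\dim\varV-s$ from the fact that the fibers are orbits of generic dimension $s$. You are somewhat more explicit than the paper about the fiber-dimension theorem and about shrinking $\varU$ so that all fibers have dimension $s$, but this is only a difference in level of detail, not in method.
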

\bproof
Rosenlicht proved in \cite{Rosenlicht:1956} that there exist a $\varG$-stable nonempty (Zariski) open subset $\varU\subseteq \varV$
that has a geometric quotient. A geometric quotient
is an algebraic variety $\varU/\varG$ together with
a surjective morphism $\pi : \varU \to \varU/\varG$ such that
the fibers of $\pi$ are exactly all $\varG$-orbits
and $\pi$ also has additional properties such 
as $\C(\varU/\varG)=\C(\varU)^\varG=\C(\varV)^\varG$. The fibers
of $\pi$ have dimension at most $s$ and almost all
fibers have dimension $s$. (Actually one can show that all the fibers have dimension $s$.) 
The maximum number of algebraically independent invariants is
$$
\trdeg(\C(\varV)^\varG/\C)=\trdeg(\C(\varU/\varG)/\C)=
\dim \varU/\varG=\dim \varU-s=\dim \varV-s.
$$
\eproof

\subsection{Rational invariants for conics in $\PP^2$}
\label{Sec:InvariantsConicsP2}
The result of Theorem~\ref{theorem:trdeg} is straightforwardly applied to the problem of counting the rational invariants for coplanar conics (i.e. a $d$-tuple of conics in $\PP^2$). We know from past results that two coplanar conics have two invariants \cite{Forsyth:1991,Mundy:1992b,Quan:1992}, that three coplanar conics have seven invariants \cite{Quan:1998}, and that a $d$-tuple of conics have $5d-8$ invariants  \cite{Heisterkamp:1997}. This is now briefly shown, before moving on to the more nuanced problem of non-coplanar conics.

Therefore, as an illustration, we can count the number of independent rational invariants for $d$ conics in $\PP^2$.
A conic in $\PP^2$ corresponds to a nonzero quadratic homogeneous polynomial in $3$ variables, up to scalar (see Eq.~\ref{eq:ConicImplicit}). So the variety of all conics can be identified with $\PP^5$ because such a polynomial has $6=5+1$ coefficients. Let $\varV$ be the variety of $d$-tuples of conics in $\PP^2$, Then $\varV\cong (\PP^5)^d$ has dimension $5d$. For $\varG$ we take the group ${\rm PGL}_3$ which has dimension $8$.
For $d=1$ the group ${\rm PGL}_3$ acts transitively on all nondegenerate quadratic forms. This means
that ${\rm PGL}_3$ has a Zariski dense orbit in $\varV=\PP^5$, so the dimension of a generic orbit is $s=5$ and the number of independent rational invariants is $\dim \varV-s=5-5=0$. For $d=2$, we can explicitly compute the stabilizer of a pair of conics (e.g., $x^2+2y^2-z^2=0$ and $2x^2+y^2-z^2=0$) and note that it is finite. 
This implies that the dimension of a generic orbit is equal to $\dim \varG=8$. This, in turn, implies that the dimension of a generic orbit is equal to $8$ for all $d\geq 2$ and the number of algebraically independent invariants is $\dim \varV-s=5d-8$.

\subsection{Rational invariants for conics in $\PP^3$}
A method for the robust identification of non-coplanar crater patterns has eluded spacecraft navigators since autonomous crater-based navigation was first explored over 25 years ago. The result is that most existing algorithms constrain the problem to local (nearly coplanar) crater patterns or to only nadir pointing images. To solve this problem requires us to first understand conics in $\PP^3$.

\subsubsection{Counting independent rational invariants for conics in $\PP^3$}
\label{Sec:ConicInvariantsInP3}
Let $\varZ$ be the $8$-dimensional variety introduced at the end of Section~\ref{3Dcrater} parameterizing conics in $\PP^3$, and let $\varV=\varZ^d$ be the space of $d$ conics in $\PP^3$. The 15-dimensional group $\varG={\rm PGL}_4$ acts on $\PP^3$, $\varZ$, and $\varV$. If $d=1$ then the group acts transitively on all nondegenerate conics so that $\varV=\varZ$ has a dense orbit
and there are no non-constant rational invariants. Thus, we search for invariants when $d \geq 2$.

Suppose that $d=2$ and consider the pair of conics
$\mathcal{C}_1$ and $\mathcal{C}_2$. If the conics are generic enough,
then the stabilizer of the pair $(\mathcal{C}_1,\mathcal{C}_2)$ is finite. To see this, suppose that $g\in {\rm PGL}_4$ lies in the connected component of the identity in the stabilizer of $(\mathcal{C}_1,\mathcal{C}_2)$. Then $g$ must also fix the planes $\mathcal{P}_1$ and $\mathcal{P}_2$ through $\mathcal{C}_1$ and $\mathcal{C}_2$ respectively.
Further, $g$ must also fix the intersections $\mathcal{C}_2\cap \mathcal{P}_1=\{\bbc_2^{(1)},\bbc_2^{(2)}\}$ and $\mathcal{C}_1\cap \mathcal{P}_2=\{\bbc_1^{(1)},\bbc_1^{(2)}\}$
and because $g$ lies in the connected component of the identity, it fixes all the points $\bbc_1^{(1)},\bbc_1^{(2)},\bbc_2^{(1)},\bbc_2^{(2)}$ on the line $\mathcal{P}_1\cap \mathcal{P}_2$ individually.
There are two points $\bbr_1^{(1)},\bbr_1^{(2)}$ on $\mathcal{C}_1$
such that the tangent lines to $\mathcal{C}_1$ at $\bbr_1^{(1)}$ and $\bbr_1^{(2)}$ go through $\bbc_2^{(1)}$. So $g$ must also fix $\bbr_1^{(1)}$
and $\bbr_1^{(2)}$. Among $\bbr_1^{(1)},\bbr_1^{(2)},\bbc_1^{(1)},\bbc_1^{(2)}$ in the plane ${\mathcal P}_1$
there are no three points on the line (Fig.~\ref{fig:CountingConicInvariant1}), because they all lie on the same conic $\mathcal{C}_1$. This implies
that $g$ must fix all the points in the plane $\mathcal{P}_1$.
Similarly it must fix all the points in the plane $\mathcal{P}_2$. From this follows  that $g$ is the identity
element in ${\rm PGL}_4$. Thus, the dimension of a general orbit is equal to $\dim {\rm PGL}_4=15$,
and the same is true for all $d\geq 2$.
It follows, therefore, that  the maximal number of algebraically independent rational invariants for $d\geq 2$ conics is $\dim \varV-15=8d-15$.

 \begin{figure}[b!]
\centering
\includegraphics[width=0.6\columnwidth,trim=0in 0in 0in 0in,clip]{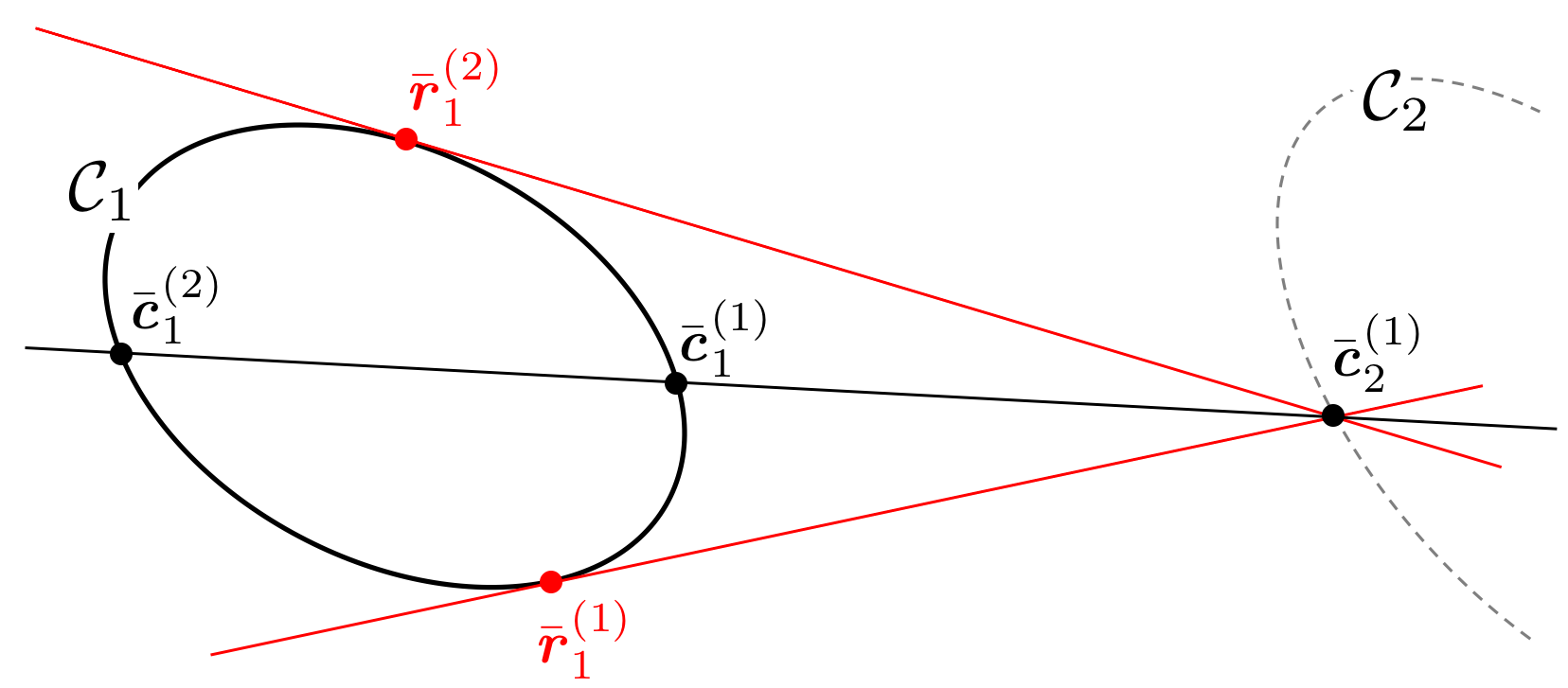}
	\caption{Since $\bbr_1^{(1)},\bbr_1^{(2)},\bbc_1^{(1)},\bbc_1^{(2)}$ all lie on $\mathcal{C}_1$  no three of these points may be on the same line.}
	\label{fig:CountingConicInvariant1}
\end{figure}

This result agrees  with the past literature that has identified one independent rational invariant for a pair of conics in $\PP^3$ \cite{Johnson:1914,Quan:1995b}. Our result, however, is more general and prepares us to study which invariants (if any) may be recovered from the projection of conics into an image.

\subsection{Calculating invariants from projections}

In Section~\ref{Sec:ConicInvariantsInP3}, we found the invariants for a $d$-tuple of conics in $\PP^3$. These, however, are not the invariants we need because we don't measure the crater rims directly in $\PP^3$. Instead, we image the lunar surface with a camera. Thus, we are in search of rational invariants that may be computed from only the projected crater rims that we see in an image. Such invariants do not always exist.

\subsubsection{Invariants from projections and the action of ${\rm PGL}_4$}
We begin by discussing the general setup for computing invariants from projections. To do this,  
we consider models $\mathcal{M}$ in $\C^3$ space
that are parameterized by some variety $\varV$.
For example, $\varV$ can consist of all $d$-tuples of points in $3D$, or all $d$-tuples of conics.
We also have camera position $\bbo\in \C^3$ and a perspective projection $\pi:\C^3\setminus\{\bo\}\to \PP^2$. We are interested in rational functions on $\varV$ (functions depending on the model $\mathcal{M}$)
that can be computed from the projected image $\pi(\mathcal{M})$ in a way that is independent on the camera position $\bbo$. We can extend $\C^3$ to the projective space $\PP^3$ and also extend the camera projection to a map $\PP^3\setminus\{\bbo\}\to \PP^2$, where $\bbo$ is equal to $\bo\in \C^3\subset \PP^3$, but viewed in $\PP^3$.
The group ${\rm PGL}_4$ acts on $\PP^3$
and contains the group of affine transformations of $\C^3$. The group ${\rm PGL}_4$ does not act on $\C^3$, but it acts on the field of rational functions on $\C^3$, because $\C^3$ and $\PP^3$
have the same rational functions. We assume
that the class of models parameterized by $\varV$
is closed under the action of ${\rm PGL}_4$
(for example, ${\rm PGL}_4$ takes conics to conics).
Then ${\rm PGL}_4$ also acts on the rational functions on $\varV$.

Suppose that $f$ is a rational function on $\varV$ such that $f(\mathcal{M})$ can be computed from the projected image $\pi(\mathcal{M})$ in such a way that it does not depend on the choice of the position $\bbo$ of the camera. So there exists a function $h$
such that $f(\mathcal{M})=h(\pi(\mathcal{M}))$
for every choice of $\bbo$. Let $\varG_{\bbo}$
be the image of the group
$$
\left\{\left[\begin{array}{cccc}
1 & 0 & 0 & 0\\
0 & 1 & 0 & 0\\
0 & 0 & 1 & 0\\
x& y & z & w \end{array}\right]\Big| w\neq 0\right\}
$$
in ${\rm PGL}_4$.
An element $g\in \varG_{\bbo}$ fixes $\bbo$ and all the lines through $\bbo$, so we have
$$
f(g\cdot \mathcal{M})=h(\pi(g\cdot \mathcal{M}))=
h(\pi(\mathcal{M}))=f(\mathcal{M}).
$$
So $f$ is invariant under the action of $\varG_{\bbo}$. Since this is true for every choice of $\bbo$, we see that $f$ is invariant under the group $\varG$ generated by all $\varG_{\bbo}$, $\bbo\in \PP^3$. One can verify that $\varG={\rm PGL}_4$. To see this, for example, we observe that the group $\varG$ is closed under conjugation with elements in ${\rm PGL}_4$. This implies that $\varG$ is a non-trivial normal subgroup of ${\rm PGL}_4$. Since ${\rm PGL}_4$ is known to be a simple group, we get $\varG={\rm PGL}_4$. This shows that any rational function that can be computed from a camera projection in a way that is independent on the choice of the position of the camera must be invariant under the action of ${\rm PGL}_4$.

\subsubsection{Invariants from projections of points in $\PP^3$}
\label{Sec:InvariantsForPointsP3}
The simplest example of invariants from projections is the case of a $d$-tuple of points in $\PP^3$. It is well known that $d$ arbitrarily placed 3D points possess no invariants that one may compute from their projection in an image \cite{Clemens:1991,Burns:1992}. Thus, a cloud of random 3D points cannot be indexed for recognition by a pose invariant descriptor, which is a critically important fact. We reproduce this known result here in our more formal framework. In doing so, we develop some of the ideas and tools necessary for the case of conics in $\PP^3$ but within the context of a simpler (and familiar) example.

Therefore, for example, suppose we have $d$ points in $\PP^3$. For $d\geq 5$ there are $3d-15$ independent invariants for the action of ${\rm PGL}_4$ (because for $d\geq 5$, the generic
stabilizer in $(\PP^3)^d$ is finite). Perspective projection for some fixed camera gives $d$ points in $\PP^2$. Unfortunately, however, no non-constant rational invariant can be computed from just the image. This may seem counter-intuitive, and we now show why this is the case. 

Suppose we fix a model $\mathcal{M}\in (\PP^3)^d$ consisting of $d$ points. The image may change in appearance with varying camera viewpoint, with the projection being governed by the action of ${\rm PGL}_4$. The variety of the possible images from the model as we move the camera cuts out a variety $\varU\subseteq (\PP^2)^d$ of dimension at most $\dim {\rm PGL}_4=15$. For $d\geq 8$, the dimension of $\varU$ is strictly smaller than the dimension of $(\PP^2)^d$ and one might expect to have at least $2d-15$ independent invariants that can be computed from the images, but this is wrong as we will see. The problem is that Theorem~\ref{theorem:trdeg} does not apply. The variety $\varU$ of possible images is {\rm not} an orbit for any group action on $(\PP^2)^d$: the group ${\rm PGL}_4$ does not act on $(\PP^2)^d$, and $\varU$ may be bigger than any ${\rm PGL}_3$ orbit.

To proceed, suppose that $\varV=(\PP^3)^d$ and $\varY=(\PP^2)^d$, $\bbo\in \PP^3$ is the position of the camera, and $\pi:\varV\to \varY$ is
the projection. (The map $\pi$ is only defined for
elements in $(\PP^3\setminus \{\bbo\})^d$.
If $h\in \C(\varY)$ is a rational function, then we can pull it back to get a rational function $\pi^\star h=h\circ \pi\in \C(\varV)$, where $\circ$ is the composition. Using the inclusion $\pi^\star:\C(\varY)\hookrightarrow \C(\varV)$ we may view
$\C(\varY)$ as a subfield of $\C(\varV)$. We are interested in rational invariants that can be computed from the image in $\varY$. Such invariants are exactly elements in the intersection field $\C(\varY)\cap \C(\varV)^\varG$.
We get the following diagram of field extensions
with their transcendence degrees:
$$
\xymatrix{
& \C(\varV)\ar@{-}[ld]_{d}\ar@{-}[rd]^{15} & \\
\C(\varY)\ar@{-}[rd]^{?}\ar@{.}[rdd]_{2d} & & \C(\varV)^\varG\ar@{-}[ld]_{?}\\
& \C(\varY)\cap \C(\varV)^\varG\ar@{-}[d]^{?} &\\
& \C &}.
$$
We will see that $\C(\varY)\cap \C(\varV)^\varG=\C$,
which means that that there are no non-constant
rational invariants that can be computed from the image. Suppose that $f\in \C(\varY)\cap \C(\varV)^\varG$, or to be more precise, there exists an $h\in \C(\varY)$ such that $f=h\circ \pi\in \C(\varV)^\varG$. We will show that $f$ (and $h$) must be constant.

To understand what is happening, we introduce an equivalence relation $\sim$ on $(\PP^2)^d$, the variety of possible images of $d$ points in $\PP^2$. We say $\mathcal{I}_1\sim \mathcal{I}_2$ is true
when there exists a model $\mathcal{M}$ such that $\mathcal{I}_1,\mathcal{I}_2$
both appear as images of $\mathcal{M}$ (but possible from different camera positions). 
The relation $\sim$
is {\em not} an equivalence relation. It satisfies
the reflexivity axiom ($\mathcal{I}\sim \mathcal{I}$) and the symmetry axiom ($\mathcal{I}_1\sim \mathcal{I}_2$ $\Leftrightarrow$ $\mathcal{I}_2\sim \mathcal{I}_1$). However, it does not satisfy the transitivity axiom
(if $\mathcal{I}_1\sim \mathcal{I}_2$ and $\mathcal{I}_2\sim \mathcal{I}_3$, then $\mathcal{I}_1\sim \mathcal{I}_3$). If $\pi$ and $\pi'$ are the projections
with respect to different camera positions/orientations, then $\pi'$ is obtained
from $\pi$ by a projective linear transformation $g$, so that $\pi'(\mathcal{M})=\pi(g\cdot \mathcal{M})$ for any model $\mathcal{M}$. If $\mathcal{I}\sim \mathcal{I}'$, then we have $\mathcal{I}=\pi(\mathcal{M})$ and
$\mathcal{I}'=\pi'(\mathcal{M})$ for some model and some camera projections $\pi,\pi'$. This means that
$h(\mathcal{I})=h(\pi(\mathcal{M}))=f(\mathcal{M})=f(g\cdot \mathcal{M})=f(\pi(g\cdot \mathcal{M}))=h(\mathcal{I}')$.

Let $\equiv$ be the equivalence relation generated by $\sim$. So we say that $\mathcal{I}\equiv \mathcal{I}'$
if and only if there is a finite sequence of images
$\mathcal{I}=\mathcal{I}_0,\mathcal{I}_1,\mathcal{I}_2,\dots,\mathcal{I}_r=\mathcal{I}'$ such that $\mathcal{I}_0\sim \mathcal{I}_1,\mathcal{I}_1\sim \mathcal{I}_2,\dots,\mathcal{I}_{r-1}\sim \mathcal{I}_r$.
If $\mathcal{I}\equiv \mathcal{I}'$ and $\mathcal{I}_0,\mathcal{I}_1,\dots,\mathcal{I}_r$ are as above,
then $f(\mathcal{I})=f(\mathcal{I}_0)=f(\mathcal{I}_1)=\cdots=f(\mathcal{I}_r)=f(\mathcal{I}')$.

We will now show that $\mathcal{I}\equiv \mathcal{I}'$ for all images $\mathcal{I},\mathcal{I}'\in \varY$. Let $(\bbp_1,\bbp_2,\dots,\bbp_d)\in (\PP^2)^d=\varY$ and $\bbq_1\in \PP^2$. Define $\bbo\in \PP^3$
as the position of a camera and $\pi:\PP^3\to\PP^2$ as the camera projection.
Let $\mathcal{L}_1,\mathcal{L}_2,\dots,\mathcal{L}_d,{\mathcal N}_1$ be the lines through $\bbo$ in $\PP^3$
corresponding to the points $\bbp_1,\bbp_2,\dots,\bbp_d,\bbq_1$ respectively.
Now, as shown in Fig.~\ref{fig:ProjInvariantPoints}, define $\mathcal{P}$ as the plane through the lines $\mathcal{L}_1$ and $\mathcal{N}_1$ (which intersect at $\bbo$).
 Let us choose another camera position $\bbo'$ in
the plane $\mathcal{P}$ randomly. 
We randomly choose points $\bba_1,\bba_2,\dots,\bba_d$ on the lines
$\mathcal{L}_1,{\mathcal L}_2,\dots,{\mathcal L}_d$ respectively (i.e., the points
$\bba_1,\bba_2,\dots,\bba_d$ are in general position).
Let ${\mathcal L}_j'$ be the line through $\bbo'$ and $\bba_j$ for all $j$. 
Since $\bbo'$ and $\bba_1$ lie in the plane $\mathcal{P}$, so does the line $\mathcal{L}_1'$.
Therefore, the lines $\mathcal{L}_1'$ and ${\mathcal N}_1$ intersect at some point $\bbb_1$.
Now we have $(\bbp_1,\bbp_2,\dots,\bbp_d)=\pi(\bba_1,\bba_2,\dots,\bba_d)$ and $(\bbp_1',\bbp_2',\dots,\bbp_d')=\pi'(\bba_1,\bba_2,\dots,\bba_d)$ so $(\bbp_1,\bbp_2,\dots,\bbp_d)\sim (\bbp_1',\bbp_2',\dots,\bbp_d')$.
Moreover, we have $(\bbp_1',\bbp_2'\dots,\bbp_d')=\pi'(\bbb_1,\bba_2,\dots,\bba_d)$
and $(\bbq_1,\bbp_2,\dots,\bbp_d)=\pi(\bbb_1,\bba_2,\dots,\bba_d)$, so $(\bbp_1',\bbp_2',\dots,\bbp_d')\sim (\bbq_1,\bbp_2,\dots,\bbp_d)$. From this follows that $(\bbp_1,\bbp_2,\dots,\bbp_d)\equiv (\bbq_1,\bbp_2,\dots,\bbp_d)$.
Repeating this argument shows that for any points $\bbp_1,\bbp_2,\dots,\bbp_d,\bbq_1,\bbq_2,\dots,\bbq_d\in \PP^2$ we have
$$(\bbp_1,\bbp_2,\dots,\bbp_d)\equiv (\bbq_1,\bbp_2,\dots,\bbp_d)\equiv(\bbq_1,\bbq_2,\bbp_3,\dots,\bbp_d)\equiv \cdots\equiv (\bbq_1,\bbq_2,\dots,\bbq_d).
$$
So we have $(\bbp_1,\bbp_2,\dots,\bbp_d)\equiv (\bbq_1,\bbq_2,\dots,\bbq_d)$. We conclude that $h(\bbp_1,\bbp_2\dots, \bbp_d)=h(\bbq_1,\bbq_2,\dots,\bbq_d)$. In other words,
$h$ and $f$ are constant.

 \begin{figure}[b!]
\centering
\includegraphics[width=0.6\columnwidth,trim=0in 0in 0in 0in,clip]{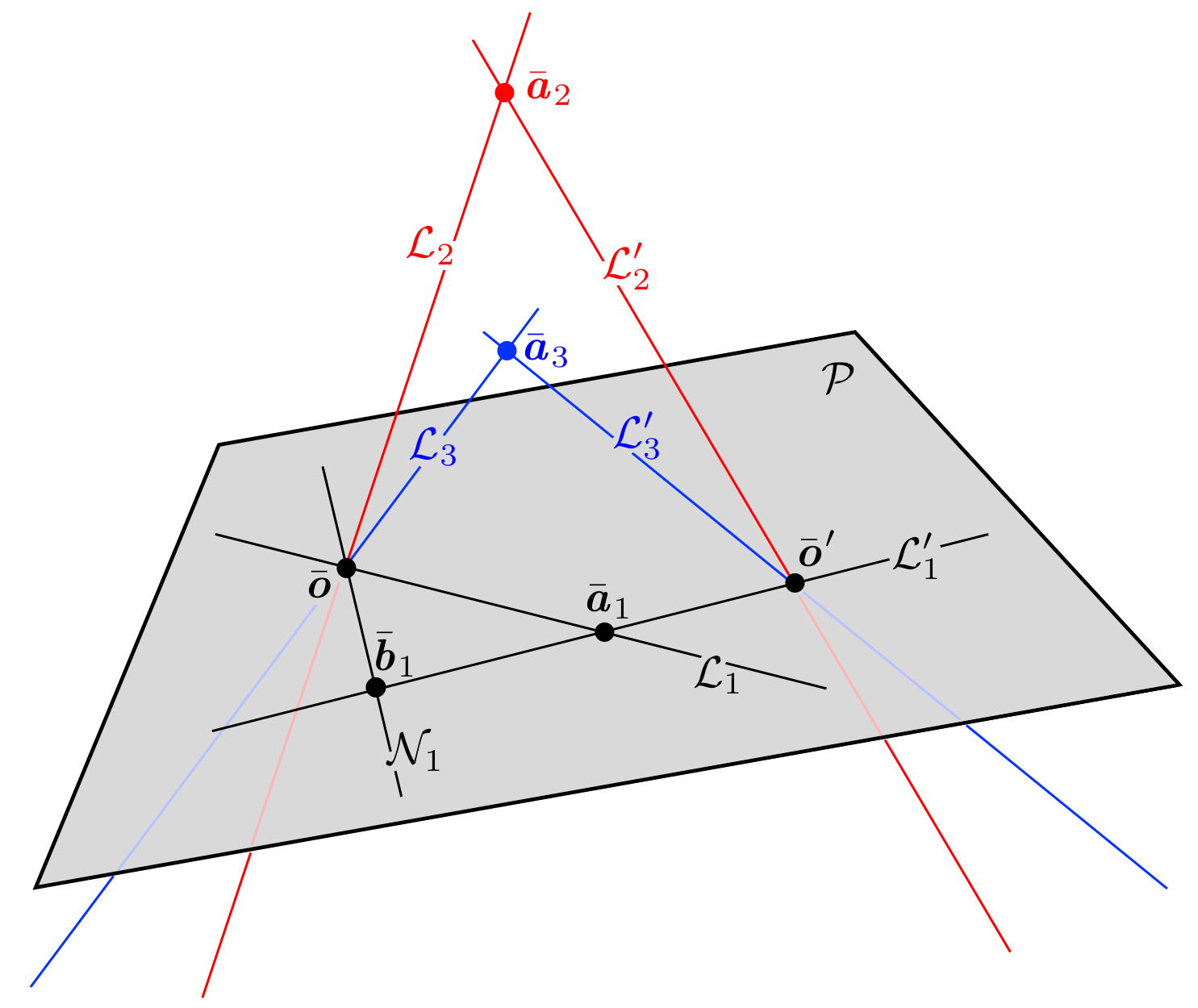}
	\caption{Visualization of geometric quantities used to show the  non-existence of invariants from the projection of arbitrary points in $\PP^3$. Illustrated here are the line-point sets $\mathcal{L}_1 \cap \mathcal{L}'_1 = \bba_1$,  $\mathcal{N}_1 \cap \mathcal{L}'_1 = \bbb_1$, $\mathcal{L}_2 \cap \mathcal{L}'_2 = \bba_2$, and $\mathcal{L}_3 \cap \mathcal{L}'_3 = \bba_3$ that result from the two camera locations $\bar{\bbo}$ and $\bar{\bbo}'$. All quantities in black lie in the plane $\mathcal{P}$.}
	\label{fig:ProjInvariantPoints}
\end{figure}

\subsubsection{Invariants from projections of conics in $\PP^3$}
\label{Sec:InvariantsProjArbitraryConicsP3}
The corollary to a $d$-tuple of arbitrary points in $\PP^3$ is a $d$-tuple of arbitrarily placed conics in $\PP^3$.
We will show that there are no non-constant invariants for $d$ conics in $\PP^3$ that can
be computed from a perspective projection. 
The argument is similar to the argument in Section~\ref{Sec:InvariantsForPointsP3} using an equivalence relation on $d$-tuples of conics in $\PP^2$. We use an explicit geometric construction to show that all $d$-tuples are equivalent.

As before, let $\varZ$ be the variety of conics in $\PP^3$ defined at the end of Section~\ref{3Dcrater}. Then, let $\varV=\varZ^d$ be the variety of $d$-tuples of conics in $\PP^3$ and $\varY=(\PP^5)^d$
be the variety of $d$-tuples of conics in $\PP^2$. We define a relation $\sim$ on $\varY$
by $(\mathcal{A}_1,\mathcal{A}_2,\dots,\mathcal{A}_d)\sim (\mathcal{A}_1',\mathcal{A}_2',\dots,\mathcal{A}_d')$ if there exists two camera projections $\pi$ and $\pi'$ and conics
$(\mathcal{C}_1,\mathcal{C}_2,\dots,\mathcal{C}_d)$ with $\pi(\mathcal{C}_1,\mathcal{C}_2,\dots,\mathcal{C}_d)=(\mathcal{A}_1,\mathcal{A}_2,\dots,\mathcal{A}_d)$ and $\pi'(\mathcal{C}_1,\mathcal{C}_2,\dots,\mathcal{C}_d)=(\mathcal{A}_1',\mathcal{A}_2',\dots,\mathcal{A}_d')$. Let $\equiv$ be the equivalence relation generated by $\sim$.

Suppose that $\mathcal{B}_1,\mathcal{A}_1,\mathcal{A}_2,\dots,\mathcal{A}_d\subseteq \PP^2$
are conics. Let $\ell^{(1)},\ell^{(2)}$ be two lines in $\PP^2$ that
are tangent to both $\mathcal{A}_1$ and $\mathcal{B}_1$
with $\mathcal{A}_1,\mathcal{B}_1$ within the same region of $\PP^2\setminus (\ell^{(1)}\cup \ell^{(2)})$ (see Fig.~\ref{fig:CountingConicInvariant2}). 
 \begin{figure}[b!]
\centering
\includegraphics[width=0.5\columnwidth,trim=0in 0in 0in 0in,clip]{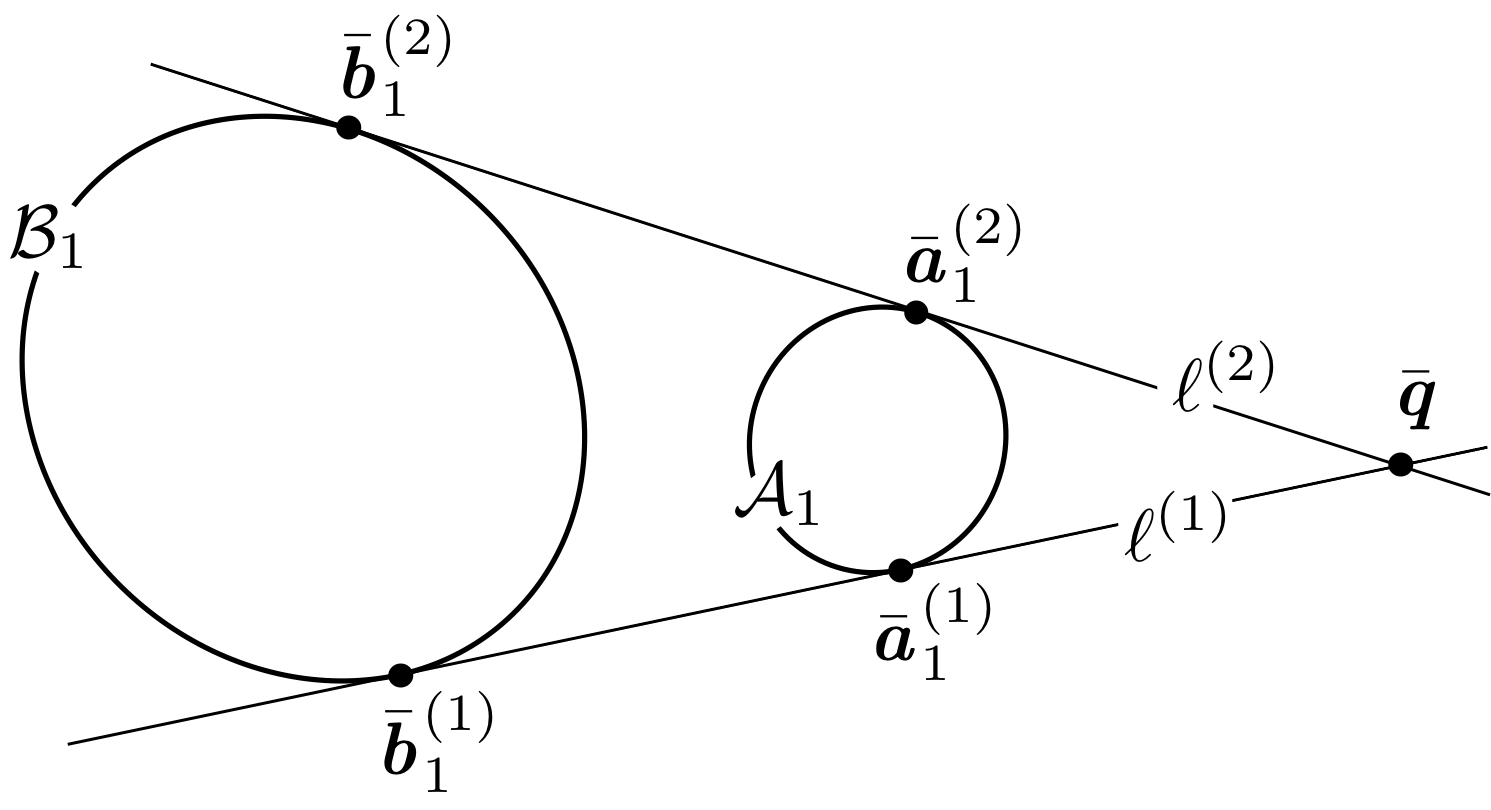}
	\caption{Visualization of geometric quantities used to show the  non-existence of invariants from the projection of arbitrary conics in $\PP^3$. The conics and lines in this graphic all lie within the image plane. The backprojection of these planar quantities into $\PP^3$ form quadric cones (for $\mathcal{A}_1$ and $\mathcal{B}_1$) or  planes (for $\ell^{(1)}$  and $\ell^{(2)}$)}.
	\label{fig:CountingConicInvariant2}
\end{figure}
We denote the point at which $\ell^{(j)}$ is tangent ${\mathcal A}_1$ (respectively ${\mathcal B}_1$) by $\bba_1^{(j)}$ (respectively $\bbb_1^{(j)}$).
Also, let $\bbq$ be the intersection point of $\ell^{(1)}$ and $\ell^{(2)}$. Choose a point $\bbo\in \PP^3$
(the center of the camera) and let $\pi:\PP^3\setminus\{\bbo\}\to \PP^2$ be the camera projection. Let
$\mathcal{X}_i=\pi^{-1}(\mathcal{A}_i)\cup \{\bbo\}$ be the cone with top $\bbo$ that corresponds to ${\mathcal A}_i\subset \PP^2$.
Also, define $\mathcal{Y}_1=\pi^{-1}(\mathcal{B}_1)\cup\{\bbo\}$.
The lines $\ell^{(j)}$  correspond to the plane
$\mathcal{L}^{(j)}=\pi^{-1}(\ell^{(j)})\cup \{\bbo\}$ for $j=1,2$.
The planes $\mathcal{L}^{(1)}$ and ${\mathcal L}^{(2)}$
intersect in the line $\mathcal{Q}=\pi^{-1}(\bbq)\cup\{\bbo\}$, as shown  in Fig.~\ref{fig:ConesInvariant3D}.
We choose another point $\bbo'$ on the line $\mathcal{Q}$ and $\pi':\PP^3\setminus\{\bbo'\}\to \PP^2$ is the camera perspective projection with respect to the camera position $\bbo'$.
We choose some random plane ${\mathcal P}$ in $\PP^3$.
Then $\mathcal{C}_j=\mathcal{P}\cap \mathcal{X}_j$ is a conic in $\PP^3$ and $\pi(\mathcal{C}_j)=\mathcal{A}_j$.
Let $\mathcal{X}_j'$ be the cone with top $\bbo'$ through $\mathcal{C}_j$.
By construction, we have $\pi(\mathcal{C}_1,\mathcal{C}_2,\dots,\mathcal{C}_d)=(\mathcal{A}_1,\mathcal{A}_2,\dots,\mathcal{A}_d)$
and $\pi'(\mathcal{C}_1,\mathcal{C}_2,\dots,\mathcal{C}_d)=(\mathcal{A}_1',\mathcal{A}_2',\dots,\mathcal{A}_d')$ for some conics $\mathcal{A}_1',\dots,\mathcal{A}_d'$ in $\PP^2$. We will construct a conic $\mathcal{D}_1$ in $\PP^3$ with $\pi(\mathcal{D}_1)=\mathcal{B}_1$ and $\pi'(\mathcal{D}_1)=\mathcal{A}_1'$.
Note that $\mathcal{X}_1'$ is a cone with top $\bbo'$ that is
tangent to the planes $\mathcal{L}^{(1)}$ and $\mathcal{L}^{(2)}$.
The intersection $\mathcal{S}^{(j)}=\mathcal{Y}_1\cap \mathcal{L}^{(j)}=\pi^{-1}(\bbb^{(j)}_1)\cup \{\bbo\}$ is a line through $\bbo$,
and
$\mathcal{T}^{(j)}=\mathcal{X}_1'\cap \mathcal{L}^{(j)}$ is a line through $\bbo'$ for $j=1,2$. Let $\bbr^{(j)}$ be the intersection point of the lines $\mathcal{S}^{(j)}$ and $\mathcal{T}^{(j)}$
in the plane $\mathcal{L}^{(j)}$
for $j=1,2$. Then we have $\pi(\bbr^{(j)})=\bbb^{(j)}_1$.
 Moreover
we choose a point $\bbr^{(3)}\neq \bbr^{(1)},\bbr^{(2)}$ that lies on $\mathcal{Y}_1\cap \mathcal{X}_1'$.  Let $\mathcal{R}$ be the plane through $\bbr^{(1)},\bbr^{(2)},\bbr^{(3)}$. Define $\mathcal{D}_1=\mathcal{R}\cap \mathcal{X}_1'$. Then $\mathcal{D}_1$ is tangent to the planes $\mathcal{L}^{(1)}$ and $\mathcal{L}^{(2)}$ at the points $\bbr^{(1)}$ and $\bbr^{(2)}$ respectively. Now $\pi(\mathcal{D}_1)$
and $\mathcal{B}_1$ both are tangent to $\ell^{(1)}$ and $\ell^{(2)}$ at the points
$\bbb^{(1)}=\pi(\bbr^{(1)})$ and $\bbb^{(2)}=\pi(\bbr^{(2)})$ respectively,
and both contain the point $\pi(\bbr^{(3)})$.
It follows that $\pi(\mathcal{D}_1)=\mathcal{B}_1$ because
a conic in $\PP^2$ is determined by $3$ points
and the tangent lines at 2 of the points.
One would expect the intersection $\mathcal{Y}_1\cap \mathcal{X}_1'$ of two quadratic surfaces to be a curve of degree $4$.
In this case, the curve is reducible, and a union of two conics, and $\mathcal{D}_1$ is one of them.
 \begin{figure}[h!]
\centering
\includegraphics[width=0.75\columnwidth,trim=0in 0in 0in 0in,clip]{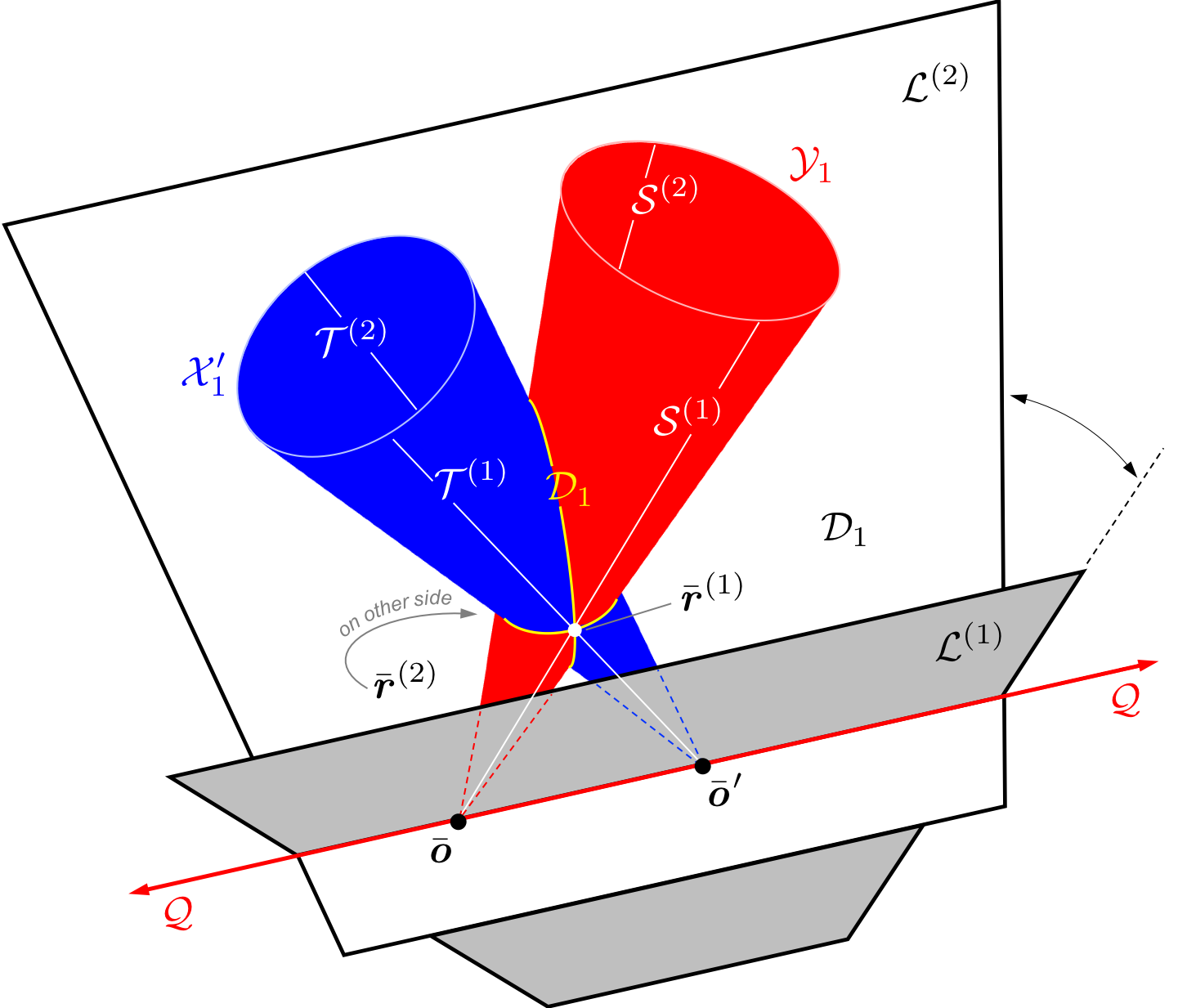}
	\caption{Visualization of geometric quantities used to show the  non-existence of invariants from the projection of arbitrary conics in $\PP^3$. The lines $\mathcal{T}^{(1)}$  and $\mathcal{S}^{(1)}$ lie in plane $\mathcal{L}^{(1)}$ (lines are white, plane is gray). The lines $\mathcal{T}^{(2)}$  and $\mathcal{S}^{(2)}$ lie in plane $\mathcal{L}^{(2)}$.}
	\label{fig:ConesInvariant3D}
\end{figure}

It follows that
$(\mathcal{B}_1,\mathcal{A}_2,\mathcal{A}_3,\dots,\mathcal{A}_d)=\pi(\mathcal{D}_1,\mathcal{C}_2,\dots,\mathcal{C}_d)$
and $(\mathcal{A}_1',\mathcal{A}_2',\dots,\mathcal{A}_d')=\pi'(\mathcal{D}_1,\mathcal{C}_2,\dots,\mathcal{C}_d)$, and therefore $(\mathcal{B}_1,\mathcal{A}_2,\dots,\mathcal{A}_d)\sim (\mathcal{A}_1',\mathcal{A}_2',\dots,\mathcal{A}_d')$. If we combine this with
$(\mathcal{A}_1',\mathcal{A}_2,\dots,\mathcal{A}_d')\sim (\mathcal{A}_1,\mathcal{A}_2,\dots,\mathcal{A}_d)$
then we get $(\mathcal{B}_1,\mathcal{A}_2,\dots,\mathcal{A}_d)\equiv (\mathcal{A}_1,\mathcal{A}_2,\dots,\mathcal{A}_d)$. Repeating the argument shows
that 
$$
(\mathcal{A}_1,\mathcal{A}_2,\dots,\mathcal{A}_d)\equiv (\mathcal{B}_1,\mathcal{B}_2,\dots,\mathcal{B}_d)
$$
for all conics $\mathcal{A}_1,\mathcal{A}_2,\dots,\mathcal{A}_d,\mathcal{B}_1,\mathcal{B}_2,\dots,\mathcal{B}_d$.
This implies that there is no nonconstant rational invariant for $d$ conics in $\PP^3$ that can be computed from a projection.

That no rational invariants exist for the projection of $d$ arbitrary conics is an important finding (which we believe to be a novel result). This clearly precludes the use of invariants for lost-in-space crater identification about an arbitrarily shaped body. Fortunately, celestial bodies large enough to exhibit substantial cratering do not have an arbitrary shape. The surface of these bodies can usually be modeled regionally---if not globally---as a nondegenerate quadric surface. Specifically, we note that planets, moons, and dwarf planets are generally ellipsoidal in global shape \cite{Melosh:2011}  (this is certainly the case for the Moon), while the smaller minor planets are often globally irregular but regionally ellipsoidal. Comets nuclei often show no such structure. Further, many large bodies (e.g., the Moon) appear nearly planar over sufficiently small portions of the surface. We know from before that invariants exist for coplanar conics (Section~\ref{Sec:InvariantsConicsP2}). We now show that invariants also exist for non-coplanar conics lying on a quadratic surface (a quadric).

\subsection{Counting invariants for conics on a nondegenerate quadric surface}
\label{Sec:InvariantsProjQuadSurfConicsP3}
As we have seen, there are no nontrivial invariants for $d$-tuples of arbitrary conics in 3D space that can be computed from a camera image. In the case of craters on the Moon, the conics we are interested in are not arbitrary, but lie on the surface of the Moon. Thus, instead of considering $d$-tuples of arbitrary conics,
we consider the variety $\varV$ of $d$-tuples of conics for which there exists a nondegenerate quadric surface that contains all of them. We find projective invariants to exist for $d \geq 3$ conics lying on the same nondegenerate quadric surface.

\subsubsection{Pairs of conics on a nondegenerate quadric surface}
First consider the case $d=2$. Suppose that $\mathcal{C}_1$ and $\mathcal{C}_2$ are two conics that lie in the planes given by the linear equations $f_1=0$ and $f_2=0$ respectively. Then $\mathcal{C}_1$ and $\mathcal{C}_2$ lie in the degenerate quadric surface $f_1f_2=0$. Suppose both conics also lie on a nondegenerate surface defined by a quadratic equation $g=0$. This surface is not unique, because for every $t$, the surface
defined by $g+t f_1f_2=0$ contains both conics.
If two conics $\mathcal{C}_1$ and $\mathcal{C}_2$ lie on a nondegenerate surface defined by $g=0$,
then $\mathcal{C}_1$ is defined by $f_1=g=0$
and $\mathcal{C}_2$ is defined by $f_2=g=0$.
The intersection $\mathcal{C}_1\cap \mathcal{C}_2$ is defined by $f_1=f_2=g=0$ and consists of two (possibly complex) points. 
On the other hand, if $\mathcal{C}_1\cap \mathcal{C}_2$ consists of two points, one can show that both conics lie on a nondegenerate quadric surface. To find a pair $(\mathcal{C}_1,\mathcal{C}_2)$ in $\varV$, we can choose the conic $\mathcal{C}_1$ arbitrarily in the 8-dimensional variety of conics, we choose the plane $\mathcal{P}_2$ that contains  $\mathcal{C}_2$ in the 3-dimensional space of hyperplanes, Now the conic $\mathcal{C}_2$ is an element of the $5$-dimensional variety of conics in $\mathcal{P}_2$, but there are two constraints  because the conic has to go through the two points of $\mathcal{C}_1\cap \mathcal{P}_2$.
So the dimension of the variety $\varV$ is $8+3+(5-2)=14$. The 15-dimensional group ${\rm PGL}_4$ acts on $\varV$. Let $\varH$ be the connected component of the identity in the stabilizer of $(\mathcal{C}_1,\mathcal{C}_2)\in \varV$. The conics $\mathcal{C}_1$ and $\mathcal{C}_2$ intersect in two (possibly complex) points $\bbc_1$ and $\bbc_2$.
So $g$ will fix the points $\bbc_1$ and $\bbc_2$. If $\mathcal{L}$ is the line through $\bbc_1$ and $\bbc_2$ then $g$ will map $\mathcal{L}$ to itself. If $\bbc_3$ is a 3rd point on $\mathcal{L}$ then $g$ may map $\bbc_3$ to another point of $\mathcal{L}$. However,
if $g$ also fixes the point $\bbc_3$ then a similar argument as in Section~\ref{Sec:ConicInvariantsInP3} shows that $g$ must fix the planes $\mathcal{P}_1$ and $\mathcal{P}_2$ pointwise, and must be the identity. This shows that the stabilizer of the pair $(\mathcal{C}_1,\mathcal{C}_2)$ is at most $1$-dimensional. Since the generic stabilizer has dimension at most $1$, the dimension of a generic orbit is at least $15-1=14$.
Since $\dim \varV=14$, this implies that there are no rational invariants for a pair of conics lying on a nondegenerate quadric surface.

\subsubsection{Many ($d\geq3$) conics on a nondegenerate quadric surface}
Let us now assume that $d\geq 3$. If a quadric surface $S$ contains 3 distinct conics $\mathcal{C}_1,\mathcal{C}_2,\mathcal{C}_3$
then $S$ is the unique quadric surface that contains these three conics.
To see this, suppose that $S'$ is another quadric surface through the 3 conics.
Suppose that $S$ and $S'$
are defined by the equations $h=0$ and $h'=0$ respectively,  where $h$ and $h'$  are homogeneous quadratic polynomials in $4$ variables.
Let $\mathcal{P}_i$ be the plane through $\mathcal{C}_i$ given by $f_i=0$ where $f_i$ is a linear function. Let $\bba\in \PP^3$.
We can multiply $h$ and $h'$ with nonzero scalars such that $(h-h')(\bba)=0$.
In the plane $\mathcal{P}_i$, the restriction
of the quadratic polynomial $h-h'$ vanishes on $\mathcal{C}_i\cup \{\bba\}$. This implies that $h-h'$ is divisible by $f_i$ for $i=1,2,3$.
But then $h-h'$ is divisible by $f_1f_2f_3$
and $h-h'$ must be zero.

To parametrize $d$ conics that lie on a quadric surface, we can first choose the surface that is given by a quadratic polynomial in 4 variables up to a scalar. Such a polynomial has 10 coefficients, so the quadric surface is determined by $10-1=9$ parameters. Now, each of the conics is determined by a hyperplane section of the quadric surface. Hyperplanes are parameterized by points in $\PP^3$. So
the variety $\varV$ of $d$-tuples of conics that lie on a common quadric surface has dimension $9+3d$.
The stabilizer of a generic point in $\varV$ is finite, so the dimension of a generic orbit is $15=\dim {\rm PGL}_4$. So the number of independent rational invariants is $(9+3d)-15=3d-6$ for $d\geq 3$.

\section{Computing Invariants from Crater Rims in an Image}
\label{Sec:ComputingInvariantsTop}
The results of Section~\ref{Sec:ExistenceOfInvariantsTopLevel} established the existence of invariants for conics lying on either a plane or a nondegenerate quadric surface. While the coordinate-free approach used in the previous section is a powerful tool for studying such invariants, we must ultimately impose a coordinate system for practical computation of these invariants from data. Doing so is straightforward, but does require some additional mathematical machinery. The details are now discussed.

\subsection{Geometry of Invariants for Non-Coplanar Conics}
\label{Sec:NonCoplanarInvariants}
\subsubsection{Pair of Arbitrary Non-Coplanar Conics}
\label{Sec:GeomArbitraryNonCoplanarConics}
Two arbitrary 3D conics do not generally intersect one another, even over the complex numbers. This is easy to see through the intersection of surfaces (illustrated in Fig.~\ref{fig:NonCoplanarConicGeneric}). The line formed by the intersection of the two conic planes intersects each conic at two places, producing four intersection points. The cross-ratio of these four colinear points is the unique 3D invariant for a pair of non-coplanar conics. This was observed in \cite{Maybank:1992} and \cite{Quan:1995b}, which we now rederive by other means as we build towards a methods for computing projective invariants. Further, the results of Section~\ref{Sec:ConicInvariantsInP3} tell us that this is the only 3D invariant for a pair of non-coplanar conics. We also show that this 3D invariant does not lead to a useful projective invariant that may be constructed from an image of these two conics.

\begin{figure}[b!]
\centering
\includegraphics[width=0.45\columnwidth,trim=0in 0in 0in 0in,clip]{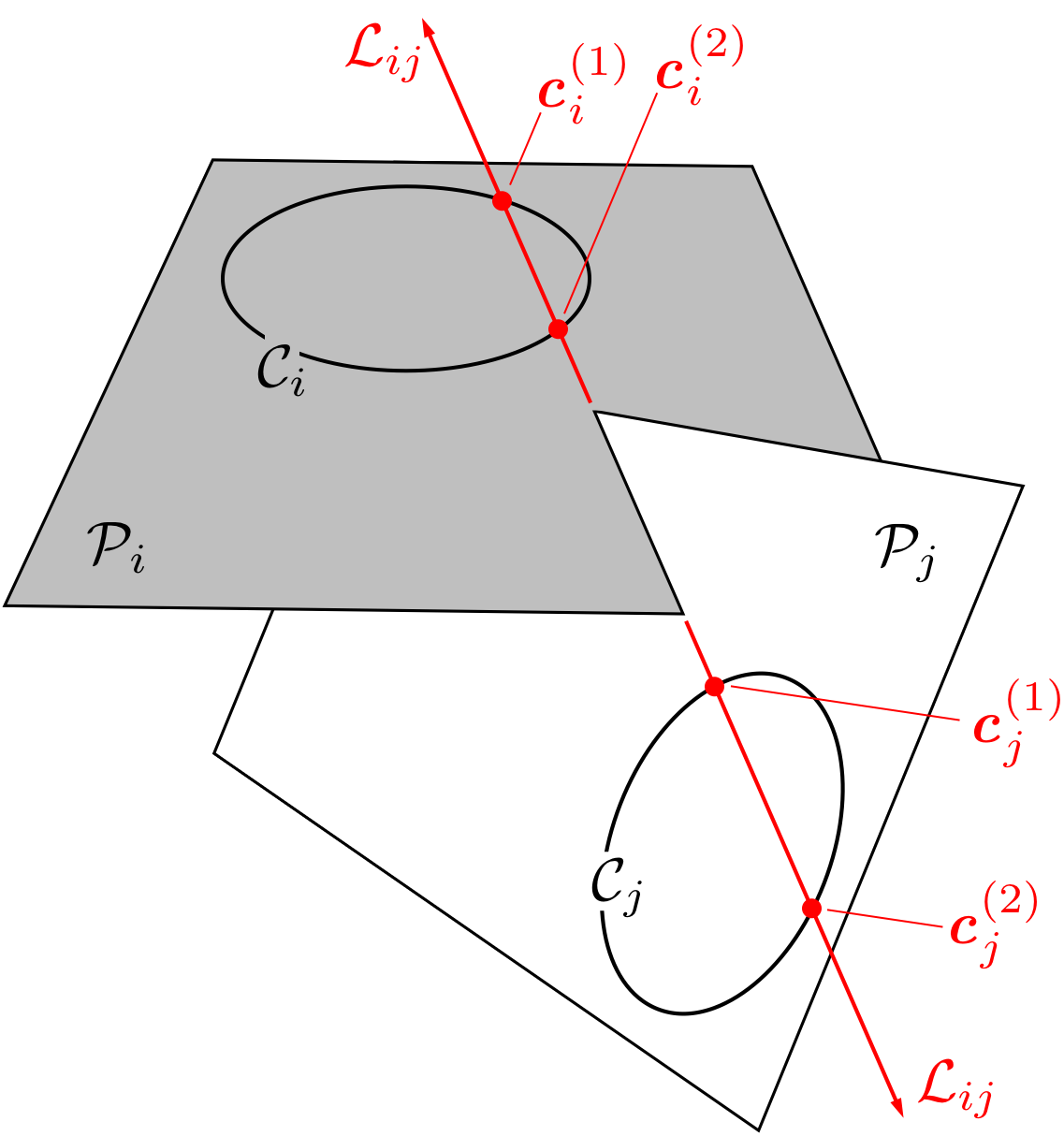}
	\caption{Two arbitrarily placed non-coplanar conics do not intersect one another. The common 3D line $\mathcal{L}_{ij}$ formed by the intersection of their planes intersects each of the two conics in two places, creating four (usually distinct) points. The two intersection points for a particular conic are complex valued if $\mathcal{L}_{ij}$ does not physically intersect that conic. The four colinear intersection points (possibly complex valued) may be used to form a cross-ratio, which is the single unique 3D invariant for a pair of 3D conics.}
	\label{fig:NonCoplanarConicGeneric}
\end{figure}

Suppose we have crater $i$ described by the 3D conic $\mathcal{C}_i$ that lies in plane $\mathcal{P}_i$. Let the Moon-centered quadric cone of crater $i$ be given by $\mathcal{X}_i$ as described in Eq.~\ref{eq:QuadricConeCrateri}. By construction, this cone must pass through the conic locus $\mathcal{C}_i$, and we interpret $\mathcal{C}_i$ as the conic section from Eq.~\ref{eq:ConicLocusConicSection1}.

Now, suppose we have two craters: crater $i$ and crater $j$. We compute their intersection by substitution of Eq.~\ref{eq:ConicLocusConicSection1} as,
\begin{equation}
    \mathcal{C}_i \cap  \mathcal{C}_j = \left( \mathcal{P}_i \cap \mathcal{X}_{i} \right) \cap \left( \mathcal{P}_j \cap \mathcal{X}_{j} \right) = \left( \mathcal{P}_i \cap \mathcal{P}_{j} \right) \cap \mathcal{X}_{i} \cap  \mathcal{X}_{j}
\end{equation}
Define the  3D line $\mathcal{L}_{ij}$ as the intersection of planes $\mathcal{P}_i$ and $\mathcal{P}_j$,
\begin{equation}
    \label{eq:3DLineLij}
    \mathcal{L}_{ij} =  \mathcal{P}_i \cap \mathcal{P}_j
\end{equation}
such that
\begin{equation}
    \mathcal{C}_i \cap  \mathcal{C}_j = \mathcal{L}_{ij} \cap \mathcal{X}_{i} \cap  \mathcal{X}_{j} = \left( \mathcal{L}_{ij}  \cap \mathcal{X}_{i} \right) \cap \left( \mathcal{L}_{ij}  \cap \mathcal{X}_{j} \right)
\end{equation}

The intersection of a line with a quadric cone, e.g. $\mathcal{L}_{ij}  \cap \mathcal{X}_{i}$, produces two points. Define the intersection points  of this line with crater $i$ as $\{\bc^{(1)}_i,\bc^{(2)}_i\}$. It is easy to see, as illustrated in Fig.~\ref{fig:NonCoplanarConicGeneric}, that these intersection  points are not generally the same for two arbitrary conics. Therefore,
\begin{equation}
    \mathcal{C}_i \cap  \mathcal{C}_j = \{\bc^{(1)}_i,\bc^{(2)}_i\} \cap \{\bc^{(1)}_j,\bc^{(2)}_j\} = \varnothing
\end{equation}
Observe that the four points  $\{\bc^{(1)}_i,\bc^{(2)}_i,\bc^{(1)}_j,\bc^{(2)}_j\}$ are colinear since they all lie on the line $\mathcal{L}_{ij}$. They are also distinct, meaning their cross ratio $\rho_{ij}$ is a 3D invariant of the conic pair \cite{Quan:1995b}
\begin{equation}
    \rho_{ij} = \text{Cr}(\bc^{(1)}_i,\bc^{(2)}_i,\bc^{(1)}_j,\bc^{(2)}_j)
\end{equation}

The difficulty with this 3D invariant is that it is not recoverable from a single image of the two conics. The two 3D conics will project into two coplanar conics in the image plane. The image conics have four (often complex valued) intersection points, which are not necessarily colinear. In the absence of other constraints, these intersection points in the image plane are not related to the 3D line $\mathcal{L}_{ij}$  or its intersection points $\{\bc^{(1)}_i,\bc^{(2)}_i,\bc^{(1)}_j,\bc^{(2)}_j\}$ with the two 3D conics. This is because the two conics do not actually intersect one another. That no projective invariant exists from a pair of non-coplanar conics agrees with the findings of Section~\ref{Sec:InvariantsProjArbitraryConicsP3}. The non-existence of projective invariants for three or more arbitrarily placed non-coplanar conics follows by similar arguments.

In the special case where the conics do intersect one another, their intersection is preserved when imaged from an arbitrary pose by a projective camera. One of the most flexible ways to ensure two 3D conics intersect over the complex numbers is to constrain them to lie on a nondegenerate quadric surface (e.g., sphere, ellipsoid). This is now shown.

\subsubsection{Pair of Non-Coplanar Conics on a Nondegenerate Quadric Surface}
\label{Sec:GeomQuadSurfNonCoplanarConics}
Consider craters modeled as conics lying on a nondegenerate quadric surface $\mathcal{S}$. For the case of lunar craters, we  observe that the Moon is very nearly a sphere. Therefore, to a good approximation, define the surface of Moon by the quadric locus parameterized by $\bQ_M$,
\begin{equation}
    \mathcal{S} = \left\{ \bar{\bxi} \in \mathbb{P}^3 \; | \; \bar{\bxi}^T \bQ_M  \bar{\bxi} =  0\right\}
\end{equation}
For a spherical moon of radius $R_M$, 
\begin{equation}
    \bQ_M = \left[ \begin{array}{c c c}
          R^{-2}_M \bI_{3 \times 3} & \textbf{0}_{3 \times 1} \\
          \textbf{0}_{1 \times 3} & -1
   \end{array} \right]
\end{equation}

Now, consider crater $i$ lying on plane $\mathcal{P}_i$.
For the 3D rim of a crater $i$ to also be on the surface of the Moon, it must lie on the intersection of this plane with the $\mathcal{S}$. Therefore, the 3D conic locus for crater $i$ must be the intersection
\begin{equation}
    \label{eq:CraterConicSurfaceIntersect}
    \mathcal{C}_i = \mathcal{P}_i \cap \mathcal{S}
\end{equation}
which produces a circular crater if $\mathcal{S}$ is a sphere. An ellipsoidal (non-spherical) body would generally produce an elliptical crater. The discussion that follows holds for both spherical and ellipsoidal bodies.

Further, as in Section~\ref{Sec:GeomArbitraryNonCoplanarConics} and described in Eq.~\ref{eq:ConicLocusConicSection1}, the crater may also be viewed as the conic section produced by intersecting plane $\mathcal{P}_i$ with the Moon-centered quadric cone $\mathcal{X}_i$. However, since the crater must lie on both $\mathcal{X}_i$ and $\mathcal{S}$, it is clear that we can also write the 3D crater conic as,
\begin{equation}
    \mathcal{C}_i = \mathcal{X}_i \cap \mathcal{S}
\end{equation}
Therefore, we see that 
\begin{equation}
    \mathcal{C}_i = \mathcal{X}_i \cap \mathcal{S} = \mathcal{P}_i \cap \mathcal{S}
\end{equation}

Suppose we have two craters: $\mathcal{C}_i$ and $\mathcal{C}_j$. Because $\mathcal{C}_i$ and $\mathcal{C}_j$ are formed by the intersections of planes $\mathcal{P}_i$ and $\mathcal{P}_j$ with the quadric surface $\mathcal{S}$, we observe that,
\begin{equation}
    \label{eq:ConicIntersectionPlanes}
    \mathcal{C}_i \cap  \mathcal{C}_j =  \left( \mathcal{P}_i \cap \mathcal{S} \right) \cap  \left( \mathcal{P}_j \cap \mathcal{S} \right) = \left( \mathcal{P}_i \cap \mathcal{P}_j \right) \cap \mathcal{S}  =  \mathcal{L}_{ij} \cap \mathcal{S} 
\end{equation}
where $\mathcal{L}_{ij}$ is the 3D line formed by the intersection of the two  crater planes (see Eq.~\ref{eq:3DLineLij}).

If the two craters physically intersect, the line $\mathcal{L}_{ij}$ pierces $\mathcal{S}$ and the intersection points are real. If the craters don't physically intersect, the intersection occurs over the complex numbers. Let $\bar{\bs}^{(1)}_{ij}$ and $\bar{\bs}^{(2)}_{ij}$ be the two intersection points
\begin{equation}
    \label{eq:LineMoonIntersect12}
    \{ \bar{\bs}^{(1)}_{ij}, \bar{\bs}^{(2)}_{ij} \} = \mathcal{L}_{ij} \cap \mathcal{S} 
\end{equation}
Therefore, define $\mathcal{P}_{ij}$ as the plane passing through the center of the Moon and the line $\mathcal{L}_{ij}$ (or, equivalently passing through the center of the Moon, $\bar{\bs}^{(1)}_{ij}$, and $\bar{\bs}^{(2)}_{ij}$). Clearly, the numerical stability of computing $\mathcal{P}_{ij}$ in this manner is poor if the craters are close to one another and $\mathcal{P}_{i}$ and $\mathcal{P}_{j}$ are nearly coplanar. Better numeric stability may be achieved by taking a different approach.

\begin{figure}[b!]
\centering
\includegraphics[width=0.8\columnwidth,trim=0in 0in 0in 0in,clip]{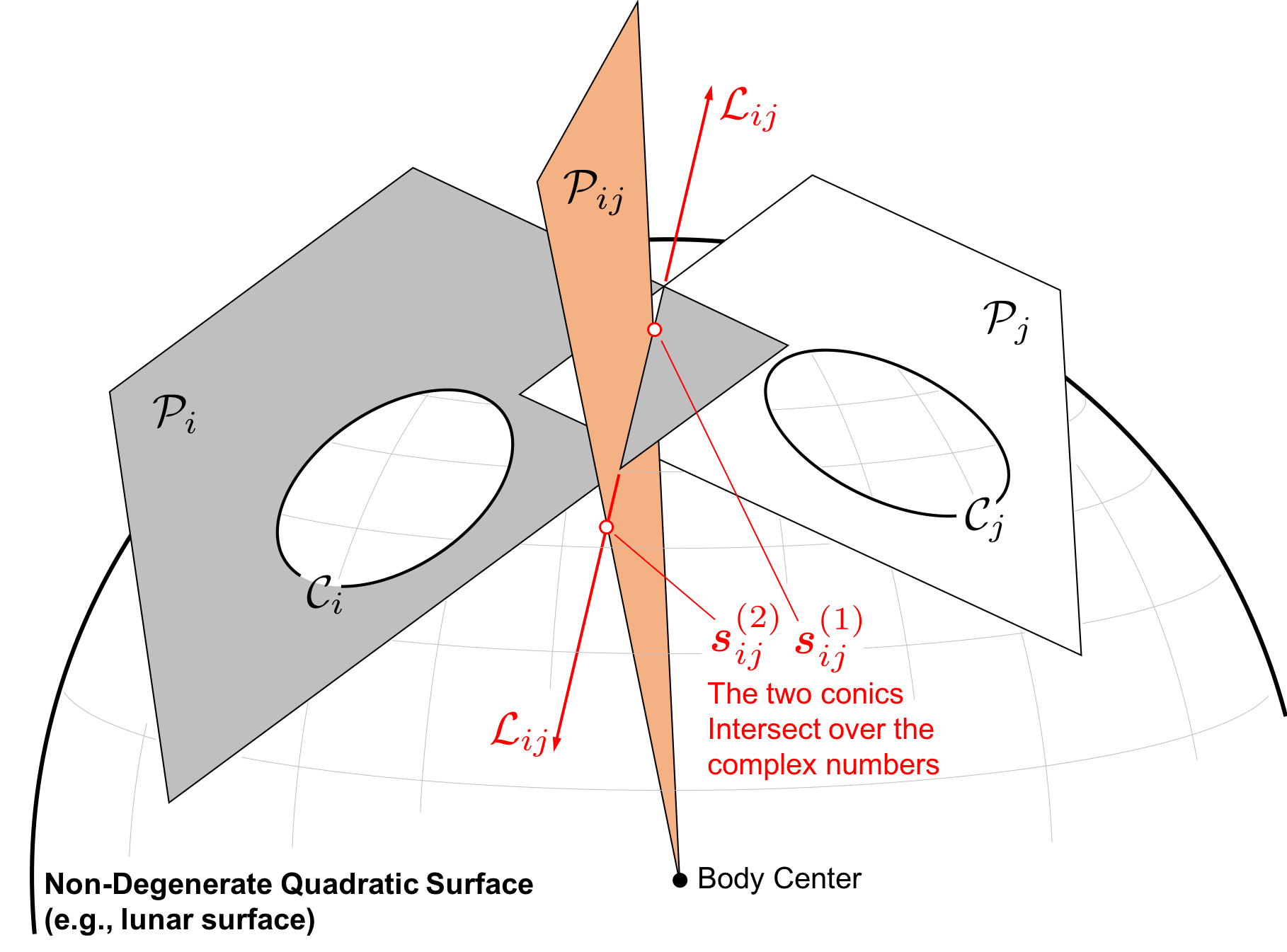}
	\caption{Two non-coplanar conics $\mathcal{C}_{i}$ and $\mathcal{C}_{j}$ lying on a nondegenerate quadric surface must intersect in two points (which happen to be complex valued in this example). The two intersection points lie on the line $\mathcal{L}_{ij}$ formed by the intersection of the two crater planes. The intersection points also lie in the plane $\mathcal{P}_{ij}$ formed by the join of $\mathcal{L}_{ij}$ and the body center.}
	\label{fig:ConicsOnSphere}
\end{figure}

The conic intersection from Eq.~\ref{eq:ConicIntersectionPlanes} may also be written in terms of the quadric cones instead of the planes,
\begin{equation}
    \mathcal{C}_i \cap  \mathcal{C}_j =  \left( \mathcal{X}_{i} \cap \mathcal{S} \right) \cap  \left( \mathcal{X}_{j} \cap \mathcal{S} \right) = \left( \mathcal{X}_{i} \cap \mathcal{X}_{j} \right) \cap \mathcal{S} 
\end{equation}
Without constraints, the intersection of two cones $\mathcal{X}_{i} \cap \mathcal{X}_{j}$ is a set of four lines passing through the center of the Moon that pierce $\mathcal{S}$ at eight different points.  Four of the points will be on the wrong side and come from conics that don't actually exist; these can be ignored. When the conics lie on a nondegenerate quadric surface, we find the other four intersection points are a set of repeated intersections. To see this, combine the above results to find,
\begin{equation}
    \left( \mathcal{X}_{i} \cap \mathcal{X}_{j} \right) \cap \mathcal{S} = \mathcal{C}_i \cap  \mathcal{C}_j = \mathcal{L}_{ij} \cap \mathcal{S} = \{ \bar{\bs}^{(1)}_{ij}, \bar{\bs}^{(2)}_{ij} \}
\end{equation}

The advantage of using the quadric cone intersections to describe $\bar{\bs}^{(1)}_{ij}$ and $\bar{\bs}^{(2)}_{ij}$ is that this remains well defined and more numerically stable as the craters become close and their planes become nearly parallel. This is especially important when viewing craters at a local level, where craters are nearly coplanar.

Since the points $\bar{\bs}^{(1)}_{ij}$ and $\bar{\bs}^{(2)}_{ij}$ lie on $\mathcal{C}_i$, the projection of these points lie on the projection of $\mathcal{C}_i$ (which we called $\mathcal{A}_i$). The same holds for $\mathcal{C}_j$, and its projection $\mathcal{A}_j$. Therefore,the projection of $\bar{\bs}^{(1)}_{ij}$ and $\bar{\bs}^{(2)}_{ij}$ is recoverable as two of the intersection points of the image conics. That is,
\begin{equation}
    \bar{\bu}^{(k)}_{ij} = \bP^M_C \bar{\bs}^{(k)}_{ij} \subset \mathcal{A}_i \cap \mathcal{A}_j, \quad k=1,2
\end{equation}
where $\bP^M_C$ is the projection matrix from Eq.~\ref{eq:ProjectionMatrixDef} and the pixel coordinate $\bar{\bu}^{(k)}_{ij}$ is the projection of $\bar{\bs}^{(k)}_{ij}$ into the image.

The challenge, therefore, is to determine which of the four intersection  points of $\mathcal{A}_i$ and $\mathcal{A}_j$ are the projection of $\bar{\bs}^{(1)}_{ij}$ and $\bar{\bs}^{(2)}_{ij}$. There are six possible combinations of these four points, corresponding to six possibilities for the projection of line $\mathcal{L}_{ij}$ into  the line $\ell_{ij}$. We now discuss how to uniquely compute $\ell_{ij}$.

Consider the two  conics $\mathcal{A}_i$ and $\mathcal{A}_j$ described by the conic locus matrices $\bA_i$ and $\bA_j$. We recall that
\begin{equation}
    \bar{\bu}^T \bA_{i} \bar{\bu} = 0 \text{ and }
    \bar{\bu}^T \bA_{j} \bar{\bu} = 0
\end{equation}
It follows that we can form a pencil of conics parameterized by the scalar ratio $\lambda : \mu$ that also pass through the same four intersection points as $\bA_{i}$  and $\bA_{i}$,
\begin{equation}
    \bar{\bu}^T \left( \lambda \bA_{i} + \mu \bA_{j} \right) \bar{\bu} = 0
\end{equation}
If we force the matrix $\lambda \bA_{i} + \mu \bA_{j}$ to be a degenerate conic,  then it becomes two lines (or a double line) and we may find its intersection with the two conics. We find six possibilities, one of which is the line $\ell_{ij}$ that we seek.

Letting $\mu=1$ for easy computation, we force the conic to be degenerate by setting the determinant to zero,
\begin{equation}
    \text{det}\left( \lambda \bA_{i} + \bA_{j} \right) = 0
\end{equation}
Since both $\bA_{i}$ and $\bA_{j}$ are full rank and well-conditioned, we may rewrite this as
\begin{equation}
    \label{eq:ConicIntersectEigValProb}
    \text{det}\left[ \bA_{j}\left( -\bA_{i} \right)^{-1} - \lambda \bI_{3 \times 3}  \right] = 0
\end{equation}
which is a simple $3 \times 3$ eigenvalue problem. When there is no actual intersection of the two conics in the image, only one of the resulting degenerate conics will produce real valued lines \cite{RichterGebert:2011}. The specific line we seek is $\ell_{ij}$, which is the projection of $\mathcal{L}_{ij}$. As the intersection of two non-coplanar planes, $\mathcal{L}_{ij}$ is real valued by construction. Thus, its projection $\ell_{ij}$ is also real valued. Therefore we always seek the eigenvalue leading to a degenerate conic of real valued lines, discarding the four complex valued line possibilities. It will soon become apparent which eigenvalue to choose.

Therefore, suppose we choose one of the eigenvalues to construct the degenerate conic formed by
\begin{equation}
    \bB_{ij} = \lambda  \bA_{i} + \bA_{j}
\end{equation}
where we observe that $\bB_{ij}$ has rank 2. The remaining steps are now to find the lines described by $\bB_{ij}$, determine if they are real valued, and then determine which one is $\ell_{ij}$. 

A degenerate conic made of two lines, $\bg$ and $\bh$, is defined by the symmetric matrix
\begin{equation}
    \bB_{ij} = \bg \bh^T + \bh \bg^T
\end{equation}
where our task is now to find $\bg$ and $\bh$ given $\bB_{ij}$. There are a variety of ways to accomplish this task. Our specific approach is similar to the framework outlined in \cite{RichterGebert:2011}. Therefore, proceeding in this way, we first recall that the intersection of the two lines in homogeneous coordinates is computed as \cite{Hartley:2003}
\begin{equation}
    \bar{\bz} = \bg \times \bh
\end{equation}
Further, observing that
\begin{equation}
    \left[ \bar{\bz} \times \right] = \bg \bh^T - \bh \bg^T
\end{equation}
we find that that
\begin{equation}
    \bB_{ij} + \left[ \bar{\bz} \times \right] = \left(\bg \bh^T + \bh \bg^T\right) + \left(  \bg \bh^T - \bh \bg^T \right) = 2 \bg \bh^T
\end{equation}
is a rank one matrix with columns proportional to $\bg$ and rows proportional to $\bh$. We must now find $\bz$.

To do this, a quick calculation will confirm that
\begin{equation}
    \bB^{\ast}_{ij} = \left( \bg \bh^T + \bh \bg^T \right)^{\ast} = -\left(\bg \times \bh  \right) \left(\bg \times \bh  \right)^T = - \bar{\bz} \bar{\bz}^T
\end{equation}
Or, in terms of the original ellipses,
\begin{equation}
    \bB^{\ast}_{ij} = \left(  \lambda  \bA_{i} + \bA_{j}  \right)^{\ast} = - \bar{\bz} \bar{\bz}^T
\end{equation}
The diagonals of $-\bB^{\ast}_{ij}$ are the squares of the elements of $\bar{\bz}$. Thus, defining the columns and elements of $\bB^{\ast}_{ij}$ as
\begin{equation}
    \bB^{\ast}_{ij} = \left[ \begin{array}{c c c}
          \bb^{\ast}_1 & \bb^{\ast}_2 & \bb^{\ast}_3
   \end{array} \right] = 
   \left[ \begin{array}{c c c}
          b^{\ast}_{11} & b^{\ast}_{12} & b^{\ast}_{13} \\
          b^{\ast}_{21} & b^{\ast}_{22} & b^{\ast}_{23} \\
          b^{\ast}_{31} & b^{\ast}_{32} & b^{\ast}_{33} 
   \end{array} \right]
\end{equation}
any column of $\bB^{\ast}_{ij}$ may be scaled to find $\bar{\bz}$ according to
\begin{equation}
    \bar{\bz}  = -\bb^{\ast}_k / \sqrt{-b^{\ast}_{kk}}
\end{equation}
where any $k\in\{1,2,3\}$ with $b_{kk} \neq 0$ will work. Best numerical performance is achieved by selecting the value for $k$ that yields $\max_k  \| b^{\ast}_{kk} \|$. If the ellipses do not intersect, only one eigenvalue from Eq.~\ref{eq:ConicIntersectEigValProb} will produce a real valued estimate of $\bz$. The eigenvalue to choose is the one that makes the diagonal of $\bB^{\ast}_{ij}$ negative.

Therefore, with  $\bar{\bz}$ known, compute $\bD$
\begin{equation}
    \bD = \bB_{ij} + \left[ \bar{\bz} \times \right] = 2 \bg \bh^T
\end{equation}
we can then compute $\bg$ from any non-zero column of $\bD$ and $\bh$ from any non-zero row of $\bD$. In general, it is best to find the element of $\bD$ with the largest absolute value and pick the corresponding row and column for $\bg$ and $\bh$.

Given the two lines $\bg$ and $\bh$, one of these corresponds to $\ell_{ij}$. The other does not. Each of the lines $\bg$ and $\bh$ divide $\PP^2$ into two regions. Since the line we seek  comes from the projection of $\mathcal{L}_{ij} = \mathcal{P}_i \cap \mathcal{P}_j$,  the two conics must not lie in the same region. That is, we seek a line that passes between the two conics. Only one line will satisfy this condition and this is the line we choose for $\ell_{ij}$.

If we were to compute the intersection of $\ell_{ij}$ with either $\mathcal{A}_i$ or $\mathcal{A}_j$ we would obtain two points that are the projection $\bar{\bs}^{(1)}_{ij}$ and $\bar{\bs}^{(2)}_{ij}$. Fortunately such a computation is not necessary since we work directly with $\ell_{ij}$ in subsequent discussions.

Briefly, we observe that $\mathcal{C}_i$ and the two points $\bar{\bs}^{(1)}_{ij}$ and $\bar{\bs}^{(2)}_{ij}$ all lie in the plane $\mathcal{P}_i$. While a projective invariant exists for two points and a conic (all coplanar) \cite{Forsyth:1991}, the points must not lie on the conic. Since $\bar{\bs}^{(1)}_{ij}$ and $\bar{\bs}^{(2)}_{ij}$ lie on  $\mathcal{C}_i$ by construction, we seek an alternative way of constructing an invariant.

Consider instead the 2D line $\ell_{ij}$ formed by the projection of 3D line $\mathcal{L}_{ij}$ (or, equivalently, by the join of image points $\bar{\bu}^{(1)}_{ij}$ and $\bar{\bu}^{(2)}_{ij}$). While there is not an invariant for a single line and a conic, there is an invariant for two lines and  a conic \cite{Rothwell:1995}. This motivates the study of invariants for a triad of craters.

\subsubsection{Triad of Non-Coplanar Conics on a Nondegenerate Quadric Surface}
\label{Sec:NonCoplanarTriadInvariants}
Suppose a projective camera observes three craters: $\mathcal{C}_i$, $\mathcal{C}_j$, and $\mathcal{C}_k$. The intersection of the corresponding planes ($\mathcal{P}_i$, $\mathcal{P}_j$, and $\mathcal{P}_k$) produces the three 3D lines $\mathcal{L}_{ij}$, $\mathcal{L}_{ik}$, and $\mathcal{L}_{jk}$. Three planes intersect in a point, which is also the location where the 3D line formed by two planes intersects the third plane. This point, defined as $a_{ijk}$, is the apex of the pyramid formed by the three planes,
\begin{equation}
    a_{ijk} 
    = \mathcal{P}_i \cap \mathcal{P}_j \cap \mathcal{P}_k 
    = \mathcal{L}_{ij} \cap \mathcal{P}_k
    = \mathcal{L}_{ik} \cap \mathcal{P}_j
    = \mathcal{L}_{jk} \cap \mathcal{P}_i
\end{equation}

Now, let the three 3D craters $\mathcal{C}_i$, $\mathcal{C}_j$, and $\mathcal{C}_k$ project to 2D conics $\mathcal{A}_i$, $\mathcal{A}_j$, and $\mathcal{A}_k$ in the image. For any given pair of 3D conics (e.g.,  $\mathcal{C}_i$ and $\mathcal{C}_j$), the 3D line through their intersection points (e.g., $\mathcal{L}_{ij}$) is coplanar with both of these 3D conics and it is possible to recover its projection (e.g., $\ell_{ij}$) from just the projected image conics (e.g., $\mathcal{A}_i$ and $\mathcal{A}_j$). Thus, for a triad of observed craters in an image, we may compute the three lines $\ell_{ij}$, $\ell_{ik}$, and $\ell_{jk}$. Since $\mathcal{C}_i$, $\mathcal{L}_{ij}$, $\mathcal{L}_{ik}$ are all coplanar, an invariant exists in the image using $\mathcal{A}_i$, $\ell_{ij}$, and $\ell_{ik}$. The same is true for craters $\mathcal{C}_j$ and $\mathcal{C}_k$.

It is easy to see that such an invariant exists by use of a cross ratio, and this invariant may be computed directly by use of a Cayley-Klein metric. The usual cross-ratio applies to four points on a line. However, by the duality of points and lines in $\mathbb{P}^2$, we can also form a cross-ratio of four lines passing through a point. Therefore, as shown in Fig.~\ref{fig:ConicLineCrossRatio}, consider a conic  $\mathcal{A}_i$. Let the  two lines $\ell_{ij}$ and $\ell_{ik}$ be described by the $3\times1$ vectors $\bl_{ij}$ and $\bl_{ik}$, and let these two lines intersect at the point $\bar{\bp} = \bl_{ij} \times \bl_{ik}$. Using the pole-polar relation for a conic, we may find the two lines from $\bar{\bp}$ that are tangent to the conic $\mathcal{A}_i$, which we call $\bw_1$ and $\bw_2$. Since these four lines go through the point $\bar{\bp}$, we may form a cross ratio that  is a projective invariant
\begin{equation}
    \label{eq:ConicAndTwoLinesCrossRatio}
    \text{Cr}\left( \bl_{ij}, \bw_1,  \bw_2,  \bl_{ik}   \right)
\end{equation}
\begin{figure}[b!]
\centering
\includegraphics[width=0.5\columnwidth,trim=0in 0in 0in 0in,clip]{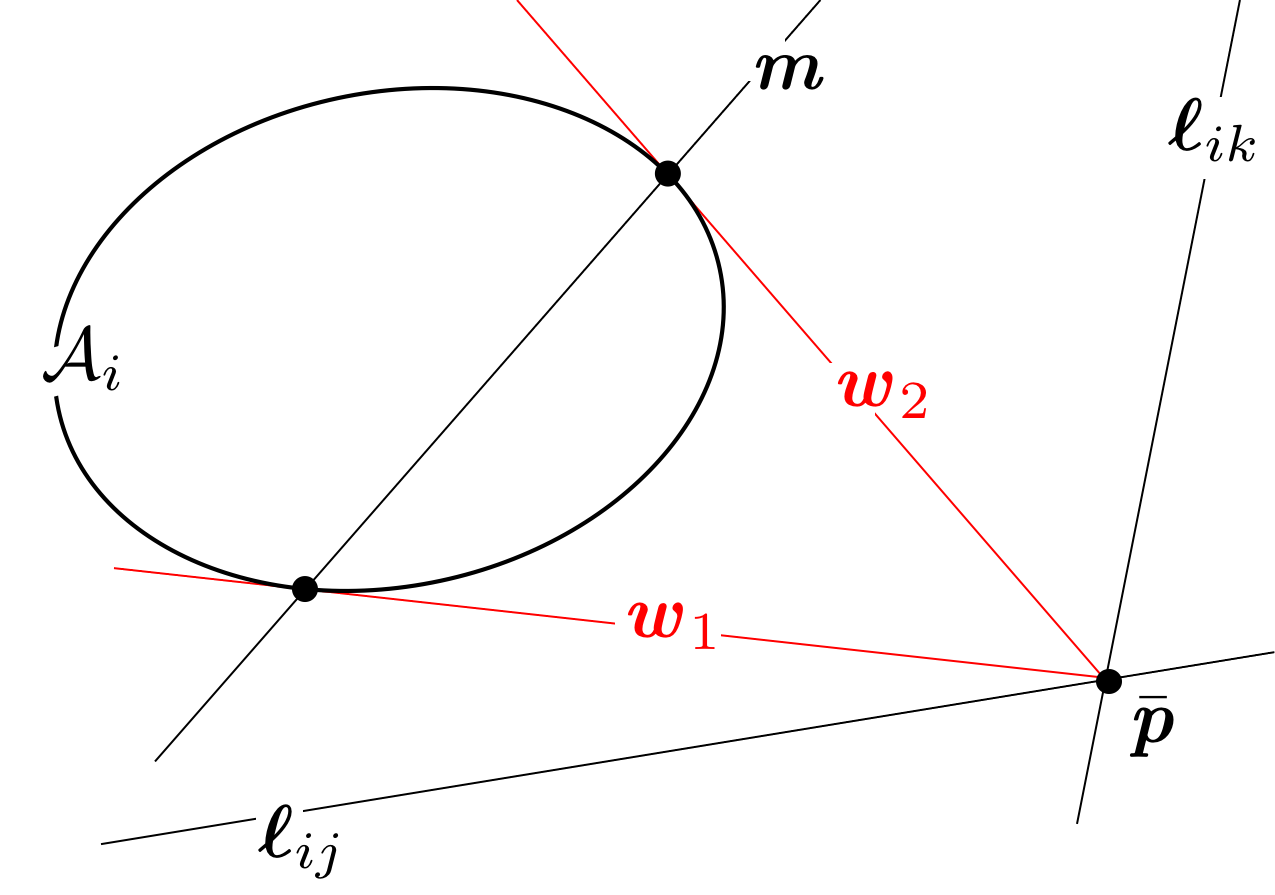}
	\caption{Visualization of contributing components to invariant for a conic $\mathcal{A}_i$ and two lines $\bl_{ij}$ and $\bl_{jk}$. The lines $\bw_1$ and $\bw_2$ pass through $\bp$ and are tangent to $\mathcal{A}_i$. The line $\bbm$ is the polar of point $\bar{\bp}$ with respect to $\mathcal{A}_i$.}
	\label{fig:ConicLineCrossRatio}
\end{figure}
We can now form a classical Cayley-Klein metric as
\begin{equation}
    \label{eq:CayleyKleinCrossRatio}
    d(\bl_{ij}, \bl_{ik}) = \frac{1}{2}\ln\left[ \text{Cr}\left( \bl_{ij}, \bw_1,  \bw_2,  \bl_{ik}   \right) \right]
\end{equation}
which, from hyperbolic geometry, is known to be equivalent to
\begin{equation}
    \label{eq:CayleyKleinToConic}
    \alpha_i = \cosh\left[ d(\bl_{ij}, \bl_{ik}) \right]  =  \frac{  \bl^T_{ij} \bA^{\ast}_{i} \bl_{ik} }{ \sqrt{ \left( \bl^T_{ij} \bA^{\ast}_{i} \bl_{ij} \right) \left( \bl^T_{ik} \bA^{\ast}_{i} \bl_{ik} \right)} }
\end{equation}
where the adjugate matrix $\bA^{\ast}_{i}$ describes the  conic envelope of $\mathcal{A}_i$. We may also write, therefore,
\begin{equation}
    \label{eq:CayleyKleinToConic2}
    \alpha^2_i = \cosh^2 \left[ d(\bl_{ij}, \bl_{ik}) \right]  =   \frac{ \left( \bl^T_{ij} \bA^{\ast}_{i} \bl_{ik} \right)^2 }{ \left( \bl^T_{ij} \bA^{\ast}_{i} \bl_{ij} \right) \left( \bl^T_{ik} \bA^{\ast}_{i} \bl_{ik} \right)}
\end{equation}
Since the cross ratio  from Eq.~\ref{eq:ConicAndTwoLinesCrossRatio} is a projective invariant, it follows immediately that the Cayley-Klein metric from Eq.~\ref{eq:CayleyKleinCrossRatio} and its hyperbolic cosine in Eq.~\ref{eq:CayleyKleinToConic} are also projective invariants.

At first, the Cayley-Klein development may seem of purely academic interest. However, it is of critical importance when building a searchable index for lunar crater identification. It is only through this relation that we fully understand why the metric for $\alpha^2_i$ (the typical invariant found in the literature for a coplanar set of a conic and two lines \cite{Rothwell:1995}) loses descriptiveness for small distances $d(\bl_{ij}, \bl_{ik}) $. The spacecraft navigator will immediately see that this is analogous to how small inter-star angles $\theta$ lose descriptiveness when cataloged as $\cos \theta$ instead of by $\theta$ (i.e., we need to tighten the matching tolerance as $\theta \rightarrow 0$ when using a $\cos \theta$ index). Thus, we prefer to index on $d$ instead of $\alpha^2$.

If we choose to only consider craters that do not physically intersect  (i.e. do not overlap), then we are guaranteed that the cross ratio is positive. This implies that $d \geq 0$ and is real, and that the denominator on  the right-hand side of Eq.~\ref{eq:CayleyKleinToConic} is also real. Since $d\geq 0$, we know that 
\begin{equation}
d = \text{arcosh} \, \alpha_i  = \ln\left[ \alpha_i +\sqrt{\alpha_i^2 - 1} \right]
\end{equation}
Therefore, we may relate the cross-ratio, the Cayley-Klein metric $(\bl_{ij}, \bl_{ik})$, and the invariant $\alpha_i$ according to
\begin{equation}
\ln\left[ \alpha_i +\sqrt{\alpha_i^2 - 1} \right] = d(\bl_{ij}, \bl_{ik}) = \frac{1}{2}\ln\left[ \text{Cr}\left( \bl_{ij}, \bw_1,  \bw_2,  \bl_{ik}   \right) \right] 
\end{equation}
where we choose to construct the invariant to index with the Cayley-Klein metric,
\begin{equation}
    \label{eq:DefJi}
    J_i = d(\bl_{ij}, \bl_{ik}) = \text{acosh}  \left\{ 
    \frac{  \bl^T_{ij} \bA^{\ast}_{i} \bl_{ik} }{ \sqrt{ \left( \bl^T_{ij} \bA^{\ast}_{i} \bl_{ij} \right) \left( \bl^T_{ik} \bA^{\ast}_{i} \bl_{ik} \right)} } \right\} 
\end{equation}
An identical procedure produces corresponding invariants for craters $j$ and $k$
\begin{equation}
    \label{eq:DefJj}
    J_j = d(\bl_{ij}, \bl_{jk}) = \text{acosh}  \left\{ 
    \frac{  \bl^T_{ij} \bA^{\ast}_{j} \bl_{jk} }{ \sqrt{ \left( \bl^T_{ij} \bA^{\ast}_{j} \bl_{ij} \right) \left( \bl^T_{jk} \bA^{\ast}_{j} \bl_{jk} \right)} } \right\} 
\end{equation}
\begin{equation}
    \label{eq:DefJk}
    J_k = d(\bl_{ik}, \bl_{jk}) = \text{acosh}  \left\{ 
    \frac{  \bl^T_{ik} \bA^{\ast}_{k} \bl_{jk} }{ \sqrt{ \left( \bl^T_{ik} \bA^{\ast}_{k} \bl_{ik} \right) \left( \bl^T_{jk} \bA^{\ast}_{k} \bl_{jk} \right)} } \right\} 
\end{equation}

We know from our review of the literature that many previous attempts at crater identification have accidentally employed feature descriptors with elements that are not independent. Thus, before proceeding further, it is necessary to ensure that the invariants $J_i$, $J_j$, and $J_k$ are independent. Since we know from Section~\ref{Sec:InvariantsProjQuadSurfConicsP3} that a $d$-tuple of conics on a quadric surface has $3d-6$ invariants (leading to 9-6=3 invariants for this case), it follows that if the three invariants $J_i$, $J_j$, and $J_k$ are independent then we have found all the algebraically independent invariants that exist. This is now shown.

We observe that the invariants $\alpha_i^2,\alpha_j^2,\alpha_k^2$ (and hence also $\alpha_i,\alpha_j,\alpha_k$ and $J_i$, $J_j$, $J_k$) are algebraically independent. We calculate these invariants for a particular model, since it suffices to show that the invariants are independent when restricted to special subclasses of models. We assume that the lunar surface is the sphere $x^2+y^2+z^2=1$,
and that the three craters are the circles defined by intersecting the sphere with the planes given by $x=t_1$, $y=t_2$ and $z=t_3$ respectively. Within the plane $z=t_3$ we have the crater given by $x^2+y^2=1-t_3^2$ and the lines $x=t_1$ and $y=t_2$. So we get
\begin{equation}
\bA_3=\begin{pmatrix} 
1 & 0 & 0\\
0 & 1 & 0\\
0 & 0 & t_3^2-1\end{pmatrix}\mbox{ and } \bA^*_3=\begin{pmatrix} 
t_3^2-1 & 0 & 0\\
0 & t_3^2-1 & 0\\
0 & 0 & 1\end{pmatrix}
\end{equation}
\begin{equation}
\ell_{13}=\begin{pmatrix}
1\\0\\-t_1
\end{pmatrix}
\mbox{ and }
\ell_{23}=\begin{pmatrix}
0\\
1\\
-t_2
\end{pmatrix}
\end{equation}
Thus, we may compute $\alpha_3^2$
\begin{equation}
\alpha_3^2=\frac{(\ell_{13}\bA_3^*\ell_{23})^2}{(\ell^T_{12}\bA_3^*\ell_{12})(\ell^T_{13}\bA_3^*\ell_{13})}=\frac{t_1^2t_2^2}{(t_1^2+t_3^2-1)(t_2^2+t_3^2-1)}
\end{equation}
and, by symmetry, we also find
\begin{equation}
\alpha_{2}^2=\frac{t_1^2t_3^2}{(t_1^2+t_2^2-1)(t_2^2+t_3^2-1)}
\end{equation}
\begin{equation}
\alpha_1^2=\frac{t_2^2t_3^2}{(t_1^2+t_2^2-1)(t_1^2+t_3^2-1)}
\end{equation}
It is now easy to verify that the Jacobi matrix 
\begin{equation}
J(\alpha_1^2,\alpha_2^2,\alpha_3^2)=\Big(\frac{\partial \alpha_i^2}{\partial t_j}\Big)_{1\leq i,j\leq 3}
\end{equation}
is invertible for some choice of $t_1,t_2,t_3$ (e.g., $t_1=t_2=t_3=1/2$). This implies that $\alpha_1^2,\alpha_2^2,\alpha_3^2$ (and, therefore, $J_i,J_j,J_k$) are algebraically independent.

The utility of this framework is now briefly demonstrated. Consider a regional crater pattern such as the one shown in Fig.~\ref{fig:InvariantProjectionPANGU}, where there is notable curvature of the Moon. This example has about 18.1 deg between the surface normal of crater $\mathcal{A}_j$ and $\mathcal{A}_k$,  thus necessitating the consideration of non-coplanar invariants. Two substantially different views of the same crater pattern are shown. The image on the left is nearly nadir pointing and the image on  the right is pointed over 30 deg off nadir. As can be seen, the non-coplanar invariants $J_1$, $J_2$, and $J_3$ are identical in both images since they are formal projective invariants. This makes evident the power of such invariants for crater pattern recognition.

\begin{figure}[b!]
\centering
\includegraphics[width=1\columnwidth,trim=0in 0in 0in 0in,clip]{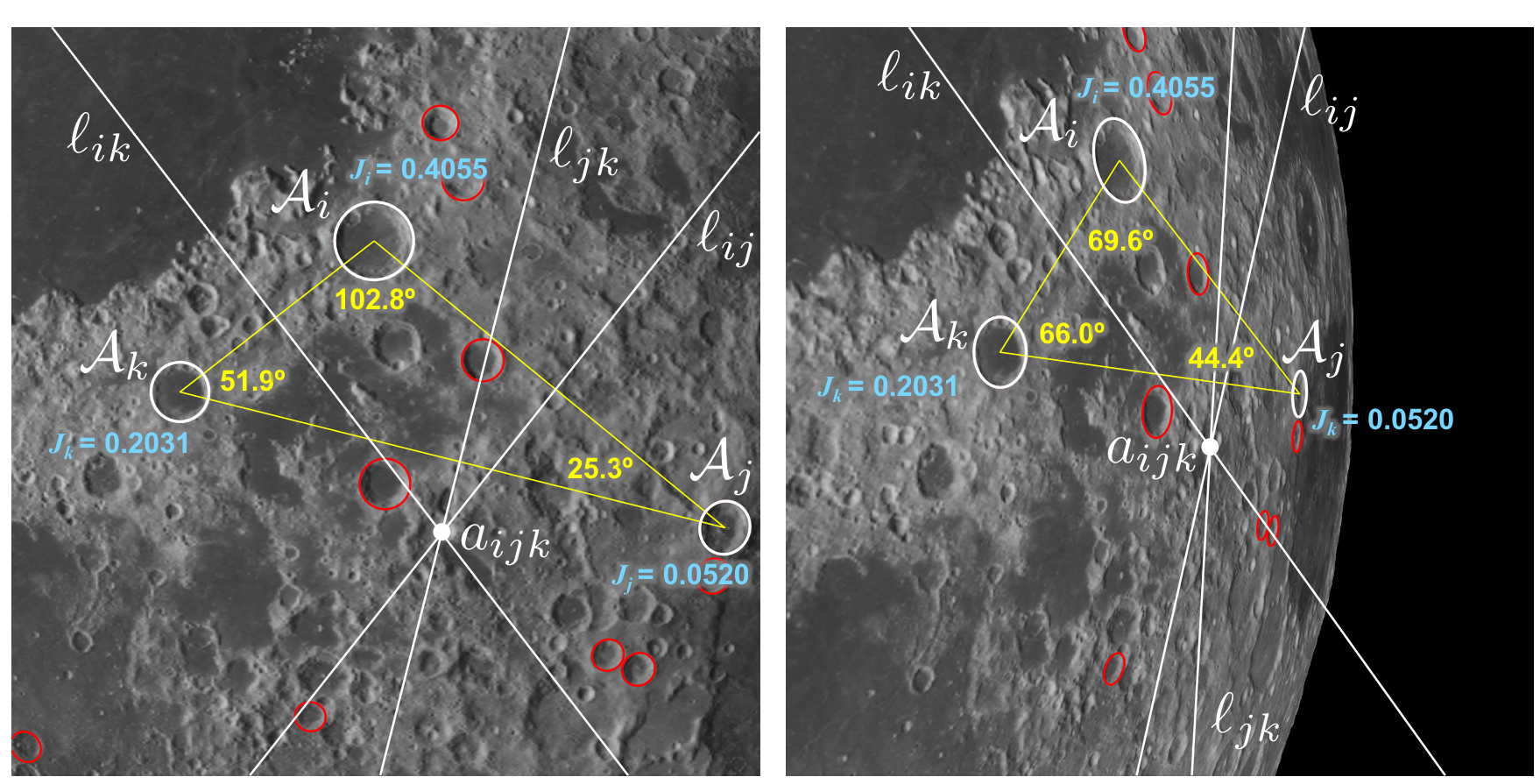}
	\caption{Example of invariants $J_i$, $J_j$, and $J_k$ (blue) for the same triad of craters ($\mathcal{A}_i$, $\mathcal{A}_j$, and $\mathcal{A}_k$) seen from two different vantage points. The invariants remain unchanged under perspective projection from any viewing geometry. The triangle interior angles (yellow) are not a projective invariant. Synthetic images of the lunar surface were produced using PANGU Planet Surface Simulation Software developed by the Space Technology Centre at the University of Dundee, Scotland \cite{Parkes:2009}.}
	\label{fig:InvariantProjectionPANGU}
\end{figure}

It should be stressed that only specific items in Fig.~\ref{fig:InvariantProjectionPANGU} are coplanar, with the vast majority of items being non-coplanar. Consequently general combinations of lines or crater ellipses from left image cannot be transformed to the right image with a common homography.

Also shown in Fig.~\ref{fig:InvariantProjectionPANGU} are the crater triad interior angles (yellow). We highlight these here because crater triangle interior angles are often proposed as descriptors for lost-in-space crater pattern recognition, e.g. \cite{Hanak:2010}. It is clear, however, that the interior angles are not projective invariants and do not provide a robust means of pattern indexing if off-nadir viewing is possible.

\subsection{Geometry of Invariants for Coplanar Conics}
\label{Sec:CoplanarInvariants}
When viewing only a very small portion of the lunar surface, the observed craters are often nearly coplanar.  Smaller craters are also more elliptical. Thus, patterns of small craters close to one another are better described using a triad of coplanar craters. Here, we revisit the problem of invariants of two and three coplanar conics using the framework of Semple and Kneebone and \cite{Semple:1952} and of Quan \cite{Quan:1992,Quan:1998}

\subsubsection{Pairs of Coplanar Conics}
A pair of coplanar conics has two projective invariants \cite{Forsyth:1991}. We saw in Section~\ref{Sec:GeomArbitraryNonCoplanarConics} that two arbitrarily placed 3D conics have no intersections and in Section~\ref{Sec:GeomQuadSurfNonCoplanarConics} that two 3D conics on a nondegenerate quadric surface have two intersections. We observe now that two coplanar conics always have four intersections. These four intersection points (possibly over the complex numbers, possibly repeated) persist under the action of a projective camera, which permits easy computation of two invariants.

Four specified points on a conic may be used to construct a cross-ratio, which is a projective invariant. It may initially be unclear why this is the case since the four points around the ellipse are not collinear, thus a brief explanation is warranted. As discussed earlier, using the duality of points and lines in $\mathbb{P}^2$, we can also form a projective invariant by the cross-ratio of four lines passing through a point. Chasles' theorem \cite{Semple:1952} states that the cross-ratio is a constant for the pencil of four lines from four points on a conic to a fifth point also on the conic (Fig.~\ref{fig:ChaslesTheorem}). Therefore, we may directly form two invariants: the first by finding the cross ratio of the lines from the four intersection points to any other point on first conic, and the second by finding the cross ratio of the lines from the same four intersection points to any other point on the second conic. 

\begin{figure}[b!]
\centering
\includegraphics[width=0.5\columnwidth,trim=0in 0in 0in 0in,clip]{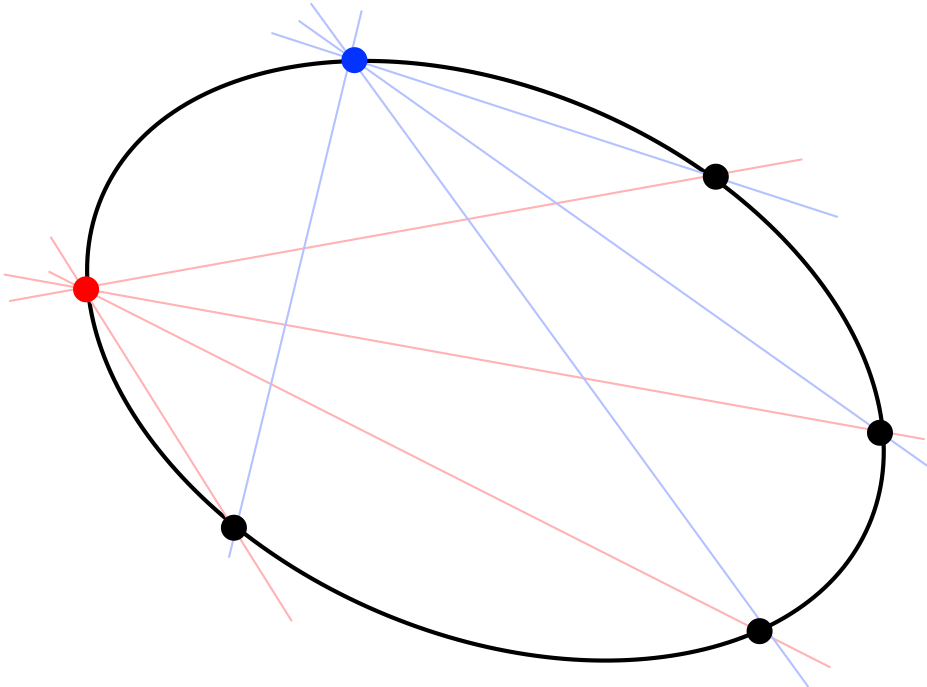}
	\caption{Chasles' theorem states that the cross-ratio formed by the family of red lines is the same as the cross-ratio  of the family of blue lines.}
	\label{fig:ChaslesTheorem}
\end{figure}

Therefore, let $\{\bar{\ba}_k\}_{k=1}^4 = \mathcal{A}_i \cap \mathcal{A}_j$ be the four intersection points of the conics in the image plane and let $\bar{\bp}$ be an arbitrary point on conic $\mathcal{A}_i$. Since the points are written in homogeneous coordinates, we may form the line joining $\bar{\bp}$ and $\bar{\ba}_k$ as $\bw_k = \bar{\bp} \times \bar{\ba}_k$. Thus, the cross ratio $\text{Cr}\left(\bw_1,\bw_2,\bw_3,\bw_4 \right)$ for ellipse $\mathcal{A}_i$ is a projective invariant, and is the same for any choice of $\bp$ on the ellipse.
A second invariant may be computed in  exactly the same way by picking an arbitrary point  on ellipse $\mathcal{A}_j$. This cross-ratio may also be rewritten in terms of the eigenvalues of the matrix $\bA_i^{-1}\bA_j$ \cite{Maybank:1992}.

Explicit computation of these cross ratios is not efficient, so an alternate technique is more appropriate---though the two alternate invariants are rational functions of the cross ratios.

A more conveniently computable form of the invariants for a pair of coplanar conics may be found by direct analysis of the matrices $\bA_i$ and $\bA_j$ describing the image conics. To see this, consider the pencil of conics $\lambda \bA_i + \mu \bA_j$ parameterized by the scalars $\lambda$ and $\mu$. Recall now that 3D crater $\mathcal{C}_i$ projects to image ellipse $\mathcal{A}_i$ according to the homography from Eq.~\ref{eq:HomographyPointConicWithScale}.  Since the craters are assumed coplanar their appearance in an image is related by a common homography,
\begin{equation}
   \bH_{C}^T \left( \lambda \bA_i + \mu \bA_j \right) \bH_{C}  = s_i \lambda \bC_i + s_j \mu \bC_j
\end{equation}
Taking the determinant yields
\begin{equation}
   | \bH_{C} |^2 \, \left| \lambda \bA_i + \mu \bA_j \right| = | s_i \lambda \bA'_i + s_j \mu \bA'_j |
\end{equation}
which may be expanded as
\begin{align}
   | \bH_{C} |^2 \, |\left| \lambda \bA_i + \mu \bA_j \right| & = | \bH_{C} |^2 \, \left( \Theta_1 \lambda^3 + \Theta_2 \lambda^2 \mu + \Theta_3 \lambda \mu^2 + \Theta_4 \mu^3 \right) \nonumber \\
   & =  \Theta'_1 \lambda^3 + \Theta'_2 \lambda^2 \mu + \Theta'_3 \lambda \mu^2 + \Theta'_4 \mu^3 \\
   & = | s_i \lambda \bA'_i + s_j \mu \bA'_j | \nonumber
\end{align}
Thus we find that 
\begin{align}
    \label{eq:Theta1and4constraint}
   | \bH_{C} |^2 \Theta_1 = \Theta'_1 \quad  | \bH_{C} |^2 \Theta_4 = \Theta'_4
\end{align}
\begin{align}
   | \bH_{C} |^2 \Theta_2 = \Theta'_2 \quad | \bH_{C} |^2 \Theta_3 =\Theta'_3
\end{align}
The coefficients ${ \Theta_1,\Theta_2,\Theta_3,\Theta_4}$ are only \emph{semi-invariants} since their numerical value changes with $|\bH_{C}|$ and the arbitrary choices of $s_i$  and $s_j$. These may easily be turned into \emph{rational invariants} by considering appropriate ratios. To see this, first express the coefficients in terms of $\bA_i$ and $\bA_j$,
\begin{equation}
    \label{eq:Theta1and4}
   \Theta_1 = \left| \bA_i \right| \quad \quad
   \Theta_4 = \left| \bA_j \right|
\end{equation}
\begin{equation}
   \Theta_2 = \text{Tr}\left[ \bA^{\ast}_i \bA_j \right] = 
    \left| \bA_i \right| \, \text{Tr}\left[ \bA^{-1}_i \bA_j \right] = \Theta_1 \, \text{Tr}\left[ \bA^{-1}_i \bA_j \right]
\end{equation}
\begin{equation}
   \Theta_3 = \text{Tr}\left[ \bA^{\ast}_j \bA_i \right]= 
    \left| \bA_j \right| \, \text{Tr}\left[ \bA^{-1}_j \bA_i \right] = \Theta_4 \, \text{Tr}\left[ \bA^{-1}_j \bA_i \right]
\end{equation}
Likewise, we find that
\begin{equation}
    \label{eq:ThetaPrime1and4}
   \Theta'_1 = s^3_i \left| \bC_i \right|  \quad \quad \Theta'_4 = s^3_j \left| \bC_j \right|
\end{equation}
\begin{equation}
   \Theta'_2 = \frac{s_j}{s_i} \Theta'_1 \, \text{Tr}\left[ \bC_i^{-1} \bC_j \right]
\end{equation}
\begin{equation}
   \Theta'_3 = \frac{s_i}{s_j} \Theta'_4 \, \text{Tr}\left[ \bC_j^{-1} \bC_i \right]
\end{equation}
As shown in \cite{Semple:1952}, the simplest pair of independent rational invariants are
\begin{equation}
   \frac{\Theta_1 \Theta_3}{\Theta_2^2} = \frac{\Theta'_1 \Theta'_3}{{\Theta'_2}^2}
\quad \text{and}  \quad
   \frac{\Theta_2 \Theta_4}{\Theta_3^2} = \frac{\Theta'_2 \Theta'_4}{{\Theta'_3}^2}
\end{equation}
which we quickly verify produce a pair of non-trivial scalars that are not dependent on $\bH_C$, $s_i$,  or $s_j$ (hence, they are the two fully generic invariants for a pair of conics):
\begin{equation}
   \frac{|\bA_j|}{|\bA_i|} \frac{\text{Tr}\left[ \bA_j^{-1} \bA_i \right]}{\left(\text{Tr}\left[ \bA_i^{-1} \bA_j \right]\right)^2}
   =\frac{\Theta_1 \Theta_3}{\Theta_2^2} = \frac{\Theta'_1 \Theta'_3}{{\Theta'_2}^2} = \frac{|\bC_j|}{|\bC_i|} \frac{\text{Tr}\left[ \bC_j^{-1} \bC_i \right]}{\left(\text{Tr}\left[ \bC_i^{-1} \bC_j \right]\right)^2}
\end{equation}
\begin{equation}
   \frac{|\bA_i|}{|\bA_j|} \frac{\text{Tr}\left[ \bA_i^{-1} \bA_j \right]}{\left(\text{Tr}\left[ \bA_j^{-1} \bA_i \right]\right)^2}
   =\frac{\Theta_2 \Theta_4}{\Theta_3^2} = \frac{\Theta'_2 \Theta'_4}{{\Theta'_3}^2}= \frac{|\bC_i|}{|\bC_j|} \frac{\text{Tr}\left[ \bC_i^{-1} \bC_j \right]}{\left(\text{Tr}\left[ \bC_j^{-1} \bC_i \right]\right)^2}
\end{equation}
Since the left-hand side of these expressions are independent of $\bH_C$, the same value is computed for any arbitrary projection of the conics $\mathcal{C}_i$ and $\mathcal{C}_j$. These results are in agreement with \cite{Quan:1992}, which also follows \cite{Semple:1952}.

Following the convention of \cite{Forsyth:1991}, we now observe that the equations simplify considerably by choosing the scale such that $|\bA_i|=|\bA_j|=1$, making the coefficients $\Theta_1$ and $\Theta_4$ trivial. Using this simplification, substitute Eq.~\ref{eq:Theta1and4} and \ref{eq:ThetaPrime1and4} into Eq.~\ref{eq:Theta1and4constraint} to find
\begin{align}
   | \bH |^2 = s_i^3 = s_j^3
\end{align}
which, in turn, leads to
\begin{equation}
   \Theta_2 = \Theta'_2 \quad \rightarrow \quad \text{Tr}\left[ \bA^{-1}_i \bA_j \right]  = \text{Tr}\left[ \bC_i^{-1} \bC_j \right]
\end{equation}   
\begin{equation}
   \Theta_3 = \Theta'_3 \quad \rightarrow \quad \text{Tr}\left[ \bA^{-1}_j \bA_i \right]  = \text{Tr}\left[ \bC_j^{-1} \bC_i \right]
\end{equation}  

Therefore, when conics are pre-scaled to $|\bA_i|=|\bA_j|=1$, the two unique invariants are simply
\begin{equation}
   I_{ij} = \Theta_2 = \text{Tr}\left[ \bA^{-1}_i \bA_j \right]
\end{equation}
\begin{equation}
   I_{ji} = \Theta_3 = \text{Tr}\left[ \bA^{-1}_j \bA_i \right]
\end{equation}
We note that \cite{Forsyth:1991} follows a completely different approach to finding $I_{ij}$ and $I_{ji} $, and even more geometrically-inspired derivations may be found in \cite{Mundy:1992b}. The specific approach shown here is chosen because of its extensibility to finding invariants of three or more coplanar conics.

These two invariants for a pair of coplanar conics were used by Cheng, et al., for crater identification in \cite{Cheng:2003,Cheng:2005}. We find, however, that better performance is often achieved by considering a triad of craters.

\subsubsection{Triads of Coplanar Conics}
\label{Sec:TriadOfCoplanarConics}

It is established in Section~\ref{Sec:InvariantsConicsP2} that there exist $5d-8$ projective invariants for a $d$-tuple of coplanar  conics ($d \geq 2$). Thus, for a triad of coplanar conics, there exists $15-8 = 7$ projective invariants. This fact has been known for some time  \cite{Quan:1998,Heisterkamp:1997}. 

The procedure for finding these seven invariants follows the same framework used for a pair of coplanar conics. Therefore, consider the determinant of a \emph{net} of three conics \cite{Semple:1952}. As observed in \cite{Quan:1998}, this evaluates to
\begin{align}
    \label{eq:TriadCharEqn}
    | \lambda \bA_i + \mu \bA_j + \sigma \bA_k | = &
    \Theta_1 \lambda^3 +
    \Theta_2 \lambda^2 \mu +
    \Theta_3 \lambda \mu^2 +
    \Theta_4 \mu^3 + 
    \Theta_5 \lambda^2 \sigma \\
    & + \Theta_6 \lambda \sigma^2 +
    \Theta_7 \sigma^3 + 
    \Theta_8 \mu^2 \sigma +
    \Theta_9 \mu \sigma^2 +
    \Theta_{10} \lambda \mu \sigma \nonumber
\end{align}
We immediately see that the coefficients of the first nine terms correspond to the pair-wise combinations of the three conics, with results that follow directly from the two-conic case
\begin{equation}
   \Theta_1 = \left| \bA_i \right|, \quad 
   \Theta_4 = \left| \bA_j \right|, \quad 
   \Theta_7 = \left| \bA_k \right| 
\end{equation}
and that
\begin{equation}
   \Theta_2 = \Theta_1 \, \text{Tr}\left[ \bA^{-1}_i \bA_j \right], \quad
   \Theta_3 = \Theta_4 \, \text{Tr}\left[ \bA^{-1}_j \bA_i \right]
\end{equation}
\begin{equation}
   \Theta_5 = \Theta_1 \, \text{Tr}\left[ \bA^{-1}_i \bA_k \right], \quad
   \Theta_6 = \Theta_7 \, \text{Tr}\left[ \bA^{-1}_k \bA_i \right]
\end{equation}
\begin{equation}
   \Theta_8 = \Theta_4 \, \text{Tr}\left[ \bA^{-1}_j \bA_k \right], \quad
   \Theta_9 = \Theta_7 \, \text{Tr}\left[ \bA^{-1}_k \bA_j \right]
\end{equation}
Note that $\Theta_{10}$ is the only coefficient that simultaneously depends on all three of the conics, thus making it the only term unique to a triad (and not to a pair). This may be computed as
\begin{equation}
   \Theta_{10} = \frac{1}{2}\text{Tr}\left\{ \left[ \left( \bA_j + \bA_k \right)^{\ast} - \left( \bA_j - \bA_k \right)^{\ast} \right] \bA_i \right\}
\end{equation}
where the reader is briefly reminded that $\bA^{\ast}$ is the adjugate of $\bA$. Computation of this result (which, interestingly,  does not appear in \cite{Quan:1998}, \cite{Heisterkamp:1997}, or any other reference discussing triads of coplanar conics) is tedious but straightforward, and is left as an exercise to the reader.

The ten coefficients of Eq.~\ref{eq:TriadCharEqn} may be used to define the seven unique invariants. As with the  pair of coplanar conics, the simplest approach is to choose the arbitrary scale of the matrices $\bA_i$, $\bA_j$, and $\bA_k$ such that $|\bA_i|=|\bA_j|=|\bA_k|=1$. Thus the coefficients $\Theta_1$, $\Theta_4$, and $\Theta_7$ become trivial, and the remaining seven coefficients become the unique invariants. That is, with  $|\bA_i|=|\bA_j|=|\bA_k|=1$, we find  that
\begin{equation}
    \label{eq:DefI1andI2}
   I_{ij} = \text{Tr}\left[ \bA^{-1}_i \bA_j \right]
    \quad 
   I_{ji} = \text{Tr}\left[ \bA^{-1}_j \bA_i \right]
\end{equation}
\begin{equation}
\label{eq:DefI3andI4}
   I_{ik} = \text{Tr}\left[ \bA^{-1}_i \bA_k \right]
    \quad 
   I_{ki} = \text{Tr}\left[ \bA^{-1}_k \bA_i \right]
\end{equation}
\begin{equation}
\label{eq:DefI5andI6}
   I_{jk} = \text{Tr}\left[ \bA^{-1}_j \bA_k \right]
    \quad 
   I_{kj} = \text{Tr}\left[ \bA^{-1}_k \bA_j \right]
\end{equation}
\begin{equation}
\label{eq:DefI7}
   I_{ijk} = \text{Tr}\left\{ \left[ \left( \bA_j + \bA_k \right)^{\ast} - \left( \bA_j - \bA_k \right)^{\ast} \right] \bA_i \right\}
\end{equation}
Therefore, if  three coplanar craters $\mathcal{C}_i$, $\mathcal{C}_j$, and $\mathcal{C}_k$ project into three image conics $\mathcal{A}_i$, $\mathcal{A}_j$, and $\mathcal{A}_k$ (described by the matrices $\bA_i$, $\bA_j$, and $\bA_k$),  then the seven scalar values in Eqs.~\ref{eq:DefI1andI2} to \ref{eq:DefI7} will remain exactly the same under perspective projection regardless of the camera position and attitude.

\section{Creating a Crater Pattern Descriptor from Invariants}
\label{Sec:PatternDescriptor}
Having identified projective invariants that may be computed from the  observed crater rim geometry in an image (three for a triad of non-coplanar conics on a quadric surface and seven for a triad of coplanar conics), we wish  to use this information to construct a pattern descriptor. This descriptor must have a structure that can be consistently reproduced to allow for matching against a static index, which requires some care since the invariants from Section~\ref{Sec:ComputingInvariantsTop} depend on the ordering of the observed craters. We note that the sensitivity of invariants to ordering is not inherently a bad thing (after all, we must specifically match each individual crater to the catalog)---though it does result in a few different methods for building descriptors.

The index is built using descriptors for crater triads. We must choose, therefore, whether we want to impose ordering of the three craters within the triad before or after matching to the index. If we choose the former, then the invariants developed before may be used directly. If we choose the latter, we must either  sort the projective invariants from Section~\ref{Sec:ComputingInvariantsTop} into a prescribed order or transform them  into \emph{projective and permutation} ($p^2$) invariants. These conventions each have their advantages, which we now discuss.

\subsection{Crater Pattern Descriptors with Projective Invariants}
\label{Sec:ProjInvDescriptor}
Suppose we observe a triad of craters that we want to match to a precomputed  index in an order-dependent fashion. Since the crater pattern must be observed from  above (i.e., the camera must be looking downwards from above the lunar surface because it cannot be inside the Moon), there are three possible orderings of the craters. If we choose to arrange these in a clockwise fashion within the image (see Fig.~\ref{fig:CraterPermutation}) then the three possibilities are $(1,2,3)$, $(3,1,2)$, and $(2,3,1)$. For a match to be successful, the observed descriptor must match the index descriptor. Therefore, if the descriptor entries are order dependent, then a match only occurs when the observation ordering matches index ordering. 

There are at least two options for matching to an index with order-dependent descriptors. First, if each crater triad has a single index entry, the index must be searched three times (once with each possible observation ordering). Alternatively, the index could include three entries for each crater triad (one for each ordering) and the larger index is searched only once. The first method (one index per triad) performs best in the presence of measurement noise.

\begin{figure}[b!]
\centering
\includegraphics[width=0.95\columnwidth,trim=0in 0in 0in 0in,clip]{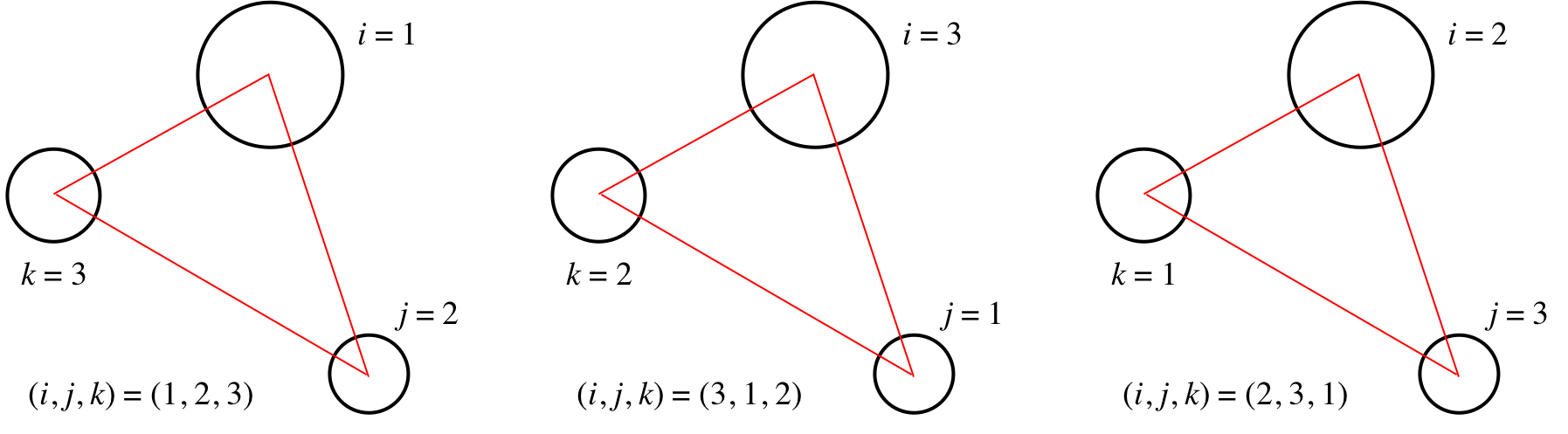}
	\caption{Crater triad labels in an image are assigned in a clockwise fashion. There are three possible permutations of these labels, which may be interpreted as a cyclic permutation of the labels.}
	\label{fig:CraterPermutation}
\end{figure}

The pattern descriptors in this scenario are simply a concatenation of the invariants from Section~\ref{Sec:ComputingInvariantsTop}. Thus, for a triad of conics on a quadric surface, we build a three-element descriptor:
\begin{equation}
   \bJ^T_{ijk} =\left[ \begin{array}{c c c c c c c}
          J_i & J_j & J_k
   \end{array} \right]
\end{equation}
where the individual elements are from Eqs.~\ref{eq:DefJi} to \ref{eq:DefJk}.

Likewise, for a triad of conics on a plane, we build a seven-element descriptor:
\begin{equation}
   \bI^T_{ijk} =\left[ \begin{array}{c c c c c c c}
          I_{ij} & I_{jk} & I_{ki} & I_{ji} &I_{kj} & I_{ik} &  I_{ijk} 
   \end{array} \right]
\end{equation}
where the individual elements are from Eqs.~\ref{eq:DefI1andI2} to \ref{eq:DefI7}.

\subsection{Crater Pattern Descriptors with Sorted Projective Invariants}
\label{Sec:SortedDescriptor}

As before, suppose we observe a triad of craters. Now, instead of having an order-dependent descriptor, suppose that we want to match to a precomputed  index in an order-independent fashion. There are a variety of approaches one might use to achieve this objective. We will see, however, that some care is required to arrive at a robust solution.

A very common technique for achieving permutation invariance is to always permute the order of the descriptor elements such the smallest (or largest) element appears first. For example, one possible descriptor is $\bJ'^T_{ijk} = [J_i',J_j',J_k']$, where $J_i'\leq J_j'\leq J_k'$ is a rearrangement of $(J_i,J_j,J_k)$, i.e., we sort the numbers $J_i,J_j,J_k$ from small to large and let the result be the descriptor. This arrangement was employed by Hanak  \cite{Hanak:2009,Hanak:2010} to define the clockwise/counterclockwise sense of a crater pattern by ordering the legs of the crater triangle from shortest to longest. Similarly, for their lost-in-space algorithm, Maass, et al., \cite{Maass:2020} presort the entries in their index of crater triangles (which  consists of two triangle interior angles) such that the smallest interior angle is first. Other examples of this idea abound within the space navigation community, ranging from crater pattern identification to star pattern identification.

Returning to the problem at hand, we know from our labeling convention that the crater pattern indices may only undergo a cyclic permutation. Thus, we may cyclically permute the entries until smallest valued invariant is in the first position---that is, where $(J_i',J_j',J_k')$ is a cyclic permutation of $(J_i,J_j,J_k)$ with $J'_i = \min(J_i,J_j,J_k)$.

Thus, for a triad of conics on a quadric surface, we build a three-element descriptor:
\begin{equation}
   \bJ^T_{ijk} =\left[ \begin{array}{c c c c c c c}
          J'_i & J'_j & J'_k
   \end{array} \right]
\end{equation}
where we label observed crater $i$ such that  $J'_i = \min(J_i,J_j,J_k)$ and the rest of the primed invariants follow by cyclic permutation. The individual elements are from Eqs.~\ref{eq:DefJi} to \ref{eq:DefJk}.

Likewise, for a triad of conics on a plane, we build a seven-element descriptor:
\begin{equation}
   \bI^T_{ijk} =\left[ \begin{array}{c c c c c c c}
          I'_{ij} & I'_{jk} & I'_{ki} & I'_{ji} &I'_{kj} & I'_{ik} &  I'_{ijk} 
   \end{array} \right]
\end{equation}
where we label observed crater $i$ such that $I'_{ij} = \min(I_{ij},I_{jk},I_{ki})$ and the rest of the primed invariants follow by cyclic permutation of the crater labels $i,j,k$ (and, to be clear, not by cyclic permutation of the elements of the descriptor vector). The individual elements are from Eqs.~\ref{eq:DefI1andI2} to \ref{eq:DefI7}.

The usual argument for anchoring the descriptor element ordering with the smallest (or largest) element is that it is simple. We find in practice, however, that this ordering scheme is not optimally descriptive and is not very robust to noise. Indeed, it is not uncommon for noise to cause two similarly-valued invariants to switch position. If this happens to the value used to anchor the descriptor permutation then it renders the entire descriptor useless---even though the invariants themselves still describe the crater pattern well. Thus, we cycle through the sorted projective invariant descriptors exactly as we do for the unsorted case (Section~\ref{Sec:ProjInvDescriptor}). However, the presorting works most of the time, and greatly reduces the trials required (on average) to find a match.

The main problem with this approach is how sorting populates the $k$-d search  space. By sorting the descriptor to have the smallest element first, we preferentially populate the search space for smaller values along this dimension. This doesn't matter so long as the measurement noise is very small---but, when supplied with noisy measurements, this reduces the likelihood that the correct descriptor will correspond with the first nearest neighbor. The end result is that an index of sorted descriptors will yield fewer matches (as compared with  Section~\ref{Sec:ProjInvDescriptor}) when the data is noisy. Thus, while it may provide speed-up for low noise cases, both matching performance and speed quickly degrade as noise increases.

\subsection{Crater Pattern Descriptors with Projective and Permutation ($p^2$)  Invariants}
\label{Sec:p2Descriptor}

Our objective is to construct a descriptor that is \emph{invariant} to the order in which the craters are observed. It should be no surprise, therefore, that invariant theory is (once again) a useful mathematical framework for achieving this objective. There is some precedence for this approach using the so-called projective and permutation ($p^2$) invariants. The $p^2$ invariants are discussed for a set of five points in \cite{Lenz:1994} and for a pair of conics in \cite{Maybank:1992}. An attempt was made to apply this to the problem of crater identification in \cite{Park:2019}, though this approach suffered from a number of mistakes (see Section~\ref{Sec:BackgroundCraterID}). Within the context of this past work, we now introduce the first complete set of $p^2$ invariants for a triad of conics on a quadric surface or on a plane (e.g., $p^2$ invariants for a triad crater rims on the Moon). 

We briefly remind the reader that the $p^2$ invariants introduced here are algebraic functions of the projective invariants from Section~\ref{Sec:ComputingInvariantsTop}. Thus, the $p^2$ invariants do not represent any new (independent) information. They simply express the information we already have in a different way.

\subsubsection{Non-Coplanar Crater Patterns (Three-Element Descriptor)}
We first develop the $p^2$ invariants for a triad of conics lying on a quadric surface. We know there exists exactly three algebraically independent invariants for this case, and we are able to compute three such invariants $J_i, J_j, J_k$ directly from an image using Eqs.~\ref{eq:DefJi} to \ref{eq:DefJk}. Our goal is to compute rational functions of $J_i, J_j, J_k$ that are invariant to a cyclic permutation of the crater labels.

Consider, therefore, the triple of numbers $\{x,y,z\}$. The standard generators of the ring of polynomial invariants for the symmetric group, $S_3$, of all coordinate permutations are the elementary symmetric functions (see \cite[Chapter 4]{Neusel:2007})
\begin{align}
e_1 &=x+y+z \\
e_2 &=xy+yz+zx \\
e_3 &=xyz
\end{align}
which are clearly invariant to permutations of $\{x,y,z\}$.
To obtain the generators of the invariant ring for the cyclic group $\Z/3\Z$, which we may also view as the alternating group $A_3$, we require one more invariant:
\begin{equation}
\Delta=(x-y)(y-z)(z-x)
\end{equation}
(see for example \cite[Example 4.24]{Neusel:2007}).
We observe that $\Delta$ is invariant under cyclic permutations, but that odd permutations change its sign. While these polynomials could certainly be used to construct $p^2$ invariants and build a pattern descriptor, we note that the polynomials are of different degree which often leads to undesirable numerical properties for indexing. Thus, we instead look for three rational functions, where the difference in the degrees of the numerator and denominator are the same for all three functions.

To do this, we begin by defining a polynomial map
\begin{equation}
F(x,y,z)=(F_1(x,y,z),F_2(x,y,z),F_3(x,y,z))
\end{equation}
where $F_1,F_2,F_3$ are continuous rational functions with the property that $F(x,y,z)=F(x',y',z')$
if and only if $(x',y',z')$ is a cyclic permutation of $(x,y,z)$.
Let the cyclic group $\Z/3\Z$ act on $\R^3$ by permuting the coordinates.
Then we desire $F_1,F_2,F_3$ to be invariant functions on $\R^3$.

Whenever a finite group acts by a linear transformation, there exists a particular coordinate change  where the action of the group becomes diagonal. That is, in some coordinate system, the matrix describing the action of each group element is a diagonal matrix. This follows from two facts from the representation theory of finite groups (see~\cite[Chapter 1]{FultonHarris:1991}). First, every representation of a finite group is a direct sum of irreducible representations. Second, every irreducible representation of an finite abelian group is 1-dimensional. For the specific problem under consideration here (the action of $\Z/3\Z$ on $\R^3$), we can make this very explicit: namely, the change of coordinates making the action diagonal is a discrete Fourier transform (DFT) (which involves complex numbers).
To make these ideas explicit, let $a,b,c$ be
\begin{align}
a & =x+y+z \\
b & =x+\zeta y+\zeta^2 z \\
c & =x+\zeta^2y+\zeta z
\end{align}
where $\zeta=e^{2\pi i/3}=-\frac{-1+\sqrt{3}i}{2}$. Then the generator $\sigma$ of $\Z/3\Z$ acts by the diagonal matrix
$$
\left[\begin{array}{ccc}
1 & 0 & 0\\
0 & \zeta & 0\\
0 & 0 & \zeta^2
\end{array}\right] 
$$
We define $F_1$, $F_2$, and $F_3$ as rational functions of $a,b,c$,
\begin{align}
F_1(x,y,z) & = a =x+y+z \label{eq:F1}\\
F_2(x,y,z) & = \frac{b^2}{c}+\frac{c^2}{b} \label{eq:F2}\\
           & =\frac{2(x^3+y^3+z^3)+12xyz-3(x^2y+y^2x+y^2z+z^2y+z^2x+x^2z)}{x^2+y^2+z^2-(xy+yz+zx)}  \nonumber\\
F_3(x,y,z) & =\frac{1}{i}(\frac{b^2}{c}-\frac{c^2}{b}) \label{eq:F3} \\ 
& =\frac{-3\sqrt{3}(x-y)(y-z)(z-x)}{x^2+y^2+z^2-(xy+yz+zx)} \nonumber
\end{align}
The denominator of $F_2$ and $F_3$ is equal to $\frac{1}{2}[(x-y)^2+(y-z)^2+(z-x)^2]$ and is equal to $0$ if and only if $x=y=z$. We can also verify that
$$
F_2^2+F_3^2=2[(x-y)^2+(y-z)^2+(z-x)^2]=2bc
$$
This implies that if $(x,y,z)$ is a sequence that converges to
a point $(t,t,t)$, then $F_2(x,y,z)$ and $F_3(x,y,z)$ converge to $0$. Consequently, we can extend $F_2$ and $F_3$ to continuous functions on all of $\R^3$ by defining them to be $0$ whenever $x=y=z$.
\begin{lemma}
\label{Lemma:CyclicPermF}
 $F(x,y,z)=F(x',y',z')$
if and only if $(x',y',z')$ is a cyclic permutation of $(x,y,z)$
for all $(x,y,z),(x',y',z')\in \R^3$.
\end{lemma}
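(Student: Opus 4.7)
The plan is to prove the two directions of the biconditional separately. For the ``if'' direction, I will check directly that each of $F_1,F_2,F_3$ is invariant under the generator $\sigma:(x,y,z)\mapsto(z,x,y)$ of $\Z/3\Z$. A short computation using $\zeta^3=1$ yields $\sigma(a)=a$, $\sigma(b)=\zeta b$, and $\sigma(c)=\zeta^2 c$, so the ratios $b^2/c$ and $c^2/b$ are both $\sigma$-fixed (the powers of $\zeta$ cancel). Hence $F_1=a$, $F_2=b^2/c+c^2/b$, and $F_3=-i(b^2/c-c^2/b)$ are all cyclic invariants.

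For the converse, I will recover $(x,y,z)$ from the value $F(x,y,z)$ up to the cyclic action. Because $(a,b,c)$ is obtained from $(x,y,z)$ by an invertible discrete Fourier transform, and because $\sigma$ corresponds exactly to the diagonal action $(a,b,c)\mapsto(a,\zeta b,\zeta^2 c)$, it suffices to recover $(a,b,c)$ up to this diagonal action. From $F$ I immediately read off $a=F_1$ together with $u:=b^2/c=(F_2+iF_3)/2$ and $v:=c^2/b=(F_2-iF_3)/2$. These three scalars determine $b^3=u^2v$, $c^3=uv^2$, and $bc=uv$.

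Now I split into two cases. If $u=v=0$, then $F_2^2+F_3^2=4bc=0$, and the real-variable identity $2bc=(x-y)^2+(y-z)^2+(z-x)^2$ forces $x=y=z$; then $F_1=3x$ pins the triple down uniquely and the statement is immediate. Otherwise $(u,v)\neq(0,0)$, so $b,c\neq 0$, and I pick any cube root $\tilde b$ of $b^3$ and set $\tilde c:=bc/\tilde b$. Then $\tilde b^2/\tilde c=\tilde b^3/(\tilde b\tilde c)=b^3/bc=u$ and, because $\tilde c^3=(bc)^3/\tilde b^3=c^3$, also $\tilde c^2/\tilde b=v$. The three cube roots of $b^3$ yield the three pairs $(\zeta^j b,\zeta^{-j}c)$, $j=0,1,2$, which under inverse DFT become the three cyclic permutations of $(x,y,z)$. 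Any $(x',y',z')$ with $F(x',y',z')=F(x,y,z)$ thus differs from $(x,y,z)$ by a cyclic permutation.

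The main obstacle I anticipate is careful bookkeeping at the degenerate locus $x=y=z$, where the coordinate change $(x,y,z)\leftrightarrow(a,b,c)$ collapses and $F_2,F_3$ have been extended by continuity to $0$; the real-variable identity for $bc$ handles this case cleanly. The remainder of the argument is straightforward linear algebra in the DFT coordinates, namely verifying that the three cube roots of $b^3$, subject to the constraint on $bc$, parameterize exactly the three $\Z/3\Z$-orbits on the fiber of $F$.
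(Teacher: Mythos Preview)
Your proof is correct and follows essentially the same route as the paper's: pass to DFT coordinates $(a,b,c)$, split into the degenerate case $F_2=F_3=0$ (forcing $x=y=z$ via the identity $F_2^2+F_3^2=2[(x-y)^2+(y-z)^2+(z-x)^2]$) and the generic case, and in the generic case deduce $(b')^3=b^3$ to get $b'=\zeta^j b$. The only cosmetic differences are that you make the ``if'' direction explicit (the paper leaves it implicit from the construction) and that you obtain $c'=\zeta^{-j}c$ from the relation $b'c'=uv=bc$, whereas the paper uses $c=\bar b$ and complex conjugation; both are immediate.
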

\begin{proof}
Suppose now that $F(x,y,z)=F(x',y',z')$. Let $a,b,c$ be as given in  Eqs.~\ref{eq:F1}--\ref{eq:F3}
and define similarly $a'=x'+y'+z'$, etc. To begin, observe that
if $F_2(x,y,z)=F_2(x',y',z')=F_3(x,y,z)=F_3(x',y',z')=0$
then we must have $x=y=z$ and $x'=y'=z'$ and from $F_1(x,y,z)=3x=3x'=F_1(x',y',z')$ it follows that $(x,y,z)=(x',y',z')$.
Now suppose that $F_2(x,y,z)^2+F_3(x,y,z)^2$ (and, therefore,
$F_2(x',y',z')^2+F_3(x',y',z')^2$) is nonzero. Then $b,b',c,c'$ are all nonzero. Since $F_2(x,y,z)=F_2(x',y',z')$ and $F_3(x,y,z)=F_3(x',y',z')$ we get 
$$\frac{b^2}{c}=\frac{F_2(x,y,z)+iF_3(x,y,z)}{2}=\frac{F_2(x',y',z')+iF_3(x',y',z')}{2}=\frac{(b')^2}{c'}
$$
and
$$
2bc=F_2(x,y,z)^2+F_3(x,y,z)^2=F_2(x',y',z')^2+F_3(x',y',z')^2=2b'c'.
$$
This implies that
$$
b^3=\frac{b^2}{c}(bc)=\frac{(b')^2}{c'}b'c'=(b')^3.
$$
Therefore $b'=\zeta^sb$ for some $s\in \{0,1,2\}$.
Taking complex conjugation, we get that $c'=\zeta^{-s}c$.
We conclude that $(a,b,c)$ and $(a',b',c')$ lie in the same orbit $\Z/3\Z$, which means that $(x',y',z')$ is a permutation of $(x,y,z)$.
\end{proof}
One can also verify that the field of rational invariants $\R(x,y,z)^{\Z/3\Z}$ is generated by $F_1,F_2,F_3$.

\subsubsection{Coplanar Crater Patterns (Seven-Element Descriptor)}
The development of $p^2$ invariants for a triad of conics lying on a plane follows similar arguments as for conics on a quadric, though it requires more care due to additional relationships between the seven elements in the descriptor. Therefore, recall from Eqs.~\ref{eq:DefI1andI2}--\ref{eq:DefI7} that the seven invariants consists of three invariant pairs, $(I_{ij},I_{ji}), (I_{jk},I_{kj}), (I_{ki},I_{ik})$, and a 7th invariant for the entire triad $I_{ijk}$. Because they're constructed from a common set of crater observations, the two invariants within any given pair always undergo a common permutation. We also observe that $I_{ijk}$ is already symmetric and invariant to the ellipse label order.

Simply by applying the results of the previous section, we can find six $\Z/3\Z$-invariant functions from $F_\ell(I_{ij},I_{jk},I_{ki})$ and $F_\ell(I_{ji},I_{kj},I_{ik})$.
The descriptiveness of these six invariants, however, is not optimal because members of a common pair are not forced to undergo the same permutation. Therefore, we introduce two new invariants. 

Again making use of a DFT, define $a_\ell, b_\ell, c_\ell$ as 
\begin{align}
a_\ell &=x_\ell+y_\ell+z_\ell\\
b_\ell &=x_\ell+\zeta y_\ell+\zeta^2 z_\ell\\
c_\ell &=x_\ell+\zeta^2 y_\ell+\zeta z_\ell
\end{align}
for $\ell=1,2$. Note that $c_\ell=\overline{b}_\ell$
where $x_\ell,y_\ell,z_\ell$ are real.
A generator of $\Z/3\Z$ acts on $(x_1,y_1,z_1)$ and $(x_2,y_2,z_2)$ by the same cyclic permutation of coordinates,
and it acts on $b_1,b_2$ with a scalar $\zeta$ and on $c_1,c_2$ with a scalar $\zeta^2$. We see, therefore, that $b_1c_2$ is invariant.
We define $G_1(x_1,y_1,z_1,x_2,y_2,z_2)$ and $G_2(x_1,y_1,z_1,x_2,y_2,z_2)$ be the real and imaginary part
of $b_1c_2$, which we compute as
\begin{multline*}
    b_1c_2=(x_1+\zeta y_1+\zeta^2 z_1)(x_2+\zeta^2 y_2+\zeta z_2)=\\=(x_1x_2+y_1y_2+z_1z_2)+\zeta(x_1z_2+y_1x_2+z_1y_2)+
\zeta^2(x_1y_2+y_1z_2+z_1x_2)
\end{multline*}
We may isolate the real part to obtain $G_1$
\begin{multline*} 
G_1(x_1,y_1,z_1,x_2,y_2,z_2)=\\=(x_1x_2+y_1y_2+z_1z_2)-{\textstyle \frac{1}{2}}(x_1z_2+y_1x_2+z_1y_2+x_1y_2+y_1z_2+z_1x_2)\\=
{\textstyle \frac{3}{2}}(x_1x_2+y_1y_2+z_1z_2)-{\textstyle \frac{1}{2}}(x_1+y_1+z_1)(x_2+y_2+z_2)
\end{multline*}
and the imaginary part to obtain $G_2$
\begin{multline*}
G_2(x_1,y_1,z_1,x_2,y_2,z_2)=\frac{\sqrt{3}}{2}\big[(x_1z_2+y_1x_2+z_1y_2)-(x_1y_2+y_1z_2+z_1x_2)\big]\\=-\frac{\sqrt{3}}{2}\left|\begin{array}{ccc}1 & 1 & 1\\ x_1 & y_1 & z_1\\ x_2 & y_2 & z_2
\end{array}\right|
\end{multline*}
\begin{lemma}
Suppose that $x_\ell,y_\ell,z_\ell,x_\ell',y_\ell',z_\ell'\in \R$ for $\ell=1,2$ such that
\begin{eqnarray*}
F(x_1,y_1,z_1)&=&F(x_1',y_1',z_1')\\
F(x_2,y_2,z_2)&=&F(x_2',y_2',z_2')\\
G(x_1,y_1,z_1,x_2,y_2,z_2)&=&G(x_1',y_1',z_1',x_2',y_2',z_2')
\end{eqnarray*}
Then
$$
(x_1,x_2),(y_1,y_2),(z_1,z_2)
$$
is a cyclic permutation of
$$
(x_1',x_2'),(y_1',y_2'),(z_1',z_2')
$$
\end{lemma}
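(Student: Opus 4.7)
The plan is to reduce the claim to tracking how a single cyclic generator acts on the DFT coordinates $(a_\ell,b_\ell,c_\ell)$ and then extract the needed constraint from the joint invariant $G$. First, I would apply Lemma~\ref{Lemma:CyclicPermF} separately to each triple: the hypotheses $F(x_\ell,y_\ell,z_\ell)=F(x_\ell',y_\ell',z_\ell')$ imply that $(x_\ell',y_\ell',z_\ell')$ is a cyclic permutation of $(x_\ell,y_\ell,z_\ell)$, say $(x_\ell',y_\ell',z_\ell')=\sigma^{s_\ell}(x_\ell,y_\ell,z_\ell)$ for some $s_\ell\in\{0,1,2\}$, where $\sigma$ is the generator $(x,y,z)\mapsto(y,z,x)$ of $\Z/3\Z$. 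The goal then reduces to showing $s_1\equiv s_2\pmod 3$.

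Next I would translate the action of $\sigma$ to the DFT coordinates. A direct calculation (using $\zeta^3=1$) shows that $\sigma$ fixes $a_\ell$, multiplies $b_\ell$ by $\zeta^2$ and multiplies $c_\ell$ by $\zeta$. Hence $a_\ell'=a_\ell$, $b_\ell'=\zeta^{2s_\ell}b_\ell$, and $c_\ell'=\zeta^{s_\ell}c_\ell$. Since $G_1$ and $G_2$ are the real and imaginary parts of $b_1c_2$, the hypothesis $G(x_1,y_1,z_1,x_2,y_2,z_2)=G(x_1',y_1',z_1',x_2',y_2',z_2')$ is exactly the single complex identity $b_1c_2=b_1'c_2'$, i.e., $(\zeta^{2s_1+s_2}-1)\,b_1c_2=0$.

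The main case is $b_1c_2\neq 0$: then $2s_1+s_2\equiv 0\pmod 3$, and since $-2\equiv 1\pmod 3$ this is equivalent to $s_1\equiv s_2\pmod 3$. Consequently both triples undergo the same cyclic permutation, which is exactly the statement that $(x_1,x_2),(y_1,y_2),(z_1,z_2)$ is a cyclic permutation of $(x_1',x_2'),(y_1',y_2'),(z_1',z_2')$.

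The main obstacle is the degenerate case $b_1c_2=0$. Because $x_\ell,y_\ell,z_\ell$ are real and $c_\ell=\overline{b_\ell}$, the vanishing of $b_1$ forces $x_1=y_1=z_1$, and the vanishing of $c_2$ forces $b_2=0$ and hence $x_2=y_2=z_2$; by the same argument applied to the primed data, the corresponding primed triple is also constant. In either situation the constant triple is fixed by every cyclic permutation, so the value of $s_1$ (respectively $s_2$) is not pinned down by the data and we are free to replace it by any element of $\{0,1,2\}$. In particular we may choose $s_1=s_2$, and the desired simultaneous cyclic permutation is recovered. A final check that this common shift actually sends the unprimed pair $(x_\ell,y_\ell,z_\ell)_{\ell=1,2}$ to the primed pair is just the assembly of the per-triple equalities from Lemma~\ref{Lemma:CyclicPermF}.
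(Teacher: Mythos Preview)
Your argument is correct and follows essentially the same route as the paper: apply Lemma~\ref{Lemma:CyclicPermF} to each triple to obtain cyclic shifts, use the DFT coordinates to reduce the $G$-hypothesis to the single complex identity $b_1c_2=b_1'c_2'$, deduce that the two shifts agree modulo $3$ when $b_1c_2\neq 0$, and handle the degenerate case by observing that a constant triple is fixed by every cyclic permutation. The only cosmetic difference is your choice of generator $\sigma:(x,y,z)\mapsto(y,z,x)$ (so $b\mapsto\zeta^2 b$) versus the paper's $b\mapsto\zeta b$, which does not affect the argument.
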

\begin{proof}
By Lemma~\ref{Lemma:CyclicPermF} we know that $(x_1',y_1',z_1')$ is a cyclic permutation of $(x_1,y_1,z_1)$ and
$(x_2',y_2',z_2')$ is a cyclic permutation of $(x_2,y_2,z_2)$, i.e., $b_1'=\zeta^s b_1$ and $b_2'=\zeta^tb_2$ for some $s,t\in \{0,1,2\}$. It is not yet clear that this is the same
permutation, i.e., whether $s=t$.
From $G(x_1,y_1,z_1,x_2,y_2,z_2)=G(x_1',y_1',z_1',x_2',y_2',z_2')$ follows that $b_1\overline{b}_2=b_1c_2=b_1'c_2'=b_1'\overline{b}_2'$. On the other hand, $b_1\overline{b}_2=b_1'\overline{b}_2'=(\zeta^s b_1)(\zeta^{-t}\overline{b}_2)=\zeta^{s-t}b_1\overline{b}_2$.
So either $b_1=0$, $b_2=0$, or $s=t$. If $t=s$, then we are done.
If $b_1=0$ then $x_1=y_1=z_1=x_1'=y_1'=z_1'$ and we are also done because $(x_2',y_2',z_2')$ is a cyclic permutation of $(x_2,y_2,z_2)$. The case $b_2=0$ goes similarly.
\end{proof}

It follows, therefore, that there are many approaches to choose a descriptor for a set of three coplanar craters. We summarize three obvious choices, with the third being the one we usually recommend. 

The most obvious approach is to form a seven-element descriptor as $$[F(I_{ij},I_{jk},I_{ki}),F(I_{ji},I_{kj},I_{ik}),I_{ijk}]$$ 
As was noted above, the descriptiveness of such a scheme is not optimal, because this descriptor remains the same when the values of $I_{ij},I_{jk},I_{ki}$ cyclically rotate while the values of $I_{ji},I_{kj},I_{ik}$ stay the same. This means that 3 different configurations might have the same descriptor.

A second option is to form a nine-element descriptor
    $$[F(I_{ij},I_{jk},I_{ki}),F(I_{ji},I_{kj},I_{ik}),\widetilde{G}(I_{ij},I_{jk},I_{ki},I_{ji},I_{kj},I_{ik}),I_{ijk}]$$ 
This descriptor is more descriptive but uses more space. Here, we have replaced the invariant $G$ with $\widetilde{G}$, which we define as
    \begin{multline*}
    \widetilde{G}(x_1,y_1,z_1,x_2,y_2,z_2)=\\\frac{G(x_1,y_1,z_1,x_2,y_2,z_2)}{\sqrt[4]{\Big((x_1-y_1)^2+(y_1-z_1)^2+(z_1-x_1)^2\Big)\Big((x_2-y_2)^2+(y_2-z_2)^2+(z_2-x_2)^2\Big)}}
    \end{multline*}
This is an essential step, since $G$ scales quadratically with $(x_1,y_1,z_1,x_2,y_2,z_2)$  while $F$ scales linearly. Using $G$ instead of $\widetilde{G}$ results in a poorly scaled descriptor that complicates (and sometimes precludes) nearest neighbor searches with efficient data structures. Conversely, the function $\widetilde{G}$ scales linearly. We note that $\widetilde{G}$ is not a rational function because of the $4$-th root, but it does extend to a continuous function on $\R^6$ by defining the function to be 0 whenever $x_1=y_1=z_1$ or $x_2=y_2=z_2$. 

A third option (and the one we suggest) is to use the seven-element descriptor
    $$
    [F(I_{ij},I_{jk},I_{ki}),F_1(I_{ji},I_{kj},I_{ik}),\widetilde{G}(I_{ij},I_{jk},I_{ki},I_{ji},I_{kj},I_{ik}),I_{ijk}]
    $$ 
For generic configurations, this descriptor is as good as the nine-element descriptor. The subfield of $L=\C(x_1,y_1,z_1,x_2,y_2,z_2)$ generated by $F_\ell(x_1,y_1,z_1)$, $\ell=1,2,3$, $F_1(x_2,y_2,z_2)$ and $G_{\ell}(x_1,y_1,z_1,x_2,y_2,z_2)$, for $\ell=1,2$ is equal to $K=\C(a_1,a_2,b_1^3,b_1c_1,b_1c_2,c_1b_2)$. It is easy to see that $L=K(b_1)$ and $b_1$ has degree three over $K$. So the degree of the field extension $L/K$ is at most three, $K$ is contained in the fixed field $L^{\Z/3\Z}$
and the degree of the extension $L/L^{\Z/3\Z}$ is three.
This implies that $K=L^{\Z/3\Z}$. In other words, every $\Z/3\Z$-invariant rational function in $I_{ij},I_{jk},I_{ki},I_{ji},I_{kj},I_{ik}$ is a rational function in the first 6 elements of the descriptor.
    
However, in the degenerate case $I_{ij}=I_{jk}=I_{ki}=t$ for some fixed $t$
we  have $G(I_{ij},I_{jk},I_{ki},I_{ji},I_{kj},I_{ik})=0$.
In that case, the descriptor cannot distinguish between different configurations of the same pattern (e.g., a pattern and it's mirror). This is to be expected, since the degenerate case corresponds to equally sized craters around an equilateral triangle. Here, while there may be enough information to uniquely identify the triad (i.e., match the triad descriptor to the  database), there is not sufficient information to disambiguate the specific crater labels with the projected crater rims alone. No special action is required, since this ambiguity is naturally handled in the pattern verification process (Section~\ref{Sec:IndexMatchingTop}).

\subsection{Remarks on Choosing a Descriptor Convention}
In the sections above we introduce descriptors based on the projective invariants, sorted projective invariants, and on the $p^2$ invariants. Which one is best is often application dependent. 

Descriptors built directly on the projective invariants (Section~\ref{Sec:ProjInvDescriptor}) have better matching performance when presented with noisy data (see left-hand plot of Fig.~\ref{fig:DescriptorStats}), though they require the index be searched three times per observed pattern (one for each possible permutation). 

Conversely, the descriptors built with the sorted invariants (Section~\ref{Sec:SortedDescriptor}) or $p^2$ invariants (Section~\ref{Sec:p2Descriptor}) only require one index search per observed pattern, but are more sensitive to measurement noise. As a consequence the sorted invariant and $p^2$ invariant descriptors are faster for low noise situations, but slower for high noise situations (see right-hand plot of Fig.~\ref{fig:DescriptorStats}). This may seem counter-intuitive. The explanation, however, is straightforward. As measurement noise increases, the sorted invariant and $p^2$ invariant descriptors must attempt more triads (on average, as  compared to the ordered projective invariant descriptor) before finding a correct nearest neighbor match. Thus, with large amounts of measurement noise, the ordered projective invariant descriptor tends to find a match sooner despite needing three index searches per triad---and finding a match sooner results in a lower (faster) run-time. 

\begin{figure}[t!]
\centering
\includegraphics[width=1\columnwidth,trim=0in 0in 0in 0in,clip]{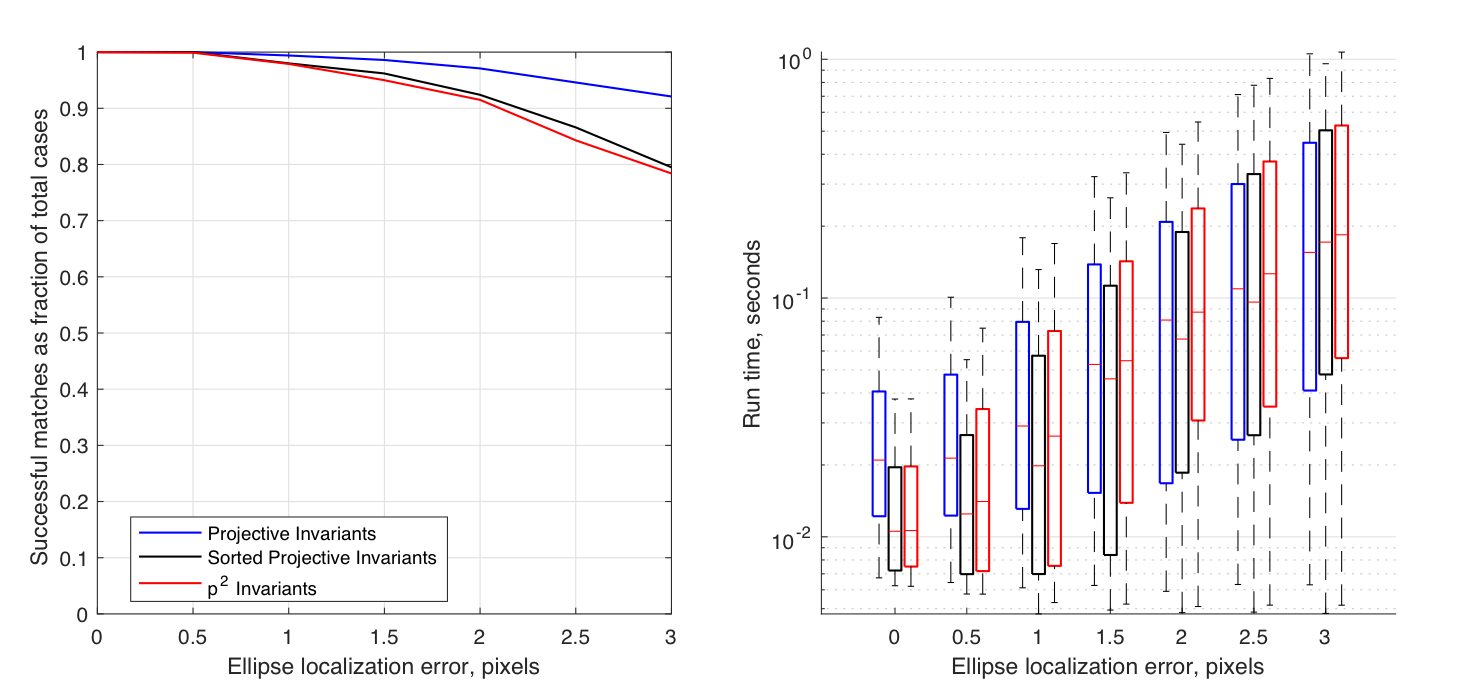}
	\caption{A comparison of crater matching performance and execution speed for 1,000 random images of the Moon. These images correspond to a spacecraft at 600 km altitude placed randomly around the Moon (uniform distribution over the lunar sphere). Matching is performed using the three-element (non-coplanar) crater triad descriptor from  Section~\ref{Sec:NonCoplanarTriadInvariants}. The left-hand plot shows the fraction of cases with a successful match. The balance of cases returned no match, and there were no cases where an incorrect match was returned. The right-hand plot shows execution time for the entire matching pipeline (conics as input, verified match as output) using the three different descriptors. These times correspond to non-optimized prototype code and should be taken to represent relative (rather than absolute) execution time.}
	\label{fig:DescriptorStats}
\end{figure}

\section{Building a Global Crater Index}
\label{Sec:BuildCrateIndex}
Global lunar crater identification requires multi-scale indexing and careful catalog curation. When close to the Moon, a spacecraft sees small craters that are nearly coplanar. Conversely, when far from the Moon, a spacecraft sees larger craters distributed over the lunar sphere (not coplanar). This immediately suggests at least two crater indexes be built---one for small patterns of nearly coplanar craters and another for larger patterns of non-coplanar craters. In practice, we find that the most efficient real-time performance is achieved by creating a single index for the planar case and a hierarchy of indexes for the non-coplanar case.  These indexes may be combined into a single large index or kept as separate indexes. We find the latter to often be the better choice.

The lunar crater database from \cite{Robbins:2018} contains 1.3M craters over about 1--2 km in diameter. It is immediately obvious that consideration of all the $\binom{1.3\text{M}}{3}=3.66\times10^{17}$ combinations of possible crater triads is not reasonable. 
This further motivates construction of a hierarchy of indexes with different scales to better manage the combinations of craters that could plausibly be observed at the same time. 

We propose the Hierarchical Equal Area isoLatitude Pixelization (HEALPix)\footnote{Software implementations of HEALPix in C++, FORTRAN, and Python are freely available online at \url{https://healpix.sourceforge.io}.} framework \cite{Gorski:2005} be used to subdivide the lunar surface into equal area regions, which we refer to as \emph{surface pixels}. HEALPix was developed for subdividing the celestial sphere in support of science objectives for the Wilkinson Microwave Anisotropy Probe (WMAP) mission \cite{Bennett:2003}, and has since found widespread use for analysis of data from other NASA and ESA missions [e.g., Cosmic Background Explorer (COBE), Planck]. It is also used for managing the creation of star quadrilaterals in the present state-of-the-art for star identification, calibration, and alignment of astrometric images \cite{Lang:2010}. We believe our work to be the first application of HEALPix to the management of lunar surface features.

All indexes are built using the same fundamental approach, but at different scales. The procedure is as follows: 

First, the lunar surface is tiled into $N_{pix} = 12 (2^{2k})$ surface pixels of equal area. Next, a list is constructed for each pixel containing the catalog entries for craters within a specified size range (e.g., minimum/maximum diameter) and whose catalog fit was constructed using at least 90\% of the rim's circumference.

Second, given a list of usable craters in each HEALPix surface pixel, we loop through all the pixels to create crater triads. At each pixel, we consider craters from the $3\times3$ HEALPix grid centered about the reference pixel. From these 9 surface pixels, all possible triads are formed where (1) craters do not intersect one another and (2) the triad center lies within the reference (center) surface pixel. Valid crater triads are arranged in a clockwise order and the scale-appropriate projective invariant descriptors are computed. To better illustrate this, Fig.~\ref{fig:ApolloHEALPix} shows an example $3\times3$ HEALPix grid (region of support for surface pixel 6318) overlayed on an example Metric Camera image from Apollo 17. 

\begin{figure}[b!]
\centering
\includegraphics[width=0.7\columnwidth,trim=0in 0in 0in 0in,clip]{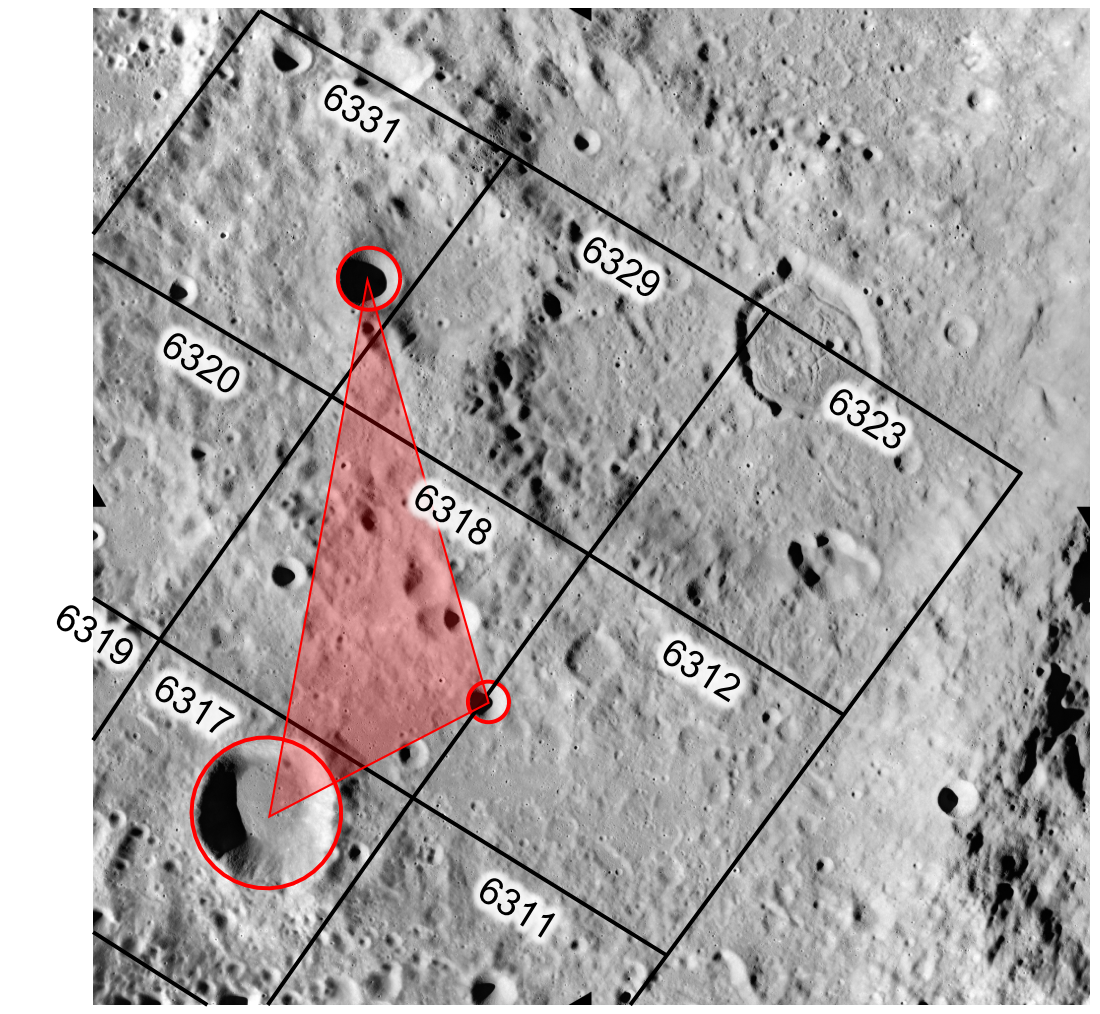}
	\caption{Example lunar crater triad belonging to HEALPix 6318. Overlay is on Apollo Metric Camera image AS17-M-0684. (Credit for raw scans of Apollo flight film images: NASA/JSC/ASU. See \cite{Robinson:2008,Lawrence:2008}.  }
	\label{fig:ApolloHEALPix}
\end{figure}

Third, once the triads belonging to all surface pixels have been computed, the results are stored in an efficiently searchable data structure. The authors have found excellent performance using either a $k$-d tree \cite{Bentley:1975} or $n$-d $k$-vector \cite{Arnas:2020}, though other reasonable choices exist. We briefly remark that the 1-d $k$-vector \cite{Mortari:2000,Mortari:2013} has seen extensive use for star pattern matching in space applications \cite{Mortari:2004,Spratling:2009},  and the $n$-d $k$-vector has been proposed for space applications as well \cite{Leake:2020}. The specific choice of data structure is not of primary concern in the present analysis so long as a reasonable selection is made.

\begin{figure}[b!]
\centering
\includegraphics[width=0.65\columnwidth,trim=0in 0in 0in 0in,clip]{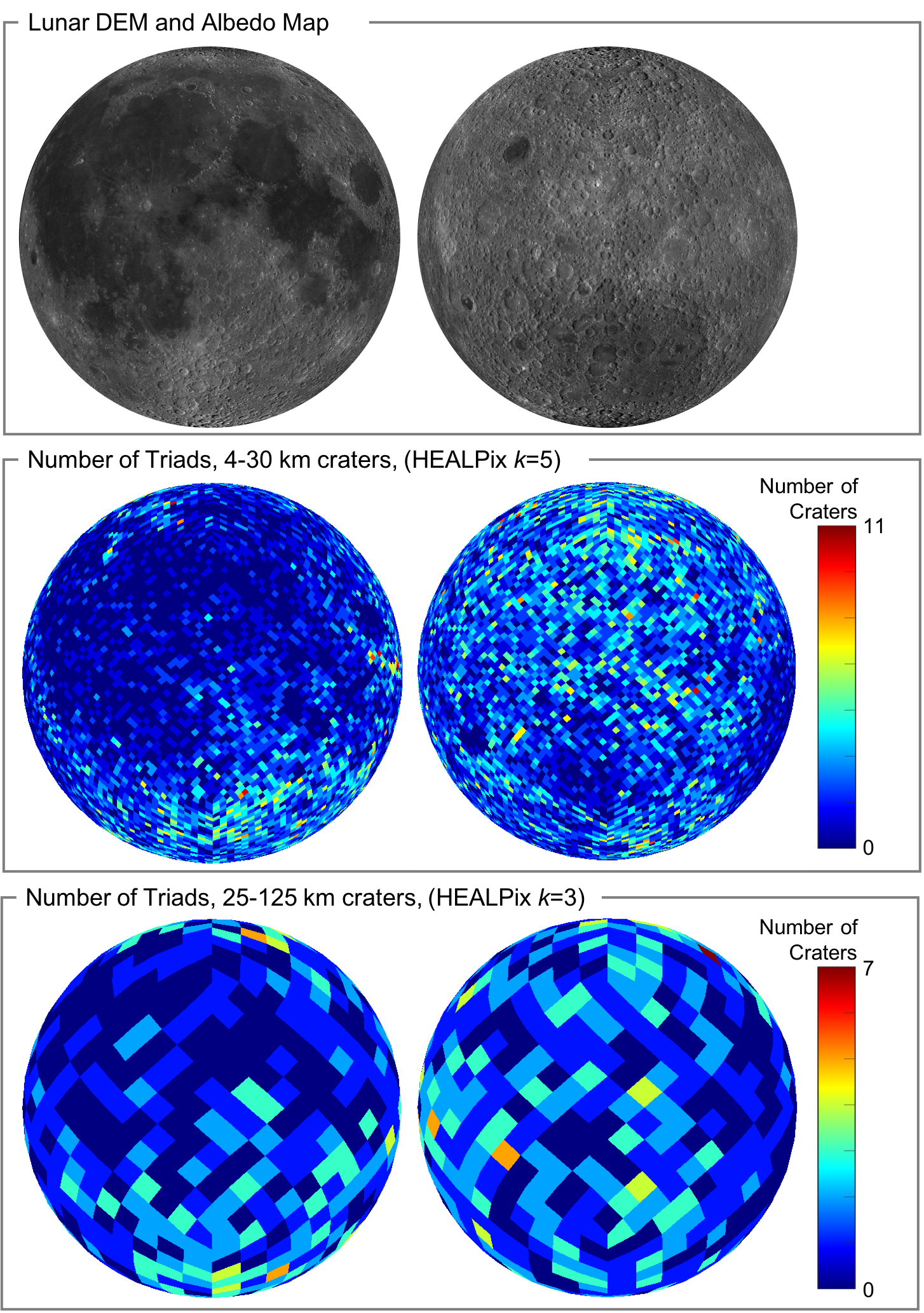}
	\caption{Graphical depiction of the number of triads in each HEALPix surface pixel. Left column shows orthographic projection of lunar near side, while the right column shows the same for the lunar far side.}
	\label{fig:CraterHEALPix}
\end{figure}

\begin{figure}[b!]
\centering
\includegraphics[width=1\columnwidth,trim=0in 0in 0in 0in,clip]{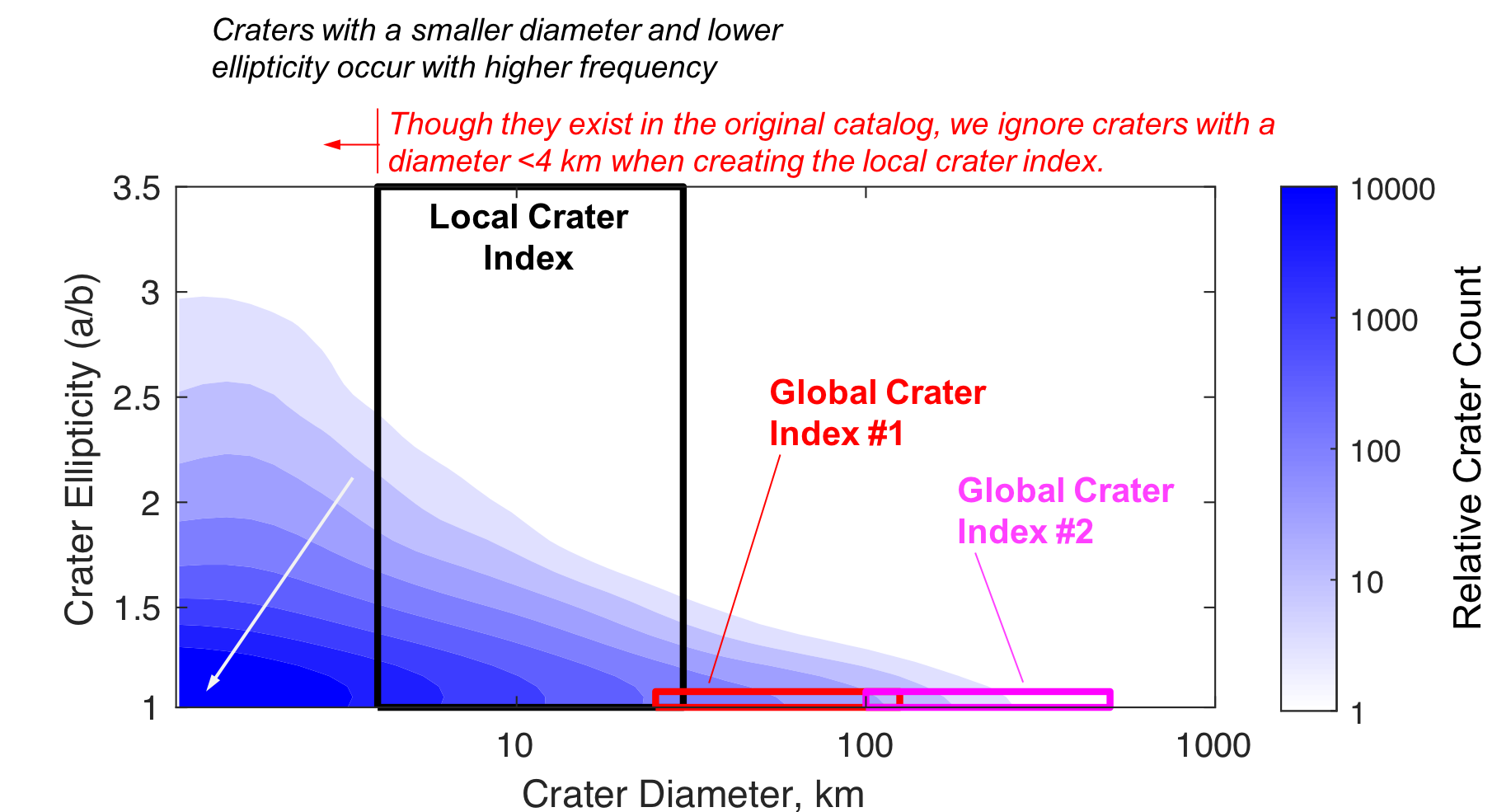}
	\caption{Relative lunar crater density as a function of diameter and ellipticity. Areas of darker blue indicate more craters per unit area. Overlayed boxes show regions from which craters are drawn to for the local and global crater indexes.}
	\label{fig:CraterIndexSizeEllip}
\end{figure}

\subsection{Indexing Local Crater Patterns with Coplanar Invariants}
To construct a global index of local-scale (small) crater patterns, we choose a HEALPix resolution  of $k=5$ for craters of diameter 4--30 km. This yields $N_{pix} = 12 (2^{10}) = 12,288$ surface pixels, with each surface pixel having a surface area of approximately $3,086 \text{ km}^2$.  There are 20,737 craters with catalog parameters supported by $>90\%$ of the crater rim circumference and having a diameter between 4 km and 30 km. Rather than producing $\binom{20,737}{3}=1.49\times10^{12}$ combinations, the HEALPix grouping strategy only produces the 4.8M crater triads that are nearly coplanar.

The local crater patterns are assumed coplanar and craters are allowed to follow an arbitrary elliptical shape. Thus, we construct a seven-element descriptor using an appropriate method from Section~\ref{Sec:PatternDescriptor}.

\subsection{Indexing Global Crater Patterns with Non-Coplanar Invariants}
For crater patterns that are regional or global in extent, we find it helpful to construct at least two indexes---though specific mission needs may require more or less. The extension to a single global index or many global indexes is trivial.

The Moon is very nearly a sphere at the global level. Therefore, because the non-coplanar invariants require each pair of ellipses to lie on a common quadric surface, the craters must be nearly circular. The global lunar crater databases  constructed here are built only from craters having an ellipticity of $a/b \leq 1.1$. Since the global patterns are assumed to lie on a nondegenerate quadric surface, we construct a three-element descriptor using an appropriate method from Section~\ref{Sec:PatternDescriptor}.

The first global crater index has a HEALPix resolution of $k=3$ for craters of diameter 25--125 km. This yields $N_{pix} = 12 (2^{6}) = 768$ surface pixels, with each surface pixel having a surface area of approximately $49,390 \text{ km}^2$. The index is good for matching crater patterns at the regional level. There are 904 nearly circular craters ($a/b \leq 1.1$) with catalog parameters supported by $>90\%$ of the crater rim circumference and having a diameter between 25 km and 125 km. Rather than producing $\binom{904}{3}=1.23\times10^{8}$ combinations, the HEALPix grouping strategy only produces the $140,929$ crater triads. The example crater triad in Fig.~\ref{fig:InvariantProjectionPANGU} (see Section~\ref{Sec:NonCoplanarInvariants}) was recognized using this index.

The second global crater index has a HEALPix resolution of $k=1$ for craters having a diameter over $100$ km.  This yields $N_{pix} = 12 (2^{2}) = 48$ surface pixels, with each surface pixel having a surface area of approximately $790,300 \text{ km}^2$. The index is good for matching crater patterns at the global level, when nearly an entire hemisphere is visible in an image. There are 31 nearly circular craters ($a/b \leq 1.1$) with catalog parameters supported by $>90\%$ of the crater rim circumference and having a diameter over $100$ km.  Rather than producing $\binom{31}{3}=4,495$ combinations, the HEALPix grouping strategy only produces the $707$ crater triads.

\subsection{Remarks on the Utility of Crater Index Hierarchies}

The primary purpose of having more than one index is to apply coplanar invariants at the local scale (allowing for arbitrary elliptical crater shape) and non-coplanar invariants at the regional/global scale (where curvature of the Moon makes craters lie in substantially different planes).

In practice, the orbital regime is often known ahead of time (e.g., LLO, cislunar), allowing us to only search crater patterns in the index of appropriate scale. If, however, we are truly ``lost-in-space'' and don't know the orbital regime \emph{a priori}, we have found that all indexes may be queried and only the correct one will produce a match. Thus, the hierarchy of indexes allows us to exploit mission-specific information when it is available, but does not necessarily require this information.

While this work builds a truly global index for both local crater patterns and global crater patterns, it may sometimes  be desirable to only include craters visible along a reference trajectory.

\section{Matching Observed Crater Patterns to a Pre-Built Index}
\label{Sec:IndexMatchingTop}
When supplied an image of the lunar surface, the objective is to recognize a pattern of observed craters using the image conics corresponding to the projection of the crater rims.

This is accomplished using a straightforward matching and and verification procedure. First, a crater detection algorithm (CDA) produces an ellipse fit to a crater rims in an image. Second, we consider triads of these image conics, cycling through combinations using the Enhanced Pattern Shifting (EPS) method \cite{Arnas:2017}. For each crater triad, we compute the appropriate descriptor (see Section~\ref{Sec:PatternDescriptor}) and query the corresponding pre-built index to find potential matches (e.g., best $N\geq1$ matches). For each crater match hypothesis, we compute the unknown camera position (see Section~\ref{Sec:ConicPose}). After first checking to ensure the position estimate is not inside the Moon (as would happen if we accidentally match to a pattern on the wrong side of the Moon), we then reproject the expected crater rim locations in the image using Eq.~\ref{eq:QuadricProjection}. The match is verified by comparing the observed and expected crater rims using a new distance metric (see Section~\ref{Sec:EllipseComp}). Previous methods only compared crater centers and thus required 4--5 correct correspondences to verify a match. Comparing the craters rims is more informative, allowing for pattern match verification with only three craters (thus allowing crater identification above more sparsely cratered terrain).

If a pattern match hypothesis is verified by this procedure, we declare this the solution and terminate the search. If a pattern match hypothesis is not verified we consider another hypothesis for that triad or move to the next triad in the EPS sequence. If we reach the end of the EPS sequence with no hypothesis being verified, we declare that no match is possible and terminate the search. 

The subsections that follow describe the mathematics for pose estimation from corresponding non-coplanar conics and a crater rim distance metric. Both of these algorithms are novel.

\subsection{Pose from Non-Coplanar Conics in Correspondence}
\label{Sec:ConicPose}

The usual approach for computing pose from matched craters only uses the coordinates of the crater center. Here, it is essential to remember that the center of an image ellipse does not generally produce a line-of-sight vector to the center of a 3D ellipse (or circle). Some past crater identification pipelines consider this effect (e.g., \cite{Park:2019}), but many do not. Regardless, there are a variety of algorithms one may use to compute pose from corresponding 2D image and 3D model points \cite{Ansar:2003,Lepetit:2009}. Unfortunately, however, using only the crater center points neglects the substantial amount of valuable navigation information contained within the shape of the projected crater rims. We aim to address this problem here.

Were the 3D conics strictly coplanar, there are a number of algorithms one could use to estimate pose from the projected image conics \cite{Sugimoto:2000,Kannala:2006,Wang:2017}. However, while we assume local crater patterns are coplanar for index building/matching, there is no reason to assume coplanarity for pose estimation or match verification. Furthermore, we require the ability to solve this problem for both local and global patterns---thus requiring a method for pose estimation from the projection of non-coplanar conics.

The literature discussing pose estimation from non-coplanar conics is limited. The most common approach is to look at a pair of ellipses in an image and construct a single-parameter family of possible poses from one of these two ellipses. This one parameter family is then numerically searched to find the best possible agreement with the second ellipse. This procedure was suggested in \cite{Ma:1993}, which is generally cited as the solution to this problem.

The approach of \cite{Ma:1993}, however, is not appropriate  for spacecraft navigation as it (1) does not fully use the information content of every conic and (2) it only deals with a conic pair. Instead, we search for a solution that uses all the information from a $d$-tuple (usually a triad) of image conics to solve for camera pose in the least squares sense.

Since the application at hand is lunar crater identification, we may simplify the problem by assuming the relative attitude is known. We consider this to be a reasonable assumption in practice since we have excellent knowledge of both the Moon's attitude (from ephemeris data; e.g., SPICE kernels \cite{Acton:1996,Acton:2018}) and the spacecraft attitude (from star trackers \cite{Liebe:1995,Liebe:2002}). There are two observations that can now be made. First, we can think of no plausible failure mode (where recovery is still possible) in which the spacecraft has no knowledge of time (necessary for finding lunar attitude from SPICE kernels) or of inertial attitude. Second, we observe that this still formally qualifies as a lost-in-space problem since it assumes no knowledge of the spacecraft translational states (position  or velocity). Moreover, the ``known'' attitude comes from the star tracker and there are many well-established lost-in-space star identification algorithms \cite{Spratling:2009,Rijlaarsdam:2020}. Regardless, we presume the attitude transformation matrix $\bT^M_C$ is known, such that the pose estimation problem only requires a solution for the selenographic camera location $\br_M$.

Therefore, let us begin with consideration of the projection of a single conic as described by Eq.~\ref{eq:HomographyPointConicWithScale}. Now, substitute for $\bH_{C_i}$ from Eq.~\ref{eq:DefHCi} and expand to find
\begin{equation}
    \label{eq:ConicPoseExpansion1}
    \bH_{M_i}^T \bB_i \bH_{M_i} - \bH_{M_i}^T \bB_i \br_M \bk^T - \bk \br_M^T \bB_i  \bH_{M_i} + \br_M^T \bB_i \br_M \bk \bk^T = s_i \bC_i 
\end{equation}
where
\begin{equation}
    \bB_i = \bT^C_M \bK^T \bA_i \bK \bT^M_C
\end{equation}
Recall here that $\bA_i$ describes the measured image conic, $\bC_i$ describes the 3D crater conic in the $i$th crater's ENU frame, $\bT^M_C$ is presumed known, and $\bk^T = [0 \; 0 \; 1]$. Thus,  the only unknown in Eq.~\ref{eq:ConicPoseExpansion1} is $\br_M$.

Now, recalling Eq.~\ref{eq:DefHMi} for $\bH_{M_i}$, rearrange the left-hand side into matrix form
\begin{equation}
    \left[ \begin{array}{c c c}
          \bS^T \bT_{E_i}^M  \bB_i \bT^{E_i}_M \bS & \bS^T \bT_{E_i}^M \bB_i \left(\bp_{M_i}-\br_M\right)  \\
          \left(\bp_{M_i}-\br_M\right)^T \bB_i^T \bT^{E_i}_M \bS  \ & \left(\bp_{M_i}-\br_M\right)^T \bB_i \left(\bp_{M_i}-\br_M\right)
   \end{array} \right] = s_i \bC_i
\end{equation}
where $\bS$ is from Eq.~\ref{eq:Smat}. It is immediately evident that the upper-left $2 \times 2$ submatrix is independent of $\br_M$, the lower-right element is quadratic in $\br_M$, and the remaining terms are linear in $\br_M$. Therefore, we first find the unknown scalar $s_i$ using only the upper-left $2 \times 2$ submatrix. Using the Frobenius norm, compute $\hat{s}_i$  as
\begin{equation}
    \hat{s}_i = \arg \min_{s_i} \left\| \bS^T \bT_{E_i}^M  \bB_i \bT^{E_i}_M \bS - s_i \bS^T \bC_i \bS \right\|^2_F
\end{equation}
which has the least squares solution
\begin{equation}
    \hat{s}_i =  \frac{\text{vec}(\bS^T \bC_i \bS)^T \text{vec}(\bS^T \bT_{E_i}^M  \bB_i \bT^{E_i}_M \bS) }{\text{vec}(\bS^T \bC_i \bS)^T  \text{vec}(\bS^T \bC_i \bS)}
\end{equation}
With the scale $s_i$ known, we can now take the upper-right $2\times1$ submatrix to form the linear system for all of the observed conics
\begin{equation}
    \left[ \begin{array}{c c c}
         \bS^T \bT_{E_1}^M \bB_1 \\
         \vdots \\
         \bS^T \bT_{E_n}^M \bB_n
   \end{array}
   \right] \br_M = 
   \left[ \begin{array}{c c c}
         \bS^T \bT_{E_1}^M \bB_i \bp_{M_1} - \hat{s}_i \bS^T \bC_1 \bk \\
         \vdots \\
         \bS^T \bT_{E_n}^M \bB_n \bp_{M_n} - \hat{s}_n \bS^T \bC_n \bk
   \end{array} \right]
\end{equation}
This may be solved for the selenographic camera location $\br_M$ in the least squares sense.

\subsection{Comparing Two Image Conics}
\label{Sec:EllipseComp}
In order to verify a crater match hypothesis, we require a measure of the distance between two conics. For crater pattern verification, this distance of interest is usually between the image ellipse we expect from the projection of a crater's rim ($\mathcal{A}_i$) and the image ellipse we measure of the same crater's rim ($\tilde{\mathcal{A}}_i$). For the moment, however, we will briefly discuss how to compute the distance between two arbitrary ellipses: $\mathcal{A}_i$ and $\mathcal{A}_j$.

Given two ellipses in an image, $\mathcal{A}_i$ and $\mathcal{A}_j$, we seek a scalar distance metric $d(\mathcal{A}_i,\mathcal{A}_j)$ that satisfies the three usual axioms for a distance metric \cite{Cullinane:2011}
\begin{enumerate*}
\item Minimality: $d(\mathcal{A}_i,\mathcal{A}_j)=0 \text{ iff } \mathcal{A}_i = \mathcal{A}_j$. That is, the distance between an ellipse and itself is zero.
\item Symmetry:  $d(\mathcal{A}_i,\mathcal{A}_j) = d(\mathcal{A}_j,\mathcal{A}_i)$. That is, the distance from $\mathcal{A}_i$ to $\mathcal{A}_j$ is  the same as the distance from $\mathcal{A}_j$ to $\mathcal{A}_i$.
\item Triangle Inequality: $d(\mathcal{A}_i,\mathcal{A}_j) \leq d(\mathcal{A}_i,\mathcal{A}_k)  + d(\mathcal{A}_k,\mathcal{A}_j) $
\end{enumerate*}
as well as a fourth axiom unique to this application
\begin{enumerate*}
\setcounter{enumi}{3}
\item Similarity Invariance: $d(\mathcal{A}_i,\mathcal{A}_j) = d(S[\mathcal{A}_i],S[\mathcal{A}_j])$, where $S[\cdot]$ is a similarity transformation. That is, the distance between $\mathcal{A}_i$ to $\mathcal{A}_j$ should not change if the two ellipses undergo a common translation, rotation, or scaling in the image (i.e., undergo a common similarity transformation). See Fig.~\ref{fig:JaccardSimInv}.
\end{enumerate*}
After reviewing the literature, we found numerous approaches for comparing ellipses, but all fail to meet one (or more) of the above four axioms. Thus, after a brief review of existing techniques, a novel method is proposed.

\begin{figure}[b!]
\centering
\includegraphics[width=0.6\columnwidth,trim=0in 0in 0in 0in,clip]{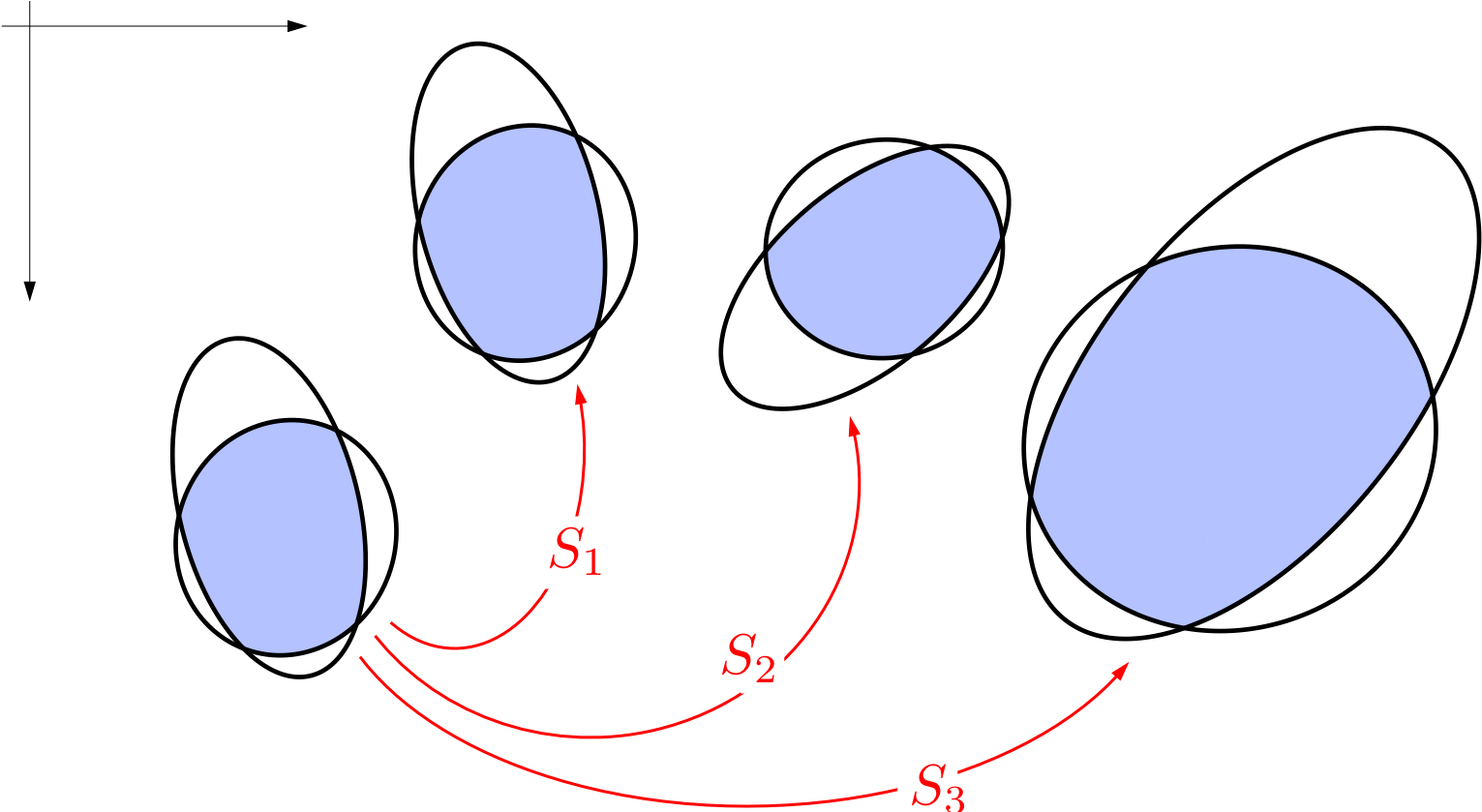}
	\caption{Examples of an ellipse pair undergoing a simiarity transformation. The left-most ellipse pair may  be  translated ($S_1$); translated and rotated ($S_2$); or translated, rotated, and scaled ($S_3$). Under the similarity invariance axiom for an ellipse pair distance metric, all four of these ellipse pairs should produce the same value for distance since they all have the same relative geometry. Observe that all four of these ellipse pairs have the same Jaccard distance and Gaussian  angle. The area of ellipse intersection (i.e., ellipse overlap) is shaded in light blue.}
	\label{fig:JaccardSimInv}
\end{figure}

There exist various \emph{ad hoc} methods based on the explicit ellipse parameters ($a$, $b$, $x_c$, $y_c$, and $\psi$) \cite{Prasad:2010,Cuevas:2014} or by measuring the distance only at one specific point (e.g., via the Hausdorff distance \cite{Swirski:2012,Solarna:2020}), though none of these are similarity invariants. Moreover, these methods have changing geometric meaning as the two ellipses change shape and relative orientation. While not a violation of one of the axioms, it is not a desirable attribute.

Another approach is to apply a common normalization to the implicit representation of both conics (e.g., $\text{det}(\bA_i) = 1$) and then compute the Frobenius norm of their difference \cite{Hosseinzadeh:2018}. There are various normalizations one could choose for the matrix $\bA$, as summarized in \cite{Kanatani:2011}. Nevertheless, such a comparison metric is given by
\begin{equation}
    d_F(\bA_{i},\bA_{j}) = \| \bA_{i} - \bA_{j} \|^2_{F}
\end{equation}
though the specific geometric meaning of this is not clear. This metric also fails to meet the similarity invariance axiom.

We also considered computing the distance as the line integral about one ellipse of the Euclidean distance to the other ellipse. While this admits an elegant and efficient solution with many interesting properties, it fails to meet both the symmetry and similarity invariance axioms. To meet the symmetry requirement, it would be possible to compute this metric in both directions and sum the result, though this is expensive. Regardless, a distance computed in this fashion will still fail to meet the similarity invariance axiom. The method was thus abandoned for the present application.

The deficiencies of the above methods motivate the need for another metric that satisfies all four distance axioms, has a geometrically consistent meaning, and possesses well-understood statistics. There are at least two such distance metrics: one using the Jaccard distance and another by interpreting the ellipse as a multivariate Gaussian.

\subsubsection{Ellipse Pair Distance Metric with the Jaccard Index}
The Jaccard index is the ratio of an intersection to a union. In this case, we take it to be the ratio of the areas of the ellipse intersection to the ellipse union,
\begin{equation}
    J(\mathcal{A}_i,\mathcal{A}_j) = \frac{ \text{Area}( \mathcal{A}_i \cap  \mathcal{A}_j )  }{ \text{Area}( \mathcal{A}_i \cup  \mathcal{A}_j ) }
\end{equation}
The Jaccard distance, which is well-known to satisfy all three of the classical distance metric axioms \cite{Levandowski:1971}, is computed as
\begin{equation}
    d_J(\mathcal{A}_i,\mathcal{A}_j) = 1 - J(\mathcal{A}_i,\mathcal{A}_j) =  1 - \frac{ \text{Area}( \mathcal{A}_i \cap  \mathcal{A}_j )  }{ \text{Area}( \mathcal{A}_i \cup  \mathcal{A}_j ) }
\end{equation}
It is straightforward to see that this metric is also satisfies our similarity invariance axiom.

The only difficulty with the Jaccard distance is the quick computation of the intersection area. Although it is possible to analytically compute the area of ellipse overlap \cite{Hughes:2012}, doing so requires the separate consideration of nine different relative ellipse configurations. Such algorithm  branching is generally undesirable. Fortunately, we recall that the ellipses in question represent regions of substantial extent in a digital image. Thus, the Jaccard distance may be approximated as the ratio of the number of pixels inside both ellipses to the number of pixels inside either ellipse. We may quickly count the pixels inside $\mathcal{A}_i$ by finding those pixels that satisfy the inequality $\bar{\bu}^T \bA_i \bar{\bu} < 0$,  which is a simple and  parallelizable computation. Once we count the pixels in this way, computing the Jaccard distance is trivial.

\subsubsection{Ellipse Pair Distance Metric with the Gaussian Angle}
It is possible to interpret the 2D ellipse $\mathcal{A}_i$ as a bivariate Gaussian probability density function. If that interpretation is used for both  $\mathcal{A}_i$ and $\mathcal{A}_j$,  the distance between two ellipses may be computed using any number of measures for comparing Gaussian PDFs. The authors of \cite{Yang:2016} follow this interpretation and consider the use of the Kullback-Leibler Divergence (KLD) and variants of the Wasserstein distance. The KLD, however, does not satisfy either the symmetry or triangle inequality axioms. The Wasserstein distance meets the first three axioms, but is not invariant under scaling (hence, fails the similarity invariance axiom---though modifications to the classical metrics may alleviate this shortcoming).

We introduce a different distance metric also based on the interpretation of $\mathcal{A}_i$ as a bivariate Gaussian. Therefore, begin by subdividing the conic locus matrix $\bA_i$ describing the image ellipse $\mathcal{A}_i$ as
\begin{equation}
   \bA_i \propto 
   \left[ \begin{array}{c c c}
         \bY_i & -\bY_i  \by_i \\
         -\by_i^T \bY_i^T & (\by_i^T \bY_i \by_i - 1)
   \end{array} \right]
\end{equation}
and it follows that that $\bbu^T_i \bA_i \bbu_i =  0$ and that
\begin{equation}
   (\bu_i - \by_i)^T \bY_i (\bu_i - \by_i) = 1
\end{equation}
With this particular scaling of $\bA_i$, we immediately recognize the $2\times2$ submatrix $\bY_i$ as the shape of the ellipse 
\begin{equation}
   \bY_i =
   \left[ \begin{array}{c c c}
         \cos \psi_i & -\sin \psi_i\\
         \sin \psi_i & \cos \psi_i
   \end{array} \right]
   \left[ \begin{array}{c c c}
         1/a^2_i & 0\\
         0 & 1/b_i^2
   \end{array} \right]
   \left[ \begin{array}{c c c}
         \cos \psi_i & \sin \psi_i\\
         -\sin \psi_i & \cos \psi_i
   \end{array} \right]
\end{equation}
and $\by^T_i=[u_{c_i}, v_{c_i}]$ as the image coordinates of the ellipse center. If we interpret the ellipse as the $1\sigma$ isofrequency contour of a bivariate Gaussian distribution (i.e., the so-called $1\sigma$ error ellipse), the ellipse $\mathcal{A}_i$ corresponds to  the distribution $N(\by_i,\bY^{-1}_i)$ with probability density
\begin{equation}
p_{\mathcal{A}_i}(\bu) = \frac{|\bY_i|}{2 \pi} \exp\left[ -\frac{1}{2}(\bu-\by_i)^T \bY_i (\bu-\by_i) \right]
\end{equation}

Therefore, given two ellipses $\mathcal{A}_i$ and $\mathcal{A}_j$, we may compute the distance between these two ellipses as the angle between $p_{\mathcal{A}_i}(\bu)$ and $p_{\mathcal{A}_j}(\bu)$. Such an angle is computed as the inner product
\begin{equation}
     \cos \theta = \frac{\int p_{\mathcal{A}_i}(\bu) \,  p_{\mathcal{A}_j}(\bu) }{ \sqrt{ \int p^2_{\mathcal{A}_i}(\bu)  \int p^2_{\mathcal{A}_j}(\bu)  }  }
\end{equation}
and we compute a distance metric as
\begin{equation}
    \label{eq:DefDistanceGA}
    d_{GA}(\mathcal{A}_i,\mathcal{A}_j) = \theta = \arccos \left[
    \frac{\int p_{\mathcal{A}_i}(\bu) \, p_{\mathcal{A}_j}(\bu) }{ \sqrt{ \int p^2_{\mathcal{A}_i}(\bu)  \int p^2_{\mathcal{A}_j}(\bu)  }  }
    \right]
\end{equation}
From inspection, this clearly satisfies the minimality and symmetry conditions. Satisfaction of the triangle inequality follows from the Cauchy–Schwarz inequality.

We compute
\begin{equation}
    \label{eq:intpi2}
    \int p^2_{\mathcal{A}_i}(\bu) = \frac{|\bY_i|^2}{4 \pi^2} \int \exp\left[ -\frac{1}{2}(\bu-\by_i)^T \bY_i (\bu-\by_i) \right] = \frac{|\bY_i|^2}{4 \pi^2} \cdot \frac{2 \pi}{|2 \bY_i|} = \frac{|\bY_i|}{8 \pi}
\end{equation}
To find the numerator, we first write
\begin{equation}
    \int p_{\mathcal{A}_i}(\bu) \, p_{\mathcal{A}_j}(\bu) = \frac{|\bY_i||\bY_j|}{4 \pi^2} \int \exp\left[ -\frac{1}{2}\left(  (\bu-\by_i)^T \bY_i (\bu-\by_i) + (\bu-\by_j)^T \bY_j (\bu-\by_j) \right)\right]
\end{equation}
Now, considering the term inside the exponential,
\begin{equation}
     (\bu-\by_i)^T \bY_i (\bu-\by_i) + (\bu-\by_j)^T \bY_j (\bu-\by_j) =  (\bu-\bz)^T ( \bY_i + \bY_j ) (\bu-\bz) + c 
\end{equation}
where
\begin{equation}
     \bz = (\bY_i + \bY_j)^{-1} (\bY_i \by_i + \bY_j \by_j )
\end{equation}
\begin{equation}
     c = (\by_i - \by_j)^T \bY_i (\bY_i + \bY_j)^{-1} \bY_j (\by_i - \by_j)
\end{equation}
Thus, noting that $c$ is a constant, we rewrite the integral as
\begin{align}
    & \int \exp\left[ -\frac{1}{2}\left(  (\bu-\by_i)^T \bY_i (\bu-\by_i) + (\bu-\by_j)^T \bY_j (\bu-\by_j) \right)\right] \\
    & = \int \exp\left[ -\frac{1}{2} (\bu-\bz)^T ( \bY_i + \bY_j ) (\bu-\bz) + c \right] \\
    & = \exp\left[-\frac{c}{2}\right]  \int  \exp\left[ -\frac{1}{2} (\bu-\bz)^T ( \bY_i + \bY_j ) (\bu-\bz) \right] \\
    & = \exp\left[-\frac{c}{2}\right]   \frac{2 \pi}{|\bY_i + \bY_j|}
\end{align}
Consequently,
\begin{align}
    \label{eq:intpipj}
    \int p_{\mathcal{A}_i}(\bu) \, p_{\mathcal{A}_j}(\bu) & = \frac{|\bY_i||\bY_j|}{2 \pi |\bY_i + \bY_j| } \exp\left[-\frac{c}{2}\right]  \\
    & = \frac{|\bY_i||\bY_j|}{2 \pi |\bY_i + \bY_j| } \exp\left[-\frac{1}{2} (\by_i - \by_j)^T \bY_i (\bY_i + \bY_j)^{-1} \bY_j (\by_i - \by_j)  \right] \nonumber
\end{align}

We may therefore compute the distance metric $d_{GA}$ by substitution of Eq.~\ref{eq:intpi2} and \ref{eq:intpipj} into Eq.~\ref{eq:DefDistanceGA},
\begin{equation}
    d_{GA}(\mathcal{A}_i,\mathcal{A}_j) = \arccos \left\{ \frac{4\sqrt{|\bY_i||\bY_j|}}{|\bY_i + \bY_j|} \exp\left[-\frac{1}{2} (\by_i - \by_j)^T \bY_i (\bY_i + \bY_j)^{-1} \bY_j (\by_i - \by_j)  \right]  \right\}
\end{equation}
In addition to satisfying the four required axioms, one of the great advantages of this particular distance metric is that it may be analytically computed from the parameters of the two image ellipses $\mathcal{A}_i$ and $\mathcal{A}_j$.

\subsubsection{A Fast Method for Assessing Ellipse Correspondence}
We propose to use the Gaussian angle distance metric as the criteria for evaluating a potential crater correspondence. In the case where $\mathcal{A}_j$ is a perturbed version of $\mathcal{A}_i$, we find that, to an excellent approximation, 
\begin{equation}
    \frac{d^2_{GA}}{\sigma^2} \sim \chi^2_4 
\end{equation}
where 
\begin{equation}
    \sigma \approx \frac{0.85}{\sqrt{a_i b_i}}\sigma_{img}
\end{equation}
The validity of this approximation is illustrated in Fig.~\ref{fig:EllipseFitChiSquare} for both a nearly circular crater and a very elliptical crater.
\begin{figure}[b!]
\centering
\includegraphics[width=1\columnwidth,trim=0in 0in 0in 0in,clip]{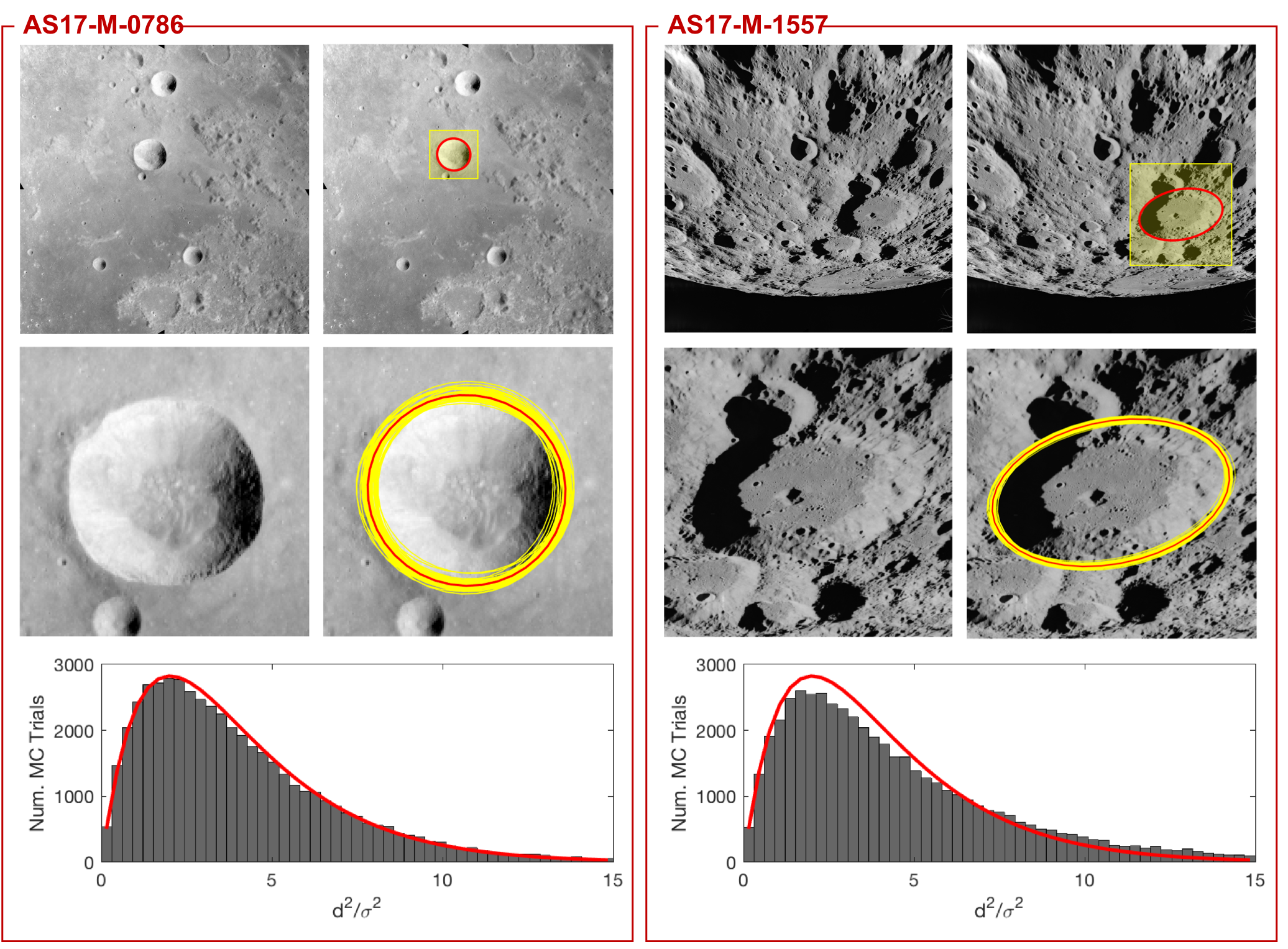}
	\caption{Example Monte Carlo (MC) results (50,000 runs) for crater contour correspondence check using Metric Camera images from Apollo 17. The left frame shows results on image AS17-M-0786 collected on 12 December 1972 (17:31:56 UTC), while the right frame shows results on image AS17-M-1557 collected on 13 December 1972 (16:32:57 UTC). The top pair of images show the original Apollo image (left) and with the overlay of a red elliptical crater rim and yellow region of interest (right). The bottom pair of images is the zoomed-in region of interest by itself (left) and with an overlay of the red circular crater rim and 50 yellow example crater rims from the Monte Carlo (right). The histograms at the bottom show the statistic $d^2/\sigma^2 \sim \chi^2_4$. The red overlay on the histogram is the analytic $\chi^2_4$ probability density function (PDF). (Credit for raw scans of Apollo flight film images: NASA/JSC/ASU. See \cite{Robinson:2008,Lawrence:2008}.)}
	\label{fig:EllipseFitChiSquare}
\end{figure}

Since the distance metric is observed to follow a $\chi^2_4$ distribution, we may construct a statistically-informed criterion for accepting a crater match hypothesis. This is accomplished by evaluating the quantile function for the $\chi^2_4$ distribution. For example, the 99th percentile for the  $\chi^2_4$ distribution occurs at 13.277, and would lead to an acceptance criteria of
\begin{equation}
\label{eq:CraterMatchHypothesis}
\text{Hypothesis}(\mathcal{A}_i = \tilde{\mathcal{A}}_i)  \leftarrow
\left\{ \begin{array}{l l c}
          \text{Accept} & d^2_{{GA}_i} /  \sigma^2 \leq 13.276  \\
          \text{Reject} & \text{otherwise}
   \end{array} \right. 
\end{equation}
The appropriate percentile for accepting or rejecting craters is application dependent and left as a design variable left to the analyst.

\section{Numerical Results}
\label{Sec:NumResults}
\subsection{Sensitivity to Crater Rim Fit and Viewing Geometry}
We performed a Monte Carlo analysis to evaluate the sensitivity of the proposed crater matching methodology to differing levels of measurement noise and under different viewing geometries.

Therefore, as an example, consider a spacecraft placed randomly (with a uniform distribution) around the lunar sphere, thus creating a situation where we randomly image with equal probability any part of the lunar globe. We assume a camera with a full field-of-view (FOV) of about $73.7 \times 73.7$ deg (same  as the Apollo metric camera  \cite{Edmundson:2016,Wu:1980}) and with a $2,200 \times 2,200$ pixel focal plane array (creating a 4.8 MPixel image). 

We now  consider two parametric scenarios: (1) a nadir pointing camera with varying errors in crater rim localization and (2) a fixed level of error in crater rim localization and increasing off-nadir  pointing angle. For the first scenario (nadir pointing images), let the error in the ellipse parameters $a$,  $b$,  $u_c$, $v_c$ all vary by $\sigma_{img}$ and range from 0 to 3 pixel. Subpixel estimation of these parameters is not uncommon since we generally fit an ellipse to hundreds of pixel measurements around the crater circumference. Thus, for the second scenario (off-nadir pointing images), we  assume an error of 0.5 pixel in the ellipse parameters.

\subsubsection{Local Crater Matching Results}
Suppose the randomly placed spacecraft has an altitude of 150 km above the lunar surface. At this altitude, the camera views primarily local crater patterns. In this case, results for the two Monte Carlo scenarios are shown in Table~\ref{tab:MCLocalEllipseFit} and Table~\ref{tab:MCLocalPointing}.

\begin{table}[b!]
	\caption{Monte Carlo results (100 trials) for local crater matching using seven-element descriptor. A nadir-pointing camera is placed randomly around the lunar globe at an altitude of 150 km, with ellipse fit  error varying from 0 to 3 pixel.}
	\centering{}%
	\label{tab:MCLocalEllipseFit}
	\begin{tabular}{ l r r r rr }
	\hline 
	\hline 
	Ellipse Fit & Correct & Incorrect & No Match  & Less Than &  RSS  Camera\\
	Error & Matches & Matches & Found & Three Craters & Position Error \\
	\hline 
	 0.0 pixel & 100  &   0  &   0  &   0 & 2.5$\times10^{-6}$ m \\
     0.5 pixel & 96  &   0  &   1  &   3 & 116 m\\
     1.0 pixel & 96  &   0  &   4  &   0 & 285 m\\
     1.5 pixel & 94  &   0  &   4  &   2 & 428 m \\
     2.0 pixel & 91  &   0  &   7  &   2 & 620 m\\
     2.5 pixel & 93  &   0  &   7  &   0 & 696 m \\
     3.0 pixel & 83  &  0   & 15  &   2 & 923 m\\
	\hline 
	\hline 
\end{tabular}
\end{table}

\begin{table}[b!]
	\caption{Monte Carlo results (100 trials) for local crater matching using seven-element descriptor. A camera is placed randomly around the lunar globe at an altitude of 150 km, with an off-nadir pointing varying from 0 to 30 deg.}
	\centering{}%
	\label{tab:MCLocalPointing}
	\begin{tabular}{ l r r r r r}
	\hline 
	\hline 
	Off-Nadir & Correct & Incorrect & No Match  & Less Than & RSS  Camera \\
	Angle & Matches & Matches & Found & Three Craters & Position Error \\
	\hline 
	 {~0} deg &98  &   0  &   1  &   1 & 140 m\\
     10 deg   &97  &   0  &   3  &   0 & 147 m\\
     20 deg   &99  &   0  &   1  &   0 & 134 m\\
     30 deg   &96  &   0  &   4  &   0 & 178 m\\
	\hline 
	\hline 
\end{tabular}
\end{table}

From Table~\ref{tab:MCLocalEllipseFit} we observe that matching performance is not appreciably affected by ellipse localization error until these errors reach about 2--3 pixels. Even then, the degradation in performance is gradual. The dropping percentage of correct matches occurs because the closest (nearest neighbor) index match to the computed invariants is no longer correct. We could delay the onset of this effect by considering additional neighbors beyond just the single best match, but this would increase run-time. The reader should also be warned that these results do not suggest that crater localization error always needs to be better than 2--3 pixels. What matters is not the absolute rim localization error, but rather the rim localization error as compared to the size of the image craters (i.e., a 2 pixel error on a crater with a 10 pixel diameter is significant, while a 2 pixel error on a crater with a 500 pixel diameter is not significant).

Test cases are counted in the ``No Match Found'' column when we see at least three craters but the algorithm produces no matches. This can happen for a number of reasons. First, the acceptance threshold for a crater match was set at the  99th percentile (as in Eq.~\ref{eq:CraterMatchHypothesis}), which would suggest that $1-0.99^3\approx3\%$ of the triads should fail to match due to measurement noise  (this acceptance threshold can be adjusted based on the analyst's preferences). When there are more than three craters in an image, the likelihood of this happening becomes rather small (but is never zero). The second reason for no match is that no combination of observed craters corresponds to an entry in the index. Given the FOV of this particular camera, it is possible to see more than nine HEALPix surface pixels at one time---thus creating the possibility of observing three craters that do not correspond to a triad within the index.

From Table~\ref{tab:MCLocalPointing} we observe that off-nadir pointing has no meaningful effect (at least up to 30 deg) on our ability to recognize a crater pattern.

\subsubsection{Global Crater Matching Results}
Suppose the randomly placed spacecraft has an altitude of 600 km above the lunar surface. At this altitude, the camera views primarily regional/global crater patterns. In this case, results for the two Monte Carlo scenarios are shown in Table~\ref{tab:MCGlobalEllipseFit} and Table~\ref{tab:MCGlobalPointing}.

\begin{table}[h!]
	\caption{Monte Carlo results (100 trials) for global crater matching  using three-element descriptor. A nadir-pointing camera is placed randomly around the lunar globe at an altitude of 600 km, with ellipse fit  error varying from 0 to 3 pixel.}
	\centering{}%
	\label{tab:MCGlobalEllipseFit}
	\begin{tabular}{ l r r r rr }
	\hline 
	\hline 
	Ellipse Fit & Correct & Incorrect & No Match  & Less Than &  RSS  Camera\\
	Error & Matches & Matches & Found & Three Craters & Position Error \\
	\hline 
	 0.0 pixel & 100  &   0  &   0  &   0 & 1.1$\times10^{-6}$ m \\
     0.5 pixel & 100  &   0  &   0  &   0 & 485 m\\
     1.0 pixel & 100 &   0  &   0  &   0 & 1,294 m\\
     1.5 pixel & 99  &   0  &   1  &   0 & 1,790 m \\
     2.0 pixel & 99  &   0  &   1  &   0 & 2,605 m\\
     2.5 pixel & 96 &   0  &    4  &   0 & 3,549 m\\
     3.0 pixel & 95 &   0  &   5  &   0 & 4,500 m\\
	\hline 
	\hline 
\end{tabular}
\end{table}

\begin{table}[h!]
	\caption{Monte Carlo results (100 trials) for global crater matching using three-element descriptor. A camera is placed randomly around the lunar globe at an altitude of 600 km, with an off-nadir pointing varying from 0 to 30 deg.}
	\centering{}%
	\label{tab:MCGlobalPointing}
	\begin{tabular}{ l r r r r r}
	\hline 
	\hline 
	Off-Nadir & Correct & Incorrect & No Match  & Less Than & RSS  Camera \\
	Angle & Matches & Matches & Found & Three Craters & Position Error \\
	\hline 
	 {~0} deg &100  &   0   &  0   &  0 & 549 m\\
     10 deg   & 99  &   0  &   1&     0 & 539 m\\
     20 deg   &99   &  0    & 1  &   0 & 652 m\\
     30 deg   &  97  &   0   &  2 &    1 & 866 m\\
	\hline 
	\hline 
\end{tabular}
\end{table}

We observe the same basic trend for the global pattern as for the local patterns. Matching performance degrades as we approach localization error of around 2--3 pixels, which occurs at the roughly same level of noise only because the 600 km altitude choice. We once again observe very little affect on matching performance due to off-nadir pointing.

Note that the camera location is computed in exactly the same way for the local and global matching examples (both use the algorithm from Section~\ref{Sec:ConicPose}). The global camera location errors are larger because the images are taken from a higher altitude.  Regardless, errors are still less than 1 km for the local crater patterns and less than 5 km for the global crater patterns---even with rim localization errors much larger than should be expected in practice.

\subsection{Local Matching on Clementine Images}
We demonstrate the seven-element local matching descriptor on real images collected by the Clementine spacecraft's UV/Visible (UV/Vis) camera. During it's 71 days in lunar orbit in 1994 \cite{Nozette:1994}, the Clementine mission collected millions of images of the lunar surface with a variety of instruments. One of these instruments was the UV/Visible camera, a $4.2 \times 5.6$ deg FOV camera with a $288 \times 384 $ pixel CCD focal plane array \cite{Kordas:1995,Hillier:1999}. Images of the lunar surface are available at a variety of ranges and viewpoints and serve as an excellent source of data for testing our proposed crater identification algorithms.

Some example crater identification results are shown in Fig.~\ref{fig:ClementineOverlay}. These results show good matching performance, both for nearly nadir pointing images (top two rows) and for images from oblique viewpoints (bottom row). For each example image, the best fit ellipse parameters (as obtained from the image) were perturbed by a Gaussian distribution with a standard deviation of $\sigma_{img} = 0.5$ pixel. The noise causes a different crater pattern to be matched first for each run, resulting in different crater triads shown in Fig.~\ref{fig:ClementineOverlay}. In each case, the craters with the white outline (and with centers connected by the thin yellow line) were correctly matched to the  Robbins catalog \cite{Robbins:2018} using the local crater index (HEALPix with $k=5$) described in Section~\ref{Sec:BuildCrateIndex}.

\begin{figure}[b!]
\centering
\includegraphics[width=1\columnwidth,trim=0in 0in 0in 0in,clip]{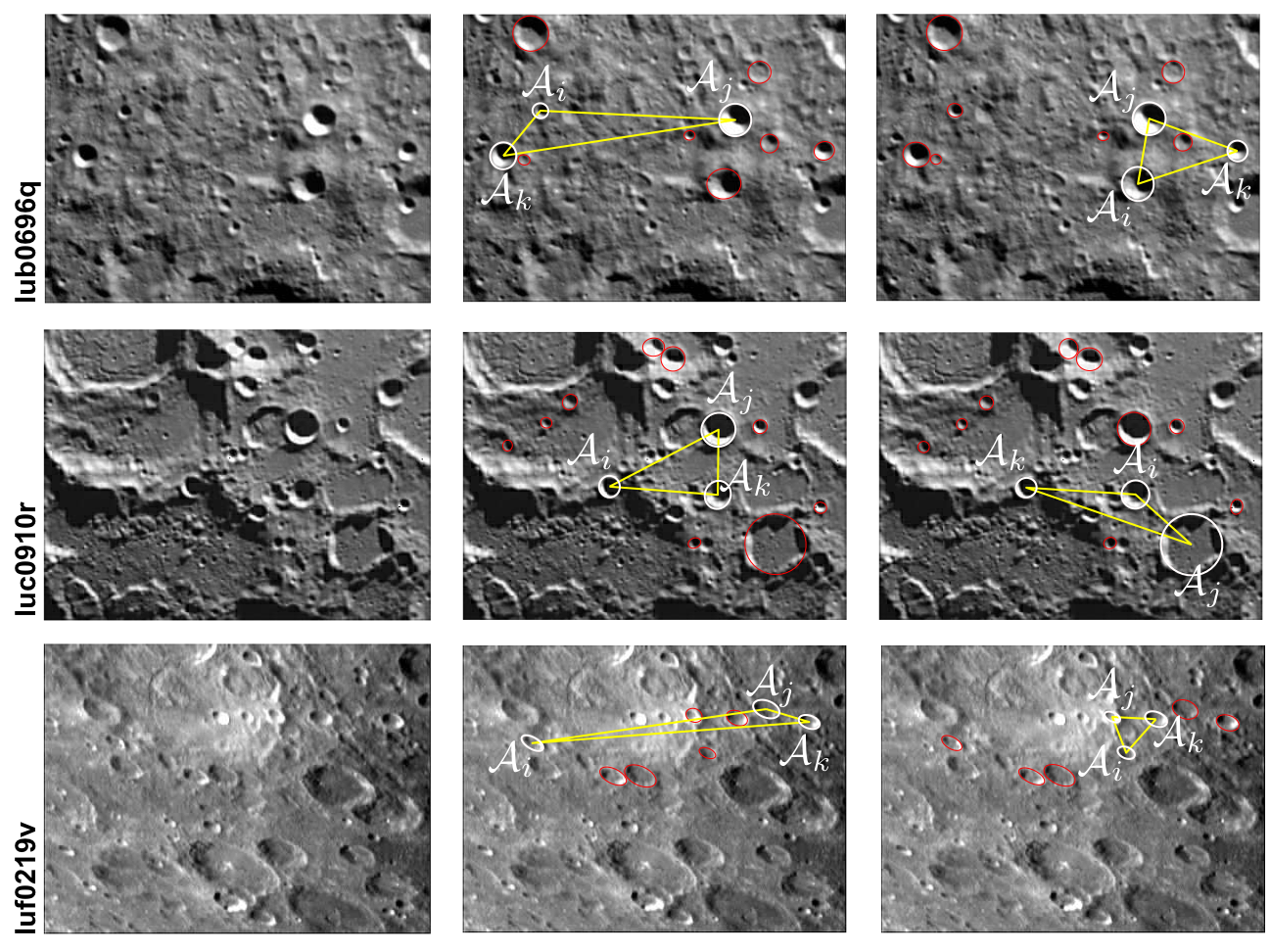}
	\caption{Example crater identification on a sampling of images from the Clementine UV/Vis Camera. Each row shows results for a different Clementine image. Left frame is the raw image from \cite{NRL:1995}. Center and right frame show examples of successful (correct) crater triad matches in white, with centers connected by a yellow triangle to facilitate pattern visualization. Other craters (observed but not contributing to the matched triad) are shown in red. Matches were performed using the seven-element descriptor for local patterns.}
	\label{fig:ClementineOverlay}
\end{figure}

\subsection{Global Matching on Synthetic Images on Reference Lunar Trajectory }
We demonstrate the three-element global matching descriptor on synthetic images along a reference lunar trajectory.

The synthetic images were generated using PANGU, and have a resolution of  $2,048 \times 2,592$ pixels with a 30 deg horizontal FOV. Three example images are shown in Fig.~\ref{fig:InvariantsNRHO}. For each image, the parameters of reference crater ellipse ($a$, $b$,  $u_c$, and $v_c$) were then perturbed by with uncorrelated Gaussian noise of $\sigma_{img} = 1.0$ pixel. This was repeated 1,000 times for each image (with each trial having different noise applied to the crater rim fits) to understand  the sensitivity  of the invariants to measurement noise. Histograms of the three invariants are also shown in Fig.~\ref{fig:InvariantsNRHO}. Since the measurement noise is fixed ($\sigma_{img} = 1.0$ pixel on $a$,  $b$, $u_c$, and $v_c$), we see the histograms widen as the spacecraft gets farther away from the Moon (top to bottom in figure). This occurs because the size of the pattern shrinks relative to the fixed magnitude of ellipse localization error---hence the fixed error causes a greater perturbation of the crater rim geometry.

\begin{figure}[b!]
\centering
\includegraphics[width=1\columnwidth,trim=0in 0in 0in 0in,clip]{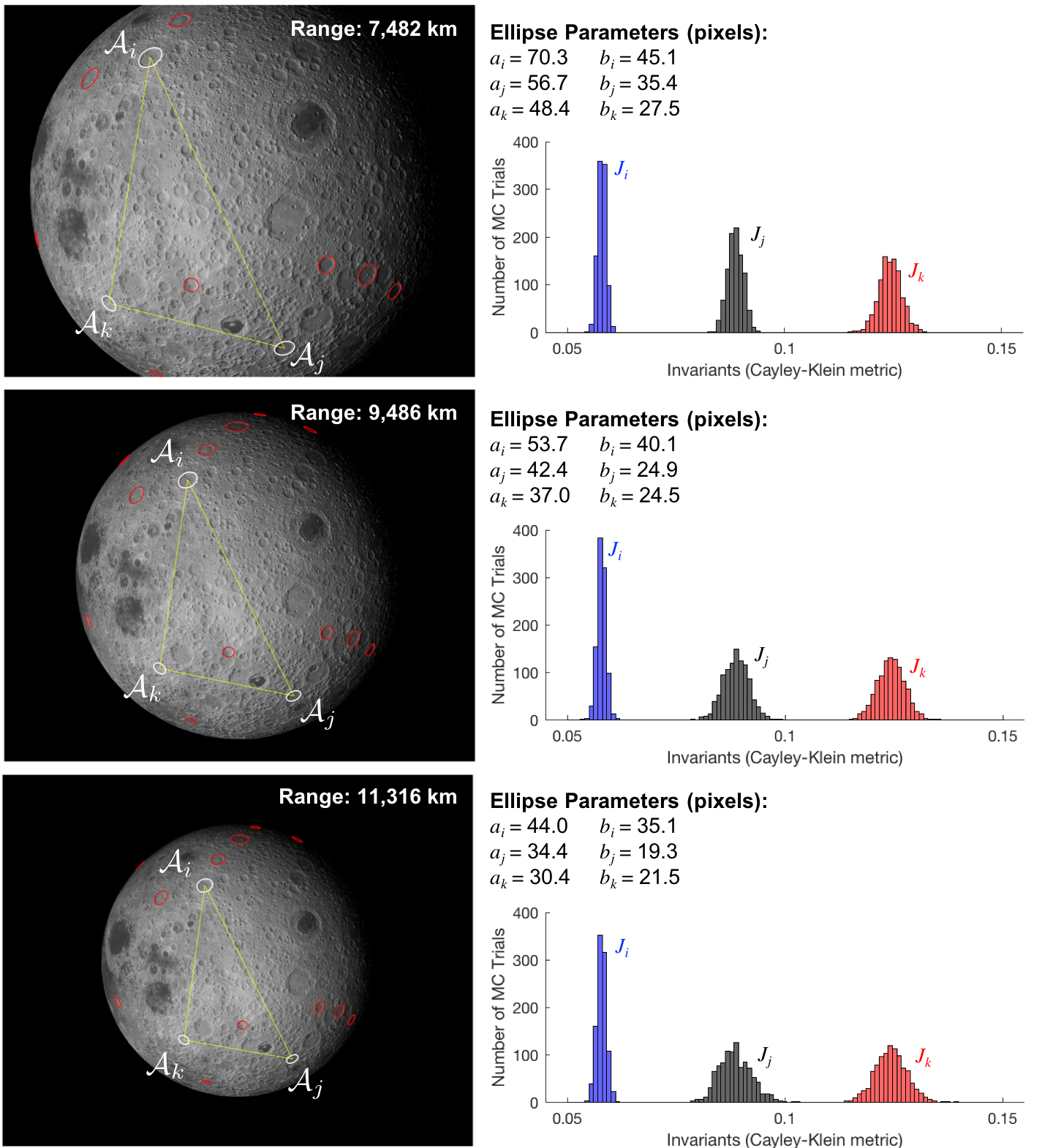}
	\caption{Example synthetic images along a reference trajectory as the spacecraft moves away from the Moon (top to bottom). The craters used to compute invariants are outlined in white and a sampling of other craters (which are not used) are outlined in red. Monte Carlo results (1,000 trials) assume a 1-$\sigma$ ellipse localization error of 1 pixel, leading to widening of invariant histograms as craters become smaller relative to the fixed error. Synthetic images of the Moon were produced using PANGU Planet Surface Simulation Software developed by the Space Technology Centre at the University of Dundee, Scotland \cite{Parkes:2009}.}
	\label{fig:InvariantsNRHO}
\end{figure}

\section{Conclusion}
The identification of craters in a digital image of the lunar surface is a critical capability, both for terrain relative navigation (TRN) and for the registration of scientific images. In this work, we provide the first comprehensive and mathematically rigorous approach for image-based crater identification using concepts from invariant theory. We first show that crater rims tend to be elliptical in shape, suggesting that they are well modeled as a conic (e.g., circle, ellipse) lying on the lunar surface. We then show that there are no projective invariants for conics lying on the surface of an arbitrarily shaped celestial body, but there are invariants for regularly shaped bodies like the Moon. Specifically, for a $d$-tuple of conics, we show that there are $3d-6$ algebraically independent projective invariants for conics lying on a nondegenerate quadric surface and there are $5d-8$ algebraically independent projective invariants for conics lying on a plane.

With the number of algebraically independent projective invariants known, we then develop a practical means of computing these invariants from just the apparent crater rims in a digital image (i.e., from the perspective projection of the 3D conics). These invariants may be used to form a pattern descriptor that is always the same, regardless of camera viewpoint. Thus, these descriptors may be precomputed for known crater patterns and stored in a searchable index. We find that descriptors for a triad of craters provide good matching results. 

When given an image with observed crater rims, a match hypothesis for the crater  rim pattern may be found by a simple nearest neighbor search of the index. With the resulting image-to-map crater correspondence hypothesis, we show how to estimate the location of the camera in the least-squares sense using the three crater rims directly (instead of just the crater center coordinates). Finally, to verify the crater match and pose hypothesis, we reproject the expected crater rim contours into the images and compare these with the measured crater rims. This is accomplished with a new distance metric that compares the entire ellipse fit (instead of just the crater center coordinates). By verifying with a distance metric considering the full crater rim contours, we are able to verify a pattern using fewer craters than methods using only center coordinates.

These techniques are demonstrated in a variety of numerical experiments---both on synthetic and real images. For a camera placed randomly around the Moon with no \emph{a priori} position knowledge, successful matches generally exceed 90\% of the test cases. The remaining cases return ``no match,'' either because there were fewer than three craters in a particular image or because measurement noise corrupted the crater rim fits too much for unambiguous recognition. In no case was an incorrect match returned, a sign that the new distance metric can indeed verify match hypotheses with only three craters (instead of the usual $\geq5$ craters). 

While this work represents a significant advance in methods for crater-based TRN, a great deal of work remains to be done. What follows are a few thoughts regarding the next steps:
\begin{enumerate*}
\item While the repurposing of scientific crater catalogs for TRN is convenient, these catalogs were built with different objectives and are generally suboptimal for TRN applications. Thus, we suggest that the lunar exploration community would benefit greatly from a purpose-built TRN crater catalog. 
\item We provide a complete set of independent projective invariants for crater patterns on the surface of the Moon in Section~\ref{Sec:ComputingInvariantsTop} (three invariants for a triad of conics on a quadric surface, seven invariants for a triad of conics on a plane). While there are no additional algebraically independent invariants, it is possible to construct other algebraically \emph{dependent} invariants as functions of the ones provided in this work (e.g., the $p^2$ invariants in Section~\ref{Sec:PatternDescriptor} are algebraic functions of the  projective invariants from Section~\ref{Sec:ComputingInvariantsTop}). There may exist other forms of these invariants that make lunar crater patterns more distinct or that have superior stability in the presence of measurement noise. Searching for such alternative formulations of the invariants from  Section~\ref{Sec:ComputingInvariantsTop} is an interesting problem.
\item This work develops a global index  of crater descriptors at three different scales using HEALPix. Most real missions likely don't need a global index---but, instead, just an index of craters along a reference trajectory. Although HEALPix may still be used to manage varying scales, there is a great deal of forward work in customizing the proof-of-concept outlined here to a mission-specific application. The best design for an actual in-flight index is not self evident and is the topic of ongoing work.
\item Not addressed in this work are crater detection algorithms (CDAs). Though many different CDAs exist, it is not clear which specific algorithms are best paired with the crater identification framework outlined here. Our crater identification algorithm is critically dependent on the localization of the best-fit ellipse to the crater rim, which is often not the metric used to design or evaluate CDAs. Thus, finding (or developing) an appropriate CDA is an obvious topic of follow-on work.
\end{enumerate*}

In conclusion, we find that crater-based TRN holds great promise for lost-in-space navigation near the Moon. We introduce a robust method for crater identification in this manuscript, though we note that additional work is required to achieve a flight-capable system. Crater-based TRN is a rich field for future study.

\section{Acknowledgements}
The authors thank Peter Sturm for insightful discussions related to the projection of conics that led to the technique presented in Section~\ref{Sec:ConicPose}.  The authors also thank Paul McKee and the NASA Johnson Space Center for assistance with production of the PANGU synthetic images. The following individuals provided valuable discussion that greatly improved the quality of this manuscript: Stuart Robbins, Joseph Mundy, Richard Hartley, Jeffrey Banks, James McCabe, Kristen Schell, Jacob Hinkle, and Christopher D'Souza. This work was partially supported by NASA award 80NSSC17M0027 (J.A.C.), NSF grant IIS 1837985 (H.D.), NSF grant DMS 2001460 (H.D.), and  NASA Lunar Data Analysis Program grant 80NSSC17K0343 (R.W.).


\bibliography{mybibfile}
\bibliographystyle{spmpsci}

\end{document}